\DeclareMathOperator*{\argmax}{arg\,max\,}
\newcommand{\elimparameter}{\rho}
\newcommand{\ucbparameter}{\gamma}
\newcommand{\maxgapelim}{{\tt MaxGapElim }}
\newcommand{\maxgapucb}{{\tt MaxGapUCB }}
\newcommand{\maxgapbandit}{MaxGap-bandit }
\newcommand{\maxgaptwoucb}{{\tt MaxGapTop2UCB }}
\newcommand{\gapucb}{{\tt U \Delta}}
\newcommand{\gaplcb}{{\tt L \Delta}}
\newcommand{\gaprucb}{{\tt U \Delta}^r}
\newcommand{\gaplucb}{{\tt U \Delta}^l}
\mathchardef\mhyphen="2D
\newtheorem{theorem}{Theorem}
\newtheorem{lemma}{Lemma}
\newtheorem{corollary}{Corollary}
\newtheorem{theorem*}{Theorem}
\newtheorem{remark}{Remark}
\crefname{lemma}{Lemma}{Lemmas}
\Crefname{lemma}{Lemma}{Lemmas}
\newcommand{\todos}[2][]{\todo[color=Chocolate1,size=\tiny,#1]{S: #2}} % Sumeet's comments
\title{MaxGap Bandit: Adaptive Algorithms for Approximate Ranking}
\author{%
 Sumeet Katariya \thanks{Authors contributed equally and are listed
 alphabetically.}\\
 University of Wisconsin\\
 Madison, WI 53706.\\
 \texttt{sumeetsk@gmail.com}
 \And
 Ardhendu Tripathy \textsuperscript{\textasteriskcentered}\\
 University of Wisconsin\\
 Madison, WI 53706.\\
 \texttt{astripathy@wisc.edu}
 \And
 Robert Nowak\\
 University of Wisconsin\\
 Madison, WI 53706.\\
 \texttt{rdnowak@wisc.edu}
  % examples of more authors
  % \And
  % Coauthor \\
  % Affiliation \\
  % Address \\
  % \texttt{email} \\
  % \AND
  % Coauthor \\
  % Affiliation \\
  % Address \\
  % \texttt{email} \\
  % \And
  % Coauthor \\
  % Affiliation \\
  % Address \\
  % \texttt{email} \\
  % \And
  % Coauthor \\
  % Affiliation \\
  % Address \\
  % \texttt{email} \\
}
\begin{document}

\maketitle
\vspace{-5pt}
\begin{abstract}%
This paper studies the problem of adaptively sampling from $K$ distributions
(arms) in order to identify the largest gap between any two adjacent means. We
call this the MaxGap-bandit problem. This problem arises naturally in
approximate ranking, noisy sorting, outlier detection, and top-arm
identification in bandits.  The key novelty of the MaxGap bandit problem is that
it aims to adaptively determine the natural partitioning of the distributions
into a subset with larger means and a subset with smaller means, where the split
is determined by the largest gap rather than a pre-specified rank or threshold.
Estimating an arm’s gap requires sampling its neighboring arms in addition to
itself, and this dependence results in a novel hardness parameter that
characterizes the sample complexity of the problem. We propose elimination and
UCB-style algorithms and show that they are minimax optimal. Our experiments
show that the UCB-style algorithms require $6 \mhyphen 8$x fewer samples than non-adaptive sampling to achieve the same error.
\end{abstract}

\vspace{-5pt}
\section{Introduction}
\label{sec:introduction}
Consider an algorithm that can draw i.i.d.\ samples from $K$ unknown
distributions.  The goal is to partially rank the distributions according to
their (unknown) means.  This model encompasses many problems including best-arms
identification  in multi-armed bandits, noisy sorting and ranking, and outlier
detection.  Partial ranking is often preferred to complete ranking because correctly ordering distributions
with nearly equal means is an expensive task  (in terms of number of required
samples).  Moreover, in many applications it is arguably unnecessary to resolve
the order of such close distributions.  This observation motivates algorithms that aim
to recover a partial ordering into groups/clusters of distributions with similar
means.  This entails identifying large ``gaps'' in the ordered sequence of
means.  The focus of this paper is the fundamental problem of finding the {\em
largest gap} by sampling adaptively. Identification of the largest gap separates the distributions into two groups, and thus recursive application would allow one to identify any number of groupings in a partial order.   

\begin{figure}
    \centering
    \includegraphics[width=\textwidth]{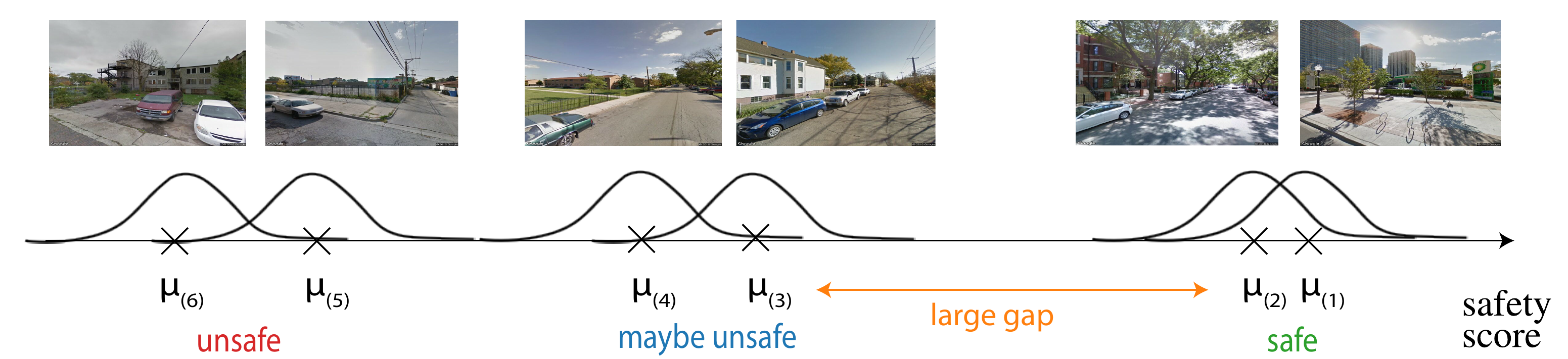}
    \caption{Six representative images from Chicago streetview dataset and their safety (Borda) scores.}
    \label{fig:street-made-up}
\end{figure}
As illustration, consider a subset of images from the Chicago streetview dataset \citep{coarserankingdataset} shown in \cref{fig:street-made-up}. In
this study, people were asked to judge how safe each scene looks
\citep{katariya2018adaptive}, and a larger mean indicates a safer looking scene. 
While each person has a different sense of how safe an image looks, when
aggregated there are clear trends in the safety scores (denoted by $\mu_{(i)}$) of the images. 
%As people often have conflicting opinions regarding safety, it is hard to obtain a total ranking of the images using a noisy sorting algorithm. Nevertheless, the aggregated safety scores indicate a clear trend that can be used for an \emph{approximate} ranking or clustering. 
% As each person has a different sense of how safe an image looks, traditional sorting algorithms cannot be used to identify safer locations. But when the data are aggregated, there are clear trends in the safety scores of the images. 
\cref{fig:street-made-up} schematically shows the distribution of scores given
by people as a bell curve below each image. Assuming the sample means are close
to their true means, one can nominally classify them as `safe', `maybe unsafe' and
`unsafe' as indicated in \cref{fig:street-made-up}. Here we have implicitly used
the large gaps $\mu_{(2)}-\mu_{(3)}$ and $\mu_{(4)}-\mu_{(5)}$ to mark the
boundaries. Note that finding the safest image (best-arm identification) is hard as we need a
lot of human responses to decide the larger mean between the two rightmost
distributions; it is also arguably unnecessary. A common way to address this
problem is to specify a tolerance $\epsilon$
\citep{even2006action}, and stop sampling if the means are less
than $\epsilon$ apart; however determining this can require $\Omega(1/\epsilon^2)$ samples. Distinguishing the
top $2$ distributions from the rest is easy and can be efficiently done using
top-$m$ arm identification \citep{kalyanakrishnan2010efficient}, however this
requires the experimenter to prescribe the location $m=2$ where a large gap exists
which is unknown. {\em Automatically identifying natural splits in the set of
distributions is the aim of the new theory and algorithms we propose.
We call this problem of adaptive sampling to find the largest gap the
\maxgapbandit problem.} 

% In
% this study, people were asked to judge how safe each scene looks
% \citep{katariya2018adaptive}, and a larger mean indicates a safer looking scene.
% If these were the true means of the underlying distributions,  we see naturally
% occurring large gaps around the second and third largest means.  So, if one were
% interested in finding a set of ``safest'' scenes, the top 2 or 3 would be a
% natural set size, and furthermore this task would require fewer samples than identifying the top
% scene (the best-arm problem). 
%Automatically identifying such natural splits in the set of
%distributions is precisely the aim of the new theory and algorithms we propose.
%%This can be used to find an approximate ranking from noisy data without committing to expensive tasks such as ordering all the items. 
%We call this problem of adaptive sampling to find the largest gap the
%{\em \maxgapbandit} problem. 
% \begin{figure}[tbh]
%     \centering
%     \includegraphics[width=.91\textwidth]{./}\vspace{-.1in}
%     \caption{Means (Borda scores) of $90$ streetview images indicating human-perceived safeness of each scene. Note the naturally large gap between second and third largest means.}
% \label{fig:streets}
% \end{figure}
\subsection{Notation and Problem Statement}
\label{sec:notation}
%To summarize the main results in the paper, we first define some basic concepts and notation.  
We will use multi-armed bandit terminology and notation throughout the paper.
The $K$ distributions will be called {\em arms} and drawing a
sample from a distribution will be refered to as {\em sampling the arm}. 
Let $\mu_i \in \mathbb{R}$ denote the mean of the $i$-th arm, $i \in
\{1,2,\ldots, K\} =: [K]$. We add a parenthesis around the subscript $j$ to
indicate the $j$-th largest mean, i.e., $\mu_{(K)} \leq \mu_{(K-1)} \leq \cdots
\leq \mu_{(1)}$. For the $i$-th arm, we define its gap $\Delta_i$ to be the maximum of
its left and right gaps, i.e.,
\begin{equation}
\Delta_i \ = \ \max\{\mu_{(\ell)}-\mu_{(\ell+1)} \, , \, \mu_{(\ell-1)}-\mu_{(\ell)}\} \quad
\text{ where }\mu_i = \mu_{(\ell)}.
\label{eq:gapi}
\end{equation}
We define $\mu_{(0)} = -\infty$ and $\mu_{(K+1)} = \infty$ to
account for the fact that extreme arms have only one gap. 
%Let $\mu_K \leq \mu_{K-1}\leq \cdots \leq \mu_1$ denote the means of $K$ distributions.  The mean values are unknown as is the ordering of the distributions according to the means. 
%For each distribution we define its mean gap to be
%\begin{eqnarray}
%\Delta_i \ = \ \max\{\mu_i-\mu_{i+1} \, , \, \mu_{i-1}-\mu_i\} . 
%\label{eq:gapi}
%\end{eqnarray}
The goal of the \maxgapbandit problem is to (adaptively) sample the
arms and return two clusters
\begin{equation*}
C_1 = \{(1), (2), \ldots, (m)\} \quad \text{ and }\quad C_2 = \{(m+1), \ldots,
(K)\},
\end{equation*}
where $m$ is the rank of the arm with the largest gap between {\em adjacent}
means, i.e.,
\begin{equation}\label{eq:location of largest gap}
m \ = \ \argmax_{j\in [K-1]} \mu_{(j)}-\mu_{(j+1)}.
\end{equation}
The mean values are unknown as is the ordering of the arms according to their means. 
A solution to the \maxgapbandit problem is
an algorithm which given a probability of error $\delta>0$,
samples the arms and upon stopping partitions $[K]$ into two clusters
$\widehat{C}_1$ and $\widehat{C}_2$ such that
\begin{align}\mathbb{P}(\widehat{C}_1 \neq C_1) \le \delta. 
\label{eq:clustering guarantee}
\end{align}
This setting is known as the fixed-confidence setting
\citep{garivier2016optimal}, and the goal is to achieve the probably
correct  clustering using as few samples as possible. 
In %all problem instances considered in 
the sequel, we assume that $m$ is uniquely defined and let $\Delta_{\max} = \Delta_{i^\ast}$ where $\mu_{i^\ast} = \mu_{(m)}$. 

\subsection{Comparison to a Naive Algorithm: Sort then search for MaxGap}
\label{sec:simple algorithm}
The \maxgapbandit problem is not equivalent to best-arm identification
on $\binom{K}{2}$ gaps since the \maxgapbandit problem requires identifying the largest
gap between {\em adjacent} arm means (best arm identification on $\binom{K}{2}$ gaps would always identify $\mu_{(1)}-\mu_{(K)}$ as the
largest gap). This suggests a naive two-step algorithm: we first sample the arms enough number of times so as %according
% to their means 
to identify all pairs of adjacent arms (i.e., we sort the arms according to
their means), and then run a best-arm identification
bandit algorithm \citep{jamieson2014lil}
on the $(K-1)$ gaps between adjacent arms to identify the largest gap (an unbiased
sample of the gap can be obtained by taking the difference
of the samples of the two arms forming the gap).

\begin{wrapfigure}[2]{l}{.45\textwidth}
    \centering
    \vspace{-10pt}
    \includegraphics[width=\linewidth]{./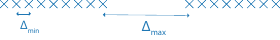}
    \vspace{-17pt}
    \caption{Configuration with one large gap}
    \label{fig:toy_problem}
\end{wrapfigure}
We analyze the sample complexity of this naive algorithm in \cref{sec:appendix
simple algorithm}
, and discuss the results here for an example
configuration. Consider the arrangement of means shown in \cref{fig:toy_problem} where there is one large gap $\Delta_{\max}$ and all the
other gaps are equal to $\Delta_{\min} \ll \Delta_{\max}$. The naive algorithm
has a sample complexity $\Omega(K/\Delta_{\min}^2)$ (the first sorting step
requires these many samples) which can be very large. 
%More generally, for an arm $i$, define $\Delta_{i,j}:=\mu_j-\mu_i \,\forall\, j\in [K]$, and
%let $\Delta_i^r = \min_{j:\Delta_{i,j}>0} \Delta_{i,j}$ denote $i$'s right gap
%($\Delta_i^l$ defined analogously). 
%The second step of ruling out the right gap of $i$ from being the largest 
Is this sorting of the arm means necessary? For instance, we do not 
need to sort $K$ real numbers in order to cluster them according to the largest gap.\footnote{First find the smallest and largest numbers, say $a$ and $b$ respectively. 
Divide the interval $[a,b]$
into $K+1$ equal-width bins and map each number to its corresponding bin, while
maintaining the smallest and largest number in each bin. Since at least one bin
is empty by the pigeonhole principle, the largest gap is between two numbers
belonging to different bins. Calculate the gaps between the bins and report the
largest as the answer.} The algorithms we propose in this paper solve the
\maxgapbandit problem without necessarily sorting the arm means. For the
configuration in \cref{fig:toy_problem} they require 
$\tilde{O}(K/\Delta_{\max}^2)$ samples, giving a saving of approximately
$(\Delta_{\max}/\Delta_{\min})^2$ samples. 
%compared to the naive algorithm. 

The analysis of our algorithms suggests a novel hardness parameter for the
\maxgapbandit problem that we discuss next. We let $\Delta_{i,j}
\mathrel{\mathop:}= \mu_j-\mu_i$ for all $i,j \in [K]$. We show in
\cref{sec:analysis} that the number of samples taken from distribution $i$ due to its right gap is inversely proportional to the square of %a novel problem-hardness parameter
\begin{equation}\label{eq:gamma-R}
\ucbparameter_i^r  \ := \ \max_{j:\Delta_{i,j}>0} \min\big\{\Delta_{i,j} \, , \, \Delta_{\rm
max}-\Delta_{i,j} \big\} \ .
\end{equation}
For the left gap of $i$ we define $\gamma_i^l$ analogously. The total number of samples drawn from 
distribution $i$ is inversely proportional to the square of
$\gamma_i:=\min\{\gamma_i^r, \gamma_i^l\}$. 
The intuition for \cref{eq:gamma-R} is that distribution $i$ can be eliminated
quickly if there is another distribution $j$ that has a moderately large gap
from $i$ (so that this gap can be quickly detected), but not too large (so that the
gap is easy to distinguish from $\Delta_{\rm max}$), and \eqref{eq:gamma-R} chooses
the best $j$. We discuss \eqref{eq:gamma-R} in detail 
in \cref{sec:analysis}, where we show that our algorithms use $\widetilde
O\big(\sum_{i\in [K]/\{(m),(m+1)\} } \gamma_i^{-2} \log( K/\delta \gamma_i)\big)$
samples to find the largest gap with probability at least $1-\delta$. 
This sample complexity is minimax optimal. 

\subsection{Summary of Main Results and Paper Organization}
In addition to motivating and formulating the \maxgapbandit problem, we make the following contributions. First, we design elimination and UCB-style
algorithms as solutions to the \maxgapbandit problem that do not require
sorting the arm means (\cref{sec:algorithm}). These algorithms require computing upper bounds on the gaps
$\Delta_i$, which can be formulated as a mixed integer optimization problem. We 
design a computationally efficient dynamic programming subroutine to solve this
optimization problem and this is our second contribution (\cref{sec:prelim gap upper bound}).
Third, we analyze the sample complexity of our proposed algorithms, and discover
a novel problem-hardness parameter (\cref{sec:analysis}). This parameter arises because of the arm interactions in the \maxgapbandit
problem where, in order to reduce uncertainty in the 
%``reward'' of an arm
%(its gap with respect to adjacent arms), 
value of an arm's gap, 
we not only need to sample the said arm
but also its neighboring arms. Fourth, we show that this sample complexity is
minimax optimal (\cref{sec:lowerbound}). Finally, we evaluate the empirical
performance of our algorithms on simulated and real datasets and observe that
they require $6\mhyphen 8$x fewer samples than non-adaptive sampling to achieve
the same error (\cref{sec:experiments}).

\section{Related Work}
\label{sec:related work}
One line of related research is best-arm identification in multi-armed bandits.
A typical goal in this setting is to identify the top-$m$ arms with largest
means, where $m$ is a {\em prespecified} number
\citep{kalyanakrishnan2010efficient,kalyanakrishnan2012pac,agarwal2017learning,
BubeckWV_top-m,gabillon2012best,chen2017nearly,jun2016top, even2006action,
mannor2004sample}. As explained in \cref{sec:introduction}, our motivation
behind formulating the \maxgapbandit
problem is to have an adaptive algorithm which finds the
``natural'' set of top arms as delineated by the largest gap in consecutive
mean values. Our work can also be used to automatically detect ``outlier'' arms \citep{zhuang2017identifying}.

The \maxgapbandit problem is different from the standard multi-armed bandit
because of the local dependence of an arm's gap on other arms. Other
best-arm settings where an arm's reward can inform the quality of other arms
include linear bandits \citep{soare2014best} and
combinatorial bandits
\citep{chen2014combinatorial,huang2018combinatorial}. In these problems, the
decision space is known to the learner, i.e., the vectors corresponding
to the arms in linear bandits and the subsets of arms over which the objective
function is to be optimized in combinatorial bandits is known to the learner. However in our problem, we do not know the sorted order of the arm means, i.e., the set of all valid gaps is unknown \textit{a priori}. %Knowledge of the sorted order can significantly change the hardness of the problem (\texit{c.f.}\ \cref{sec:lowerbound}). 
Our problem does not reduce to these settings.
%To find the top-$m$ arms with large probability  $O\big(\Delta_m^{-2} K\log m\big)$ samples suffice, where $\Delta_m$ is the gap between the $m$-th and $(m+1)$-st largest means.  This bound is optimal in worst-case situations \citep{kalyanakrishnan2010efficient,even2006action,mannor2004sample}.
% The sample complexity of this problem is known to be $O\big(\Delta_m^{-2} K\log m\big)$, where $\Delta_m$ is the gap between the $m$-th and $(m+1)$-st largest means  \citep{kalyanakrishnan2010efficient,even2006action,mannor2004sample}. 
%Consider the example in Figure~\ref{fig:streets}.  In this case, $\Delta_1 \ll \Delta_2$, so finding the largest mean is ``hard'', but finding the top two is relatively easy. This is our motivation for methods that automatically  detect large gaps, rather than prescribing the value of $m$.  The largest gap problem can be viewed as an adaptive version best-arm problems in which we aim to find the ``natural'' set  of top arms, as delineated by the largest gap in consecutive mean values. 
%Automatically detecting ``outlier'' arms \citep{zhuang2017identifying} is another %obvious 
%application of our work.

Another related problem is noisy sorting and ranking.   Here the typical goal is to sort a list using noisy  pairwise comparisons. Our framework encompasses noisy  ranking based on Borda scores \citep{agarwal2017learning}.  The Borda score of an item is the probability that it is ranked higher in a pairwise comparison with another item chosen uniformly at random.  In our setting, the Borda score is the mean of each distribution.  Much of the theoretical computer science  literature on this topic assumes a bounded noise model for comparisons (i.e., comparisons are probably correct with a  positive margin) \citep{feige1994computing,davidson2014top,braverman2016parallel,mao2018minimax}. This is unrealistic in many real-world applications since near equals or outright ties are not uncommon.  The largest gap problem we study  can be used to (partially) order items into two natural groups, one with large means and one with small means.  
%The key is that the number of items in the two groups is not prescribed, but rather is adaptively determined by finding the largest gap between consecutively ordered means.  
Previous related work considered a similar problem with prescribed (non-adaptive) quantile groupings \citep{katariya2018adaptive}.

\vspace{-5pt}
\section{MaxGap Bandit Algorithms}
\label{sec:algorithm}
\vspace{-5pt}
%We propose an elimination algorithm in \cref{sec:elimination algorithm} and a
%UCB algorithm in \cref{sec:ucb algorithm}.
We propose elimination \citep{even2006action} and UCB \citep{jamieson2014lil}
style algorithms for the \maxgapbandit problem. These algorithms operate on the
arm \emph{gaps} instead of the arm \emph{means}. 
The subroutine to construct confidence intervals on the gaps
(denoted by $\gapucb_a(t)$) using 
confidence intervals on the arm means (denoted by $[l_a(t), r_a(t)]$) is described in \cref{alg:gapucb} in
\cref{sec:prelim gap upper bound}, and this subroutine 
is used by all three algorithms described in this section. 
\subsection{Elimination Algorithm: $\maxgapelim$}
\label{sec:elimination algorithm}
%We call the elimination based algorithm $\maxgapelim$, and explain its working
%in \cref{alg:maxgapelim}. 
%$\maxgapelim$ (\cref{alg:maxgapelim}) maintains confidence intervals for the mean of every arm $a$ and uses these to construct upper bounds $\gapucb_a$ on its gaps. 
At each time step, $\maxgapelim$ (\cref{alg:maxgapelim}) samples all arms in an
active set consisting of arms $a$ whose
gap upper bound $\gapucb_a$ is larger than the global lower bound $\gaplcb$ on the maximum gap, and stops when there are only two arms in the active set.
%$\maxgapelim$ maintains confidence intervals for the mean of every arm $a$ and uses these to construct upper
%bounds $\gapucb_a$ on its gaps, as well as a global lower bound $\gaplcb$ on the maximum
%gap using the procedure
%described in \cref{sec:prelim gap upper bound}. $\maxgapelim$ stops sampling
%$a$ as soon as $\gapucb_a < \gaplcb$. 
%The pseudocode is given in \cref{alg:maxgapelim}.
\begin{algorithm}[htb]
    \caption{$\maxgapelim$}
    \label{alg:maxgapelim}
    \begin{algorithmic}[1]
        \STATE Initialize active set $A = [K]$
        \FOR[// rounds]{$t=1,2,\dots$}
            \STATE $\forall \,a\in A$, sample arm $a$, compute $[l_a(t)$,
            $r_a(t)]$ using \eqref{eq:mean upper lower bound}. \hfill
            \textit{//arm confidence intervals}
            \STATE $\forall \,a\in A$, compute $\gapucb_a(t)$ using
            \cref{alg:gapucb}.\hfill \textit{// upper bound on arm max gap}
            \STATE Compute $\gaplcb(t)$ using \eqref{eq:max gap lower bound}. \hfill \textit{// lower bound on max gap}
            \STATE $\forall \,a\in A$, if $\gapucb_a(t) \le \gaplcb(t), A = A
            \setminus a$. \hfill \textit{// Elimination}
            \STATE If $|A|=2$, stop. Return clusters using max gap in
            the empirical means.
            \hfill \textit{// Stopping condition} 
        \ENDFOR
    \end{algorithmic}
\end{algorithm}
\vspace{-10pt}
\subsection{UCB algorithms: $\maxgapucb$ and $\maxgaptwoucb$}
\label{sec:ucb algorithm}
\begin{algorithm}[tbh]
    \caption{$\maxgapucb$}
    \label{alg:maxgapucb}
    \begin{algorithmic}[1]
        \STATE Initialize $\mathcal{U} = [K]$.
        \FOR{$t=1,2,\dots$}
        \STATE $\forall a\in \mathcal{U}$, sample $a$ and update $[l_a(t), r_a(t)]$ using \eqref{eq:mean upper lower bound}. 
        \STATE $\forall a\in [K]$, compute $\gapucb_a(t)$ using
        \cref{alg:gapucb}.
        \STATE Let $M_1(t) = \max_{j \in [K]} \gapucb_j(t)$. Set $\mathcal{U} =
        \{a: \gapucb_a(t) = M_1(t)\}$. \hfill  \textit{// highest gap-UCB arms}
        \STATE If $\exists\, i,j$ such that $T_i(t)+T_j(t) \ge c \sum_{a \notin
        \{i,j\}} T_a(t)$, stop. \hfill \textit{// stopping condition}
        \ENDFOR
    \end{algorithmic}
\end{algorithm}
\begin{algorithm}[h]
    \caption{$\maxgaptwoucb$}
    \label{alg:maxgaptwoucb}
    \begin{algorithmic}[1]
        \STATE Initialize $\mathcal{U}_1 \cup \mathcal{U}_2 = [K]$.
        \FOR{$t=1,2,\dots$}
        \STATE $\forall a\in \mathcal{U}_1 \cup \mathcal{U}_2$, sample $a$ and update $[l_a(t), r_a(t)]$ using \eqref{eq:mean upper lower bound}. 
        \STATE $\forall a\in [K]$, compute $\gapucb_a(t)$ using
        \cref{alg:gapucb}. 
        \STATE Let $M_1(t) = \max_{j \in [K]} \gapucb_j(t)$. Set $\mathcal{U}_1 =
        \{a: \gapucb_a(t) = M_1(t)\}$. \hfill \textit{// highest gap-UCB arms}
        \STATE Let $M_2(t) = \max_{j \in [K]\setminus \mathcal{U}_1} \gapucb_j(t)$.
        Set $\mathcal{U}_2 = \{a: \gapucb_a(t) = M_2(t)\}$. \hfill \textit{// $2$nd highest gap-UCB}
        \STATE Compute $\gaplcb(t)$ using \eqref{eq:max gap lower bound}. If $M_2(t) < \gaplcb(t)$, stop.
        \ENDFOR
    \end{algorithmic}
\end{algorithm}
$\maxgapucb$ (\cref{alg:maxgapucb}) is motivated from the principle of ``optimism in the face
of uncertainty''. %and its pseudocode is stated in \cref{alg:maxgapucb}. 
It samples {\em
  all} arms with the highest gap upper bound. Note that there are at
least two arms with the highest gap upper bound because any gap is
shared by at least two arms (one on the right and one on the left). 
%It
%is important that $\maxgapucb$ samples all arms that achieve the
%maximum gap upper bound. 
The stopping condition is akin to the stopping
condition in \citet{jamieson2014lil}. 

Alternatively, we can use an LUCB
\citep{kalyanakrishnan2012pac}-type algorithm that samples arms which have
the two highest gap upper bounds, and stops when the second-largest
gap upper bound is smaller than the global lower bound $\gaplcb(t)$ . We refer to this %latter
algorithm as $\maxgaptwoucb$ %, and its pseudocode is stated in
(\cref{alg:maxgaptwoucb}).

%\begin{remark}
%    \label{remark:confidence intervals}
%    A simple choice for $\beta_\delta$ is $\beta_\delta(s) = 2 \log (cKs^2/\delta)$ \citep{even2006action} for which \eqref{eq:valid confidence interval} holds by Hoeffding's inequality and union bounds, and this is the function we use in our analysis. We refer the reader to \citep{garivier2013informational,jamieson2014lil} for tighter confidence intervals.
%\end{remark}

\vspace{-10pt}
\section{Confidence Bounds for Gaps}
\label{sec:prelim gap upper bound}
In this section we explain how to construct confidence bounds for the arm gaps
(denoted by $\gapucb_a$ and $\gaplcb$) using confidence bounds for the arm means (denoted
by $[l_a,r_a]$). These bounds are key ingredients 
for the algorithms described in
\cref{sec:algorithm}. 

\begin{algorithm}[tb]
    \caption{Procedure to find $\gapucb_a(t)$}
    \label{alg:gapucb}
    \begin{algorithmic}[1]
        %\STATE Initialize $\gaprucb_a(t) \leftarrow 0$.
        \STATE Set $P_a^r = \{i: l_i(t) \in [l_a(t), r_a(t)]\}$.
        \STATE $\gaprucb_a(t) = \max\limits_{i \in P_a^r}
        \,\left\{G_a^r(l_i(t),t)\right\}$, where
        $G_a^r(x,t)$ is given by \eqref{eq:G^R}. \hfill \textit{// eqn.
    \eqref{eq:right_gap_UB}}
        %\STATE Initialize $\gaplucb_a(t) \leftarrow 0$.
        \STATE Set $P_a^l = \left\{i: r_i(t) \in [l_a(t), r_a(t)] \right\}$.
        \STATE $\gaplucb_a(t) = \max\limits_{i \in P_a^l}\,
        \left\{G_a^l(r_j(t),t)\right\}$, where $G_a^l(x,t)$ is given by
        \eqref{eq:G^L}. \hfill \textit{// eqn. \eqref{eq:left_gap_UB}}
        %\FORALL{$i \in [K]$}
        %\IF[// Compute $G_a^r(l_i(t), t)$ defined in \eqref{eq:G^R}]{$l_i(t) \in [l_a(t), r_a(t)]$}
        %%\STATE $\mathsf{tempR} \leftarrow 0$
        %\IF[// First branch of \eqref{eq:G^R} is active]{$\{j: l_j(t) > l_i(t)\} \neq \emptyset$}
        %\STATE $\mathsf{tempR} \leftarrow \min_{\{j: l_j(t) > l_i(t)\}} r_j(t) - l_i(t)$
        %\ELSE[// Second branch of \eqref{eq:G^R} is active]
        %\STATE $\mathsf{tempR} \leftarrow \max_{j \neq i}r_j(t) - l_i(t)$
        %\ENDIF
        %\STATE $\gaprucb_a(t) \leftarrow \max\{\gaprucb_a(t), \mathsf{tempR}\}$
        %\ENDIF
        %\IF[// Compute $G_a^l(r_i(t), t)$ defined in \eqref{eq:G^L}]{$r_i(t) \in [l_a(t), r_a(t)]$}
        %%\STATE $\mathsf{tempL} \leftarrow 0$
        %\IF[// First branch of \eqref{eq:G^L} is active]{$\{j: r_j(t) < r_i(t)\} \neq \emptyset$}
        %\STATE $\mathsf{tempL} \leftarrow r_i(t)- \max_{\{j: r_j(t) < r_i(t)\}} l_j(t)$
        %\ELSE[// Second branch of \eqref{eq:G^L} is active]
        %\STATE $\mathsf{tempL} \leftarrow r_i(t) - \min_{j \neq i}l_j(t)$
        %\ENDIF
        %\STATE $\gaplucb_a(t) \leftarrow \max\{\gaplucb_a(t), \mathsf{tempL}\}$
        %\ENDIF
        %\ENDFOR
        \RETURN $\gapucb_a(t) \leftarrow \max\{\gaprucb_a(t), \gaplucb_a(t)\}$
    \end{algorithmic}
\end{algorithm}
Given i.i.d.\ samples from arm $a$, an empirical mean
$\widehat\mu_a$ and confidence interval on the arm mean can be constructed using
standard methods.  Let $T_a(t)$ denote the number of samples from arm
$a$ after $t$ time steps of the algorithm.  Throughout our analysis
and experimentation we use confidence intervals on the mean of the form 
\begin{align}
    l_a(t) = \hat{\mu}_a(t) - c_{T_a(t)} \text{ and }
    r_a(t) = \hat{\mu}_a(t) + c_{T_a(t)}, \text{ where }
    c_s = \sqrt{\tfrac{\log(4Ks^2/\delta)}{s}}.
    \label{eq:mean upper lower bound}
\end{align}
The confidence intervals are chosen so that \citep{jamieson2014best}
\begin{equation}
    \mathbb{P}(\forall\,t \in \mathbb{N}, \forall\,a \in [K], \mu_a \in [l_a(t),r_a(t)] ) \ge 1-\delta.
    \label{eq:valid confidence interval}
\end{equation}
%\begin{eqnarray}
%\widehat\mu_a \pm
%\sqrt{\frac{\log(4 K T_a(t)^2/\delta)}{T_a(t)}}
%\label{eq:CI}
%\end{eqnarray}
%are correct with probability at least $1-\delta$ for all $K$ arms and
%all times $t$.  
%After $t$ time steps, an algorithm has collected a
%total of $t$ samples (one sample from a single arm at each
%time). 

Conceptually, the confidence intervals on the arm means can be used to
construct upper
confidence bounds on the mean gaps $\{\Delta_i\}_{i\in [K]}$ in the
following manner.
%Consider all possible placements of the arm means subject to the
%confidence interval constraints in \eqref{eq:mean upper lower bound} and compute
%the mean gaps. 
Consider all possible configurations of the arm means that satisfy the confidence interval constraints in \eqref{eq:mean upper lower bound}. 
Each configuration fixes the gaps associated with any arm $a \in [K]$. 
Then the maximum gap value over all configurations is the upper confidence bound on arm $a$'s gap; we denote it as $\gapucb_a$. 
%For any
%arm $a\in [K]$, the maximum of its gap value over all possible
%placements is the upper confidence bound on its gap, denoted by
%$\gapucb_a$. 
The above procedure can be formulated as a mixed integer linear program (see
\cref{sec:appendix mixed integer program}).
In the algorithms in \cref{sec:algorithm}, this optimization problem needs to be
solved at every time $t$ and for every arm
$a \in [K]$ before querying a new sample, which can be practically infeasible. In
\cref{alg:gapucb}, we give an efficient $O(K^2)$ time dynamic programming
algorithm to compute $\gapucb_a$. We next explain the main ideas used in this
algorithm, and refer the reader to \cref{sec:appendix gapucb algorithm proof}
for the proofs.
%This bound can be efficiently computed in $O(K^2)$ time
%using \cref{alg:gapucb} in \cref{sec:gapbounds} of
%the Supplementary Material. 

Each arm $a$ has a right and left gap, $\Delta_a^r:=\mu_{(\ell-1)}-\mu_{(\ell)}$
and $\Delta_a^l :=\mu_{(\ell)}-\mu_{(\ell+1)}$, where $\ell$ is the rank of $a$, i.e., $\mu_a =
\mu_{(\ell)}$. We
construct separate upper bounds $\gaprucb_a(t)$ and $\gaplucb_a(t)$ for
these gaps and then define $\gapucb_a(t)= \max\{\gaprucb_a(t), \gaplucb_a(t)\}$.
%\begin{equation}
%    \gapucb_a(t)= \max\{\gaprucb_a(t), \gaplucb_a(t)\}.
%    \label{eq:max_gap_UB}
%\end{equation}
Here we provide an intuitive description for how the bounds are computed,
focusing on $\gaprucb_a(t)$ as an example. 

\begin{figure}[tb]
    \centering
    \includegraphics[width=.4\textwidth]{./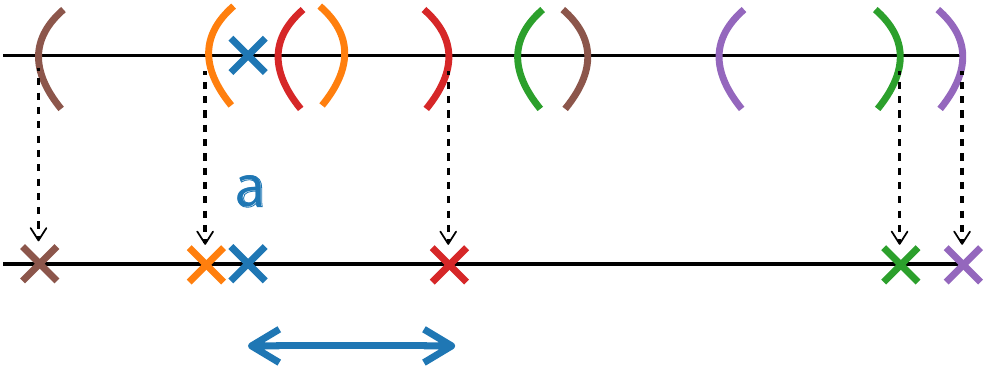}\hfill
    \includegraphics[width=.4\textwidth]{./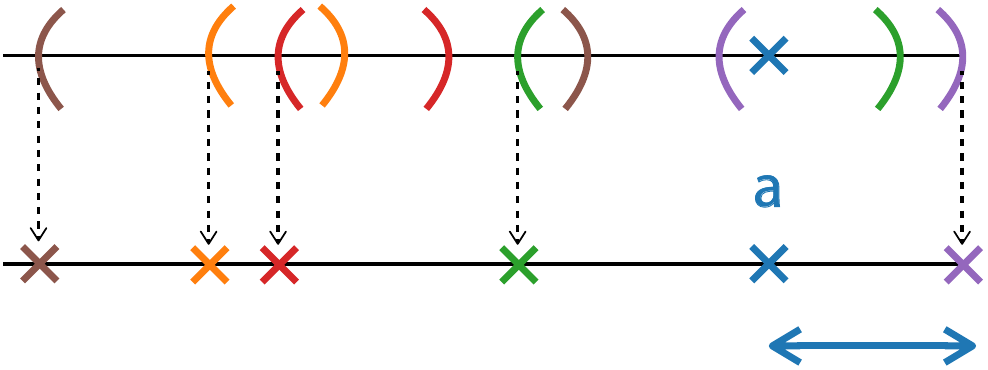}\\
    (a) \hspace{270pt} (b)
    \caption{Computing maximum right gap of blue arm when its true
      mean is known (at position indicated by blue x), while
        the other means are known only to lie within their confidence intervals. (a) If
        there exist arms that cannot go to the left of blue (red, green, purple), the largest right gap for blue
is obtained by placing all arms that can go to the left of blue at their
left boundaries and the remaining arms at their rightmost
positions. (b) If all arms can go to the left of blue, the largest right
gap for blue is obtained by placing the arm with the largest right confidence
bound (purple) at its
right boundary and all other arms at their left boundaries.}
\vspace{-10pt}
\label{fig:rightgapucbfixedpos}
\end{figure}

To start, suppose the true
mean of arm $a$ is known exactly, while the means of other arms are
only known to lie within their confidence intervals. If there exist arms
that cannot go to the left of arm $a$, one can see that the
largest right gap for $a$ is obtained by placing all arms that can go
to the left of $a$ at their leftmost positions, and all remaining arms
at their rightmost positions, as shown in
\cref{fig:rightgapucbfixedpos}(a). If however all arms can go
to the left of arm $a$, the configuration that gives the largest right
gap for $a$ is obtained by placing the arm with the 
%furthest rightmost
%position 
largest upper bound 
at its right boundary, and all other arms at their left
boundaries, as illustrated in \cref{fig:rightgapucbfixedpos}(b). We
define a function $G^r_a(x,t)$ that takes as input a known position
$x$ for the mean of arm $a$ and the confidence intervals of all other
arms at time $t$, and returns the maximum right gap for arm $a$ using the above
idea as follows.
\begin{equation}\label{eq:G^R}
G_a^r(x, t) =
\begin{cases}
\min_{j: l_j(t) > x} r_j(t) - x &\text{if } \{j: l_j(t) > x\} \neq \emptyset,\\
\max_{j \neq a} r_j(t) - x &\text{otherwise}.
\end{cases}
\end{equation}
However, the true mean of arm $a$ is not known exactly but only that
it lies within its confidence interval. The insight that helps here is
  that $G_a^r(x, t)$ must achieve its maximum when $x$ is at 
  one of the finite locations in $\{l_j(t): l_a(t) \le l_j(t) \le r_a(t)\}$. We define
  $P_a^r:=\{j:l_a(t) \le l_j(t) \le r_a(t)\}$ as the set of arms pertinent to
  compute the right gap upper bound of $a$, and then the maximum possible right gap
  of $a$ is %obtained as 
%  One can see that the only
% ``interesting'' positions of arm $a$ that one needs to run the previous
% procedure at are ones where a new arm could potentially be placed to its left.
% These correspond to the positions $\{l_j(t): l_a(t) \le l_j(t) \le r_a(t)\}$. 
%An upper bound on the right gap of arm $a$ is thus given by
%Thus, the maximum value is given by
\begin{align}
    \vspace{-5pt}
\gaprucb_a(t) & = \max\{G_a^r(l_j(t), t): j \in P_a^r\}. \label{eq:right_gap_UB}
\end{align}
An upper bound for the left gap $\gaplucb_a$ can be similarly obtained. 
We explain this and give a proof of correctness in \cref{sec:appendix gapucb
algorithm proof}.
% Similarly, define $P_a^l:=\{j:r_j(t) \in [l_a(t),r_a(t)]\}$, and the maximum left gap of arm $a$ is obtained as
%\begin{align}
%\gaplucb_a(t) &= \max\big\{G_a^l(r_j(t), t): j \in P_a^l \big\}, \label{eq:left_gap_UB}
%\end{align}
%where
%\begin{equation}\label{eq:G^L}
%G_a^l(x, t) =
%\begin{cases}
%x - \max_{j: r_j(t) < x} l_j(t) &\text{if } \{j: r_j(t) < x\} \neq \emptyset,\\
%x - \min_{j \neq a} l_j(t) &\text{otherwise}. 
%\end{cases}
%\end{equation}
%The total cost of computing the gap upper bounds for all arms is $O(K^2)$.

 The algorithms also use a single global lower bound on the maximum gap. To do
 so, we sort the items according to their empirical means, and find
 partitions of items that are clearly separated in terms of their
 confidence intervals. At time $t$, let $(i)_t$ denote the arm with
 the $i^{\text{th}}$-largest empirical mean, i.e.,
$\widehat\mu_{(K)_t}(t) \le \dots \widehat\mu_{(2)_t}(t) \le
\widehat\mu_{(1)_t}(t)$ (this can be different from the true ranking
which is denoted by $(\cdot)$ without the subscript $t$).
We {\em detect} a nonzero gap at arm $k$ if
$\max_{a \in \{(k+1)_t,
\dots,(K)_t \} }
r_a(t) < \min_{a \in \{(1)_t, \dots, (k)_t\}} l_a(t)$. 
Thus, a lower bound on the largest gap is 
\begin{equation}
    \gaplcb(t) = \max_{k \in [K-1]} \left( \min_{a \in \{(1)_t, \dots, (k)_t\}} l_a(t)
    -\hspace{-5pt} \max_{a \in \{(k+1)_t, \dots,(K)_t \}} r_a(t) \right).
    \label{eq:max gap lower bound}
\end{equation}

%\vspace{-10pt}
\section{Analysis}
\label{sec:analysis}
In this section, we first state the accuracy and sample complexity guarantees for $\maxgapelim$ and
$\maxgapucb$, and then discuss our results. The proofs can be
found in the Supplementary material. 
%Our bounds use a constant $\alpha$ defined as follows \citep{even2006action}.
%\begin{remark}
%    \label{remark:confidence interval sample complexity}
%    %The confidence interval $c_s$ after $s$ samples is given by \eqref{eq:mean upper lower bound}, 
%    There exists constant $\alpha$ such that for
%    all $x>0$, if the number of samples $s \ge \alpha \frac{\log (K/\delta
%    x)}{x^2}$, then $c_s \le x$, where $c_s$ is the confidence interval given by
%    \eqref{eq:mean upper lower bound}.
%\end{remark}

\begin{restatable}{theorem}{TheoremAccuracy}
    With probability $1-\delta$, $\maxgapelim$, $\maxgapucb$ and $\maxgaptwoucb$ cluster the arms according to the
    maximum gap, i.e., they satisfy \eqref{eq:clustering guarantee}.
    \label{thm:accuracy}
\end{restatable}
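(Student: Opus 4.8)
My plan is to reduce the probabilistic guarantee to a deterministic statement on a single high-probability event, and then to verify that each of the three algorithms outputs the correct split whenever that event holds. Concretely, I would define the good event $E := \{\forall t\in\mathbb{N},\ \forall a\in[K]:\ \mu_a\in[l_a(t),r_a(t)]\}$, for which \eqref{eq:valid confidence interval} already gives $\mathbb{P}(E)\ge 1-\delta$; it therefore suffices to show that $\widehat C_1=C_1$ holds surely on $E$. On $E$ I would first record two validity facts. First, the true mean vector $(\mu_1,\dots,\mu_K)$ is a feasible configuration for the optimization defining the gap upper bounds (\cref{sec:prelim gap upper bound}, \cref{alg:gapucb}), so $\Delta_a\le\gapucb_a(t)$ for every arm $a$ and time $t$. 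Second, whenever the detection test behind \eqref{eq:max gap lower bound} fires at a rank $k$, the strict separation $\max_{a\in\{(k+1)_t,\dots,(K)_t\}}r_a(t)<\min_{a\in\{(1)_t,\dots,(k)_t\}}l_a(t)$ forces, on $E$, every arm in the lower empirical group to have a strictly smaller true mean than every arm in the upper group; hence the empirical top-$k$ equals the true top-$k$, and the detected separation is a genuine gap, giving $\gaplcb(t)\le\Delta_{\max}$.

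The second ingredient I would establish is that the two arms straddling the maximum gap can never be discarded. Because $\Delta_{(m)}=\Delta_{(m+1)}=\Delta_{\max}$ while the uniqueness of $m$ gives $\Delta_a<\Delta_{\max}$ for all $a\notin\{(m),(m+1)\}$, the validity facts yield $\gapucb_{(m)}(t),\gapucb_{(m+1)}(t)\ge\Delta_{\max}\ge\gaplcb(t)$ at every $t$ on $E$. Consequently, in $\maxgapelim$ neither $(m)$ nor $(m+1)$ ever triggers the elimination test $\gapucb_a(t)\le\gaplcb(t)$, and in both UCB variants the same inequality keeps these two arms eligible throughout.

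It then remains to turn each stopping rule into a correct boundary. For $\maxgapelim$ this is cleanest: termination requires $|A|=2$, and since $(m),(m+1)$ survive we get $A=\{(m),(m+1)\}$, so on $E$ the largest empirical gap falls between these two well-sampled arms and the returned clusters are exactly $C_1,C_2$. For $\maxgaptwoucb$, stopping requires $M_2(t)<\gaplcb(t)\le\Delta_{\max}$, so every arm outside $\mathcal U_1$ is certified to satisfy $\Delta_a\le\gapucb_a(t)\le M_2(t)<\Delta_{\max}$ and thus cannot border the maximum gap; combined with $(m),(m+1)\in\mathcal U_1$, this pins the returned boundary to rank $m$. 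For $\maxgapucb$, I would argue that on $E$ any arm $a\notin\{(m),(m+1)\}$ is sampled only while its gap upper bound equals $M_1(t)$, and since that bound is eventually certified below $\Delta_{\max}\le M_1(t)$ the arm drops out of $\mathcal U$; hence the count-imbalance demanded by the stopping test $T_i(t)+T_j(t)\ge c\sum_{a\notin\{i,j\}}T_a(t)$ can only be met by the pair $\{i,j\}=\{(m),(m+1)\}$.

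I expect the last case to be the main obstacle. The $\maxgapucb$ stopping rule speaks only about sample counts and never explicitly compares an estimate to $\gaplcb(t)$, so ruling out a premature stop on a wrong pair requires a quantitative argument: one must show that on $E$ every arm other than $(m),(m+1)$ keeps accruing a non-negligible share of the samples until its gap upper bound is provably below $\Delta_{\max}$, so that an incorrect pair cannot achieve the required count dominance. A milder version of the same care is needed to ensure that no spurious arm lingers in $\mathcal U_1$ at the stopping time of $\maxgaptwoucb$. These points couple the accuracy argument to the concentration estimates underlying the sample-complexity analysis of \cref{sec:analysis}, whereas $\maxgapelim$ admits an essentially order-theoretic correctness proof from the validity facts alone.
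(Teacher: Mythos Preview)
Your reduction to the good event $E$ and the two validity facts---$\gapucb_a(t)\ge\Delta_a$ for all $a,t$ and $\gaplcb(t)\le\Delta_{\max}$ for all $t$---is exactly what the paper does, so the core of your argument matches. The paper's proof is in fact terser than yours: it assumes $\gapucb_{(m)}(t)<\gaplcb(t)$ at some $t$, picks a true-rank-adjacent pair $(a),(a+1)$ straddling the empirical split that realizes $\gaplcb(t)$, and derives $\Delta_{\max}\le\gapucb_{(m)}(t)<\gaplcb(t)\le l_{(a)}(t)-r_{(a+1)}(t)\le\mu_{(a)}-\mu_{(a+1)}$, a contradiction. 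It then asserts in one sentence that a wrong output from \emph{any} of the three algorithms forces $\gapucb_{(m)}(t)<\gaplcb(t)$ at some $t$, and stops. Your derivation of $\gaplcb(t)\le\Delta_{\max}$ via ``the empirical top-$k$ equals the true top-$k$ on $E$'' is a slightly cleaner route to the same inequality.

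Where you go beyond the paper is in treating the three stopping rules separately, and here your caution is warranted. For $\maxgapelim$ and $\maxgaptwoucb$ the reduction to $\gapucb_{(m)}(t)<\gaplcb(t)$ is immediate, and your arguments are fine. For $\maxgapucb$ you correctly observe that the count-based stopping criterion never consults $\gaplcb(t)$, so the paper's one-line reduction is not self-evidently valid there; the paper's accuracy proof, read in isolation, does not justify it. Your instinct---that ruling out a premature stop on a wrong pair requires the sample-complexity analysis (each arm $a\notin\{(m),(m+1)\}$ is sampled only $O(\gamma_a^{-2})$ times on $E$, so only $\{(m),(m+1)\}$ can eventually dominate the counts)---is the right way to close this, and it is precisely what \cref{thm:ucb sample complexity} supplies. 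So your flagged obstacle is real, and your proposed resolution is the correct one; you are simply being more explicit than the paper about where the accuracy claim for $\maxgapucb$ actually leans on \cref{sec:analysis}.
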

The number of times arm $a$ is sampled by both the algorithms depends on a
parameter $\ucbparameter_a = \min \{\ucbparameter_a^l, \ucbparameter_a^r\}$ where 
\begin{align}
    \ucbparameter_a^r &= \max_{j: 0 < \Delta_{a,j} < \Delta_{\max}} \min
    \{\Delta_{a,j}, (\Delta_{\max} - \Delta_{a,j})\}\label{eq:gammair
    definition}  \\
    \ucbparameter_a^l &= \max_{j: 0 < \Delta_{j,a} < \Delta_{\max}} \min
    \{\Delta_{j,a}, (\Delta_{\max} - \Delta_{j,a})\}, \label{eq:gammail
    definition}.
\end{align}
The maxima is assumed to be $\infty$ in \eqref{eq:gammair definition} and
\eqref{eq:gammail definition} if there is no $j$ that satisfies the
constraint to account for edge arms. The quantity $\ucbparameter_a$ acts as a measure of hardness for
arm $a$; \cref{thm:sample complexity} states that $\maxgapelim$ and $\maxgapucb$ sample arm $a$
at most $\tilde{O}(1/\ucbparameter_a^2)$ number of times (up to log factors). 
%In
%its proof we show that $O((\ucbparameter_a^r)^{-2})$
%and $O( (\ucbparameter_a^l)^{-2})$ are the number of samples of $a$ required to
%rule out its right and left gaps from being the largest gap.
\begin{theorem}
    With probability $1-\delta$, the sample complexity of $\maxgapelim$ and
    $\maxgapucb$ is bounded by $$O\left( \sum_{a \in [K]\setminus\{(m),(m+1)\}}
    \frac{\log (K/\delta \ucbparameter_a)}{\ucbparameter_a^2} \right)$$
    \label{thm:sample complexity}
\end{theorem}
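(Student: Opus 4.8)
The plan is to condition on the event $\mathcal{E}$ that every confidence interval is valid for all $t$ and all arms, which holds with probability at least $1-\delta$ by \eqref{eq:valid confidence interval}. On $\mathcal{E}$ it then suffices to prove a per-arm bound: for each $a\in[K]\setminus\{(m),(m+1)\}$ the number of pulls satisfies $T_a=O(\ucbparameter_a^{-2}\log(K/\delta\ucbparameter_a))$, and summing over $a$ gives the stated complexity. Throughout I use that on $\mathcal{E}$ we have $|\hat\mu_a-\mu_a|\le c_{T_a}$, hence $\mu_a\in[l_a(t),r_a(t)]$, $r_a(t)\le\mu_a+2c_{T_a}$, and $l_a(t)\ge\mu_a-2c_{T_a}$.

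The core of the argument is a single inequality showing that an arm's gap upper bound falls below the global lower bound once the relevant intervals are narrow. Fix $a$ and let $j^\ast$ be the witness attaining $\ucbparameter_a^r$ in \eqref{eq:gammair definition}, so $\Delta_{a,j^\ast}\ge\ucbparameter_a^r$ and $\Delta_{\max}-\Delta_{a,j^\ast}\ge\ucbparameter_a^r$. Whenever $2c_{T_a}+2c_{T_{j^\ast}}<\Delta_{a,j^\ast}$ we have $l_{j^\ast}(t)>r_a(t)$, so $j^\ast$ is forced to the right of every candidate placement $x\ge l_a(t)$ of $a$; then $G_a^r(x,t)\le r_{j^\ast}(t)-x$ and therefore
\begin{equation*}
\gaprucb_a(t)\ \le\ r_{j^\ast}(t)-l_a(t)\ \le\ \Delta_{a,j^\ast}+2c_{T_a}+2c_{T_{j^\ast}}.
\end{equation*}
Evaluating \eqref{eq:max gap lower bound} at the partition that isolates the true top-$m$ arms (which is available among the empirical partitions once the intervals bordering the max gap are correctly ordered) gives $\gaplcb(t)\ge\Delta_{\max}-2c_{T_{(m)}}-2c_{T_{(m+1)}}$. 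Combining the two, the elimination test $\gaprucb_a(t)\le\gaplcb(t)$ fires as soon as $c_{T_a}+c_{T_{j^\ast}}+c_{T_{(m)}}+c_{T_{(m+1)}}\le\tfrac12(\Delta_{\max}-\Delta_{a,j^\ast})$, whose right-hand side is at least $\ucbparameter_a^r/2$. A symmetric statement with the left witness controls $\gaplucb_a(t)$, so once the half-widths of $a$, of its two witnesses, and of $(m),(m+1)$ are all at most $\ucbparameter_a/8$ we obtain $\gapucb_a(t)\le\gaplcb(t)$.

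For $\maxgapelim$ the remaining bookkeeping is clean because every active arm is pulled once per round: while $a$ is active $T_a(t)=t$, and since $(m),(m+1)$ are never eliminated $T_{(m)}(t)=T_{(m+1)}(t)=t$ as well. Thus at the first round $t_0$ with $c_{t_0}\le\ucbparameter_a/8$ the widths of $a,(m),(m+1)$ are simultaneously small, and inverting $c_s=\sqrt{\log(4Ks^2/\delta)/s}$ yields $t_0=O(\ucbparameter_a^{-2}\log(K/\delta\ucbparameter_a))$. The delicate point, which I expect to be the main obstacle, is the witness width $c_{T_{j^\ast}}$: if $j^\ast$ left the active set before $t_0$ its interval is frozen at a possibly larger width. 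The plan is to resolve this by processing arms in order of elimination time and arguing that whenever $j^\ast$ is eliminated its interval is already narrow enough to keep certifying the displayed bound, or else to swap $j^\ast$ for the current nearest forced-right arm, which by the case analysis behind \eqref{eq:gammair definition} still witnesses $\Delta_{\max}-\Delta_{a,j^\ast}\ge\ucbparameter_a^r$.

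For $\maxgapucb$ the same inequality is used contrapositively. On $\mathcal{E}$ arm $(m)$ always has gap upper bound at least $\Delta_{\max}$, so whenever $a$ is actually pulled we have $\gapucb_a(t)=M_1(t)\ge\Delta_{\max}$, and the core inequality then forces one of $c_{T_a},c_{T_{j^\ast}}$ (or their left analogues) to exceed $\Omega(\ucbparameter_a)$. Converting this per-round obstruction into a bound on the total number of pulls of $a$ is the second delicate step, since $\maxgapucb$ samples selectively; here I would follow the maximal-UCB template of \citep{jamieson2014lil}, charging each pull of $a$ to the event that its own interval is still wide and using that the maximal-gap-UCB set always contains the two arms bordering the current top gap. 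Once the per-arm bound $T_a=O(\ucbparameter_a^{-2}\log(K/\delta\ucbparameter_a))$ is established for both algorithms, summing over $a\in[K]\setminus\{(m),(m+1)\}$ yields the stated complexity.
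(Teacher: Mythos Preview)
For $\maxgapelim$ your plan is essentially the paper's: both use exactly your core inequality together with the bound $\gaplcb(t)\ge\Delta_{\max}-4c_t$, and both exploit the round-based structure so that every active arm, including $(m),(m+1)$, has half-width $c_t$. The paper simply writes the whole argument in terms of the common width $c_t$ and applies it to the witness as well, so your flagged ``frozen witness'' concern does not arise there in the form you describe; otherwise the two arguments coincide.

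For $\maxgapucb$, however, your proposal has a genuine gap. Your core inequality
\[
\gaprucb_a(t)\ \le\ \Delta_{a,j^\ast}+2c_{T_a}+2c_{T_{j^\ast}}
\]
is valid only once $l_{j^\ast}(t)>r_a(t)$, i.e.\ only after $j^\ast$ has itself been pulled roughly $(\ucbparameter_a^r)^{-2}$ times. In $\maxgapucb$ there is no a priori reason this ever happens: the algorithm samples only the arms attaining $M_1(t)$, and $j^\ast$ need not be one of them. So the contrapositive you want to run can fail vacuously---$a$ may keep getting pulled with a narrow interval of its own while $j^\ast$ sits untouched with a wide interval, and then neither branch ``$c_{T_a}$ large'' nor ``$c_{T_{j^\ast}}$ large'' is forced, because your displayed inequality simply does not hold. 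This is exactly where the standard best-arm UCB template breaks: there the bound $\mathrm{UCB}_a\le\mu_a+2c_{T_a}$ depends only on $T_a$, whereas here any useful upper bound on $\gaprucb_a$ unavoidably depends on the sample counts of \emph{other} arms.

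The paper closes this gap with a substantially more intricate argument. The key structural fact is that $\maxgapucb$ samples \emph{all} arms attaining the top gap-UCB, and the two arms $i_l^t,i_r^t$ forming the boundaries of $\gaprucb_a(t)$ attain that same value (one through its left gap, the other through its right). The paper partitions the arms into three ``level sets'' $\mathcal{A}_a^0,\mathcal{A}_a^1,\mathcal{A}_a^2$ relative to $a$'s confidence interval and proves that whenever $a$ is pulled for its right gap, every arm in the level containing $i_r^t$ is pulled as well. A three-phase analysis (up to times $t(a,\tau),\,t(a,2\tau),\,t(a,3\tau)$) then shows that after $a$ has been pulled $3\tau$ times, the witness $j^\ast$ has been dragged along for at least $\tau$ of them; only at that point does your core inequality become available and yield $\gaprucb_a<\Delta_{\max}$. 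This coupling---that the sampling rule forces $j^\ast$ to be pulled in lockstep with $a$---is the missing idea in your proposal, and it is the technical heart of the $\maxgapucb$ analysis.
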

Next, we provide intuition for why the sample complexity depends on the
parameters in $\eqref{eq:gammair
definition}$ and \eqref{eq:gammail definition}. In particular, we show that 
$O( (\ucbparameter_a^r)^{-2})$ (resp.\ $O( (\ucbparameter_a^l)^{-2})$) is the number of samples of $a$ required to
rule out arm $a$'s right (resp.\ left) gap from being the largest gap. 
%The sample complexity of arm $a \notin \{m,m+1\}$ in $\maxgapelim$ and $\maxgapucb$ is
%$\tilde{O}(\gamma_a^{-2})$, where
%\begin{equation}
%    \gamma_a = \max_{j:\Delta_{a,j}>0 }\min\{ \Delta_{a,j}, \Delta_{\max}-\Delta_{a,j}\}.
%    \label{eq:hardness parameter}
%\end{equation}

\begin{wrapfigure}[8]{l}{.5\textwidth}
    \centering
    \vspace{-10pt}
    \includegraphics[width=\linewidth]{./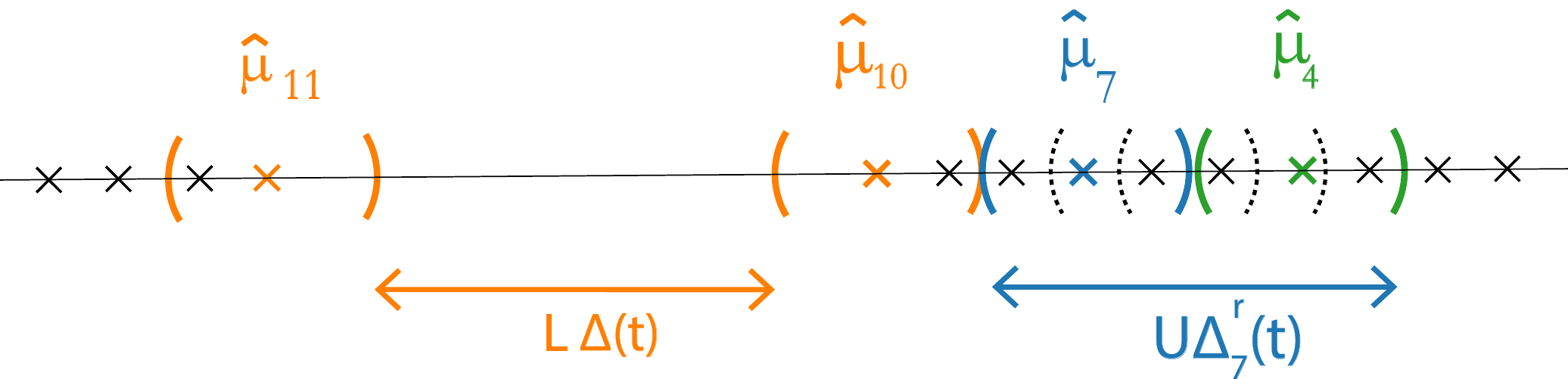}
    \caption{Arm $a=7$ is eliminated when a helper arm $j=4$ is found.}
    \label{fig:discussion}
\end{wrapfigure}
Let us focus on the right gap for simplicity. 
To understand how (\ref{eq:gammair definition}) naturally arises, consider  
\cref{fig:discussion}, which denotes the confidence intervals on the means at
some time $t$. A lower bound on the gap $\gaplcb(t)$ can be computed
between the left and right confidence bounds of arms $10$ and $11$ respectively
as shown. Consider the computation of the upper bound $\gaprucb_7(t)$ on the
right gap of arm $a = 7$. Arm $4$
% \\ \vspace{-.1in} \\ \noindent 
lies to the right of arm $7$ with high
probability (unlike the arms with dashed confidence intervals), so the upper bound $\gaprucb_7(t) \leq r_4(t)-l_7(t)$.
Considering only the right gap for simplicity, as
soon as $\gaprucb_7(t) < \gaplcb(t)$, arm $7$ can be eliminated as a candidate
for the maximum gap.

Thus, an arm $a$ is removed from consideration
as soon as we find a \emph{helper} arm $j$ (arm $4$ in \cref{fig:discussion}) that satisfies two properties: (1) the
confidence interval of arm $j$ is disjoint from that of arm $a$, and (2) the
upper bound $\gaprucb_a(t) = r_j(t) - l_a(t) < \gaplcb(t)$. The first of these
conditions gives rise to the term $\Delta_{a,j}$ in \eqref{eq:gammair
definition}, and the second condition gives rise to the term
$(\Delta_{\max}-\Delta_{a,j})$. Since any arm $j$ that satisfies these
conditions can act as a helper for arm $a$, we take the maximum over all arms $j$ to yield the smallest sample complexity for arm $a$.

This also shows that if all arms are either very close to $a$ or at a distance
approximately $\Delta_{\max}$ from $a$, then the upper bound $\gaprucb_7(t)
= r_4(t)-l_7(t) > \gaplcb(t)$ and arm $7$ cannot be
eliminated. Thus arm $a$ could have a small gap with respect to its adjacent
arms, but if there is a large gap in the vicinity of arm $a$, it cannot be
eliminated quickly. This illustrates that the maximum gap identification problem
is not equivalent to best-arm identification on gaps. \cref{sec:lowerbound}
formalizes this intuition.

\section{Minimax Lower Bound}
\label{sec:lowerbound}
\begin{wrapfigure}[10]{lrio}{.45\textwidth}
    \vspace{-10pt}
\centering
\includegraphics[width=\linewidth]{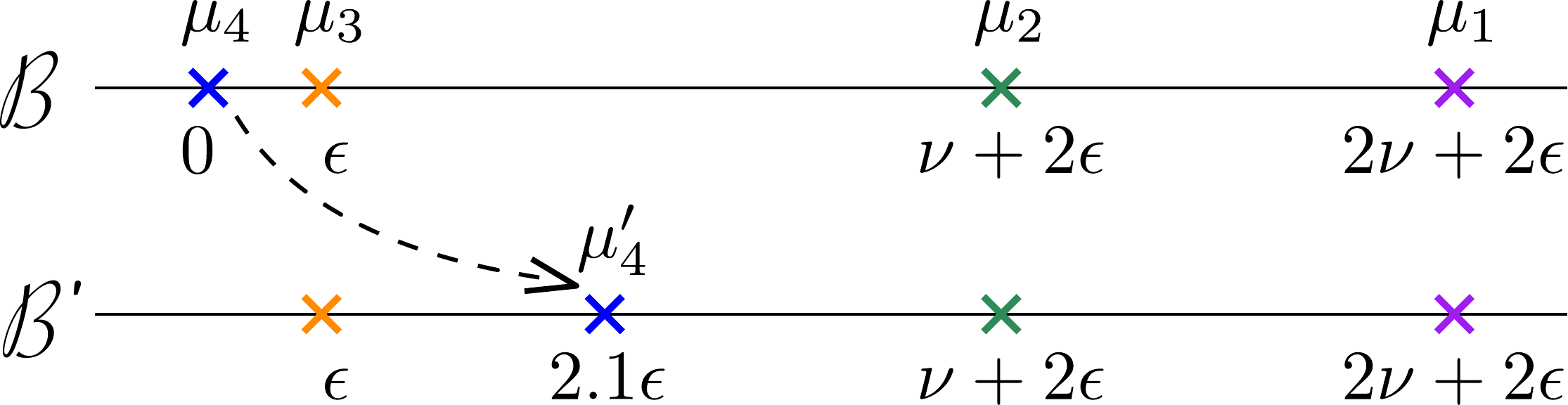} %\vspace{-.25in}
\caption{Changing the original bandit model $\mathcal{B}$ to
  $\mathcal{B}'$. $\mu_4$ is
  shifted to the right by $2.1\epsilon$. As a result, the maximum gap
  in $\mathcal{B}'$ is between green and purple.  }
\label{fig:change_of_measure}
\end{wrapfigure}
\vspace{-10pt}
In this section, we demonstrate that the MaxGap problem is fundamentally different from best-arm identification on
gaps. 
%In other words, we show that in general the number of samples
%from a given arm $a$ is $\tilde{O}(\gamma_a^{-2})$ in
%\eqref{eq:hardness parameter} rather than only a function of its own
%gap, such  as $\Delta_a^{-2}$.
We construct a problem instance and prove a lower bound on the
number of samples needed by any probably correct algorithm. The lower bound
matches the upper bounds in the previous section for this instance.  
%This lower bound is higher than the number of samples of $a$ that
%would have been sufficient if this problem could be reduced to best-arm identification on gaps.
%We use standard \emph{change of measure} arguments \citep{garivier2016optimal} to show this. 
\begin{restatable}{lemma}{Minimax}
Consider a model $\mathcal{B}$ with $K = 4$ normal distributions
$\mathcal{P}_i = \mathcal{N}(\mu_i, 1)$, where 
\begin{equation*}
\mu_4 = 0,\quad \mu_3 = \epsilon,\quad \mu_2 = \nu + 2\epsilon,\quad \mu_1 = 2\nu + 2\epsilon,
\end{equation*}
for some $\nu \gg \epsilon > 0$. Then any algorithm that is correct
with probability at least $1-\delta$ must collect
$\Omega(1/\epsilon^2)$ samples of arm $4$ in expectation. 
\label{lem:lower bound lemma}
\end{restatable}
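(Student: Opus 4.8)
The plan is to prove this via a standard change-of-measure (transportation lemma) argument in the style of \citet{garivier2016optimal}, engineered so that the alternative instance differs from $\mathcal{B}$ only in the law of arm $4$. The whole point is that a perturbation of arm $4$ on the order of $\epsilon$ flips the correct clustering, so any correct algorithm is forced to resolve arm $4$'s mean to accuracy $O(\epsilon)$, which costs $\Omega(1/\epsilon^2)$ samples.

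First I would pin down the correct answer in $\mathcal{B}$. The sorted means are $\mu_4 < \mu_3 < \mu_2 < \mu_1$, with adjacent gaps $(\mu_3-\mu_4,\ \mu_2-\mu_3,\ \mu_1-\mu_2) = (\epsilon,\ \nu+\epsilon,\ \nu)$. Since $\nu \gg \epsilon$, the unique largest gap is $\nu+\epsilon$, lying between arms $3$ and $2$, so the correct top cluster is $C_1 = \{1,2\}$. Next I would construct $\mathcal{B}'$ (\cref{fig:change_of_measure}) by shifting \emph{only} arm $4$'s mean to $\mu_4' = 2.1\epsilon$, leaving arms $1,2,3$ untouched. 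The magnitude $2.1\epsilon$ is chosen so that arm $4$ crosses strictly past arm $3$ and lands inside the previously-dominant gap: the sorted means become $\mu_3 < \mu_4' < \mu_2 < \mu_1$, with adjacent gaps $(1.1\epsilon,\ \nu-0.1\epsilon,\ \nu)$. The unique largest gap is now $\nu$, between arms $2$ and $1$, so the correct top cluster in $\mathcal{B}'$ is $C_1' = \{1\} \neq C_1$.

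Finally I would apply the data-processing/transportation inequality to the event $\mathcal{E} = \{\widehat{C}_1 = \{1,2\}\}$. Correctness in $\mathcal{B}$ gives $\mathbb{P}_{\mathcal{B}}(\mathcal{E}) \ge 1-\delta$, while correctness in $\mathcal{B}'$ forces $\widehat{C}_1 = \{1\}$ with probability $\ge 1-\delta$, hence $\mathbb{P}_{\mathcal{B}'}(\mathcal{E}) \le \delta$. Because $\mathcal{B}$ and $\mathcal{B}'$ agree on every arm except arm $4$, the change-of-measure sum collapses to a single term, giving
\[
\mathbb{E}_{\mathcal{B}}[T_4] \cdot \mathrm{KL}\!\big(\mathcal{N}(0,1),\, \mathcal{N}(2.1\epsilon,1)\big) \ \ge\ d\big(\mathbb{P}_{\mathcal{B}}(\mathcal{E}),\, \mathbb{P}_{\mathcal{B}'}(\mathcal{E})\big) \ \ge\ d(1-\delta,\delta),
\]
where $d(\cdot,\cdot)$ is the binary relative entropy and the last step uses monotonicity of $d$. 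Substituting the Gaussian identity $\mathrm{KL}(\mathcal{N}(0,1),\mathcal{N}(2.1\epsilon,1)) = (2.1\epsilon)^2/2$ and the standard bound $d(1-\delta,\delta) \ge \log(1/(2.4\delta))$ yields $\mathbb{E}_{\mathcal{B}}[T_4] \ge 2\log(1/(2.4\delta))/(2.1\epsilon)^2 = \Omega(1/\epsilon^2)$.

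I expect the only genuinely delicate point to be the \emph{design} verification rather than the machinery: the shift must be strictly larger than $\epsilon$ so that arm $4$ overtakes arm $3$ and splits the dominant gap into two pieces, yet the two resulting sub-gaps ($1.1\epsilon$ and $\nu-0.1\epsilon$) must both stay strictly below $\nu$ so that the argmax gap relocates and the clustering genuinely flips. The constant $2.1$ threads this needle while keeping the divergence cost at $O(\epsilon^2)$; everything after that is the routine transportation-lemma computation, whose only technical prerequisite is that the algorithm's stopping time be almost surely finite under both instances.
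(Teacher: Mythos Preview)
Your proposal is correct and follows essentially the same argument as the paper: construct the alternative instance $\mathcal{B}'$ by shifting only $\mu_4$ to $2.1\epsilon$ so that the correct top cluster flips from $\{1,2\}$ to $\{1\}$, then apply the transportation inequality of \citet{garivier2016optimal} to the event $\{\widehat{C}_1=\{1,2\}\}$. If anything, your write-up is slightly more careful than the paper's (you explicitly verify the new adjacent gaps and retain the factor $1/2$ in the Gaussian KL).
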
%\end{lemma}
\vspace{-5pt}
\emph{Proof Outline:} The proof uses a standard change of measure argument
\citep{garivier2016optimal}. We construct another problem instance
$\mathcal{B}'$ which has a different maximum gap clustering compared to
$\mathcal{B}$ (see \cref{fig:change_of_measure}; the maxgap clustering in
$\mathcal{B}$ is $\{4,3\} \cup \{2,1\}$, while the maxgap clustering in
$\mathcal{B}'$ is $\{4,3,2\}\cup \{1\}$), and show that in order to distinguish between $\mathcal{B}$ and
$\mathcal{B}'$, any probably correct algorithm must collect at least
$\Omega(1/\epsilon^2)$ samples of arm $4$ in expectation (see \cref{sec:appendix lower
bound} for details). 

From the definition of $\gamma_a$ using \eqref{eq:gammair definition},\eqref{eq:gammail definition}, it is easy to check that $\gamma_4 =
\epsilon$. Therefore, for problem
instance $\mathcal{B}$ our algorithms find the maxgap clustering using at most $O(\log(\epsilon/\delta)/\epsilon^2)$
samples of arm $4$ (\textit{c.f.\ }\cref{thm:sample complexity}). This essentially
matches the lower bound above.  

This example illustrates why the maximum gap identification
problem is different from a simple best-arm identification on gaps.
Suppose an oracle told a best-arm identification algorithm the ordering of the arm means. 
Using the ordering it can convert the $4$-arm maximum gap problem
$\mathcal{B}$ to a best arm 
identification problem on $3$ \emph{gaps}, with distributions
%\begin{equation*}
%\mathcal{P}_{4,3} = \mathcal{N}(\epsilon, 2), \quad
%\mathcal{P}_{3,2} = \mathcal{N}(\nu + \epsilon, 2),\quad 
%\mathcal{P}_{2,1} = \mathcal{N}(\nu, 2).
%\end{equation*}
$\mathcal{P}_{4,3} = \mathcal{N}(\epsilon, 2), 
\mathcal{P}_{3,2} = \mathcal{N}(\nu + \epsilon, 2)$, and
$\mathcal{P}_{2,1} = \mathcal{N}(\nu, 2)$.
The best-arm algorithm can sample arms $i$ and $i+1$ to get a sample of the gap
$(i+1,i)$. We know from standard best-arm identification analysis
\citep{jamieson2014lil} that the gap $(4,3)$ can be eliminated from being the
largest by sampling it (and hence arm $4$) $O(1/\nu^2)$ times, which can be
arbitrarily lower than the $1/\epsilon^2$ lower bound in \cref{lem:lower bound lemma}. Thus the ordering information given to the best-arm identification algorithm is crucial for it to quickly 
identify the larger gaps. The problem we solve in this paper is identifying the maximum gap when 
the ordering information is \emph{not} available. 

\vspace{-10pt}
\section{Experiments}
\label{sec:experiments}
\vspace{-5pt}
\subsection{Streetview Dataset}
\label{sec:streetview experiment}
\vspace{-10pt}
\begin{wrapfigure}[16]{l}{.4\textwidth}
    \centering
    \vspace{-10pt}
    \includegraphics[width=.9\linewidth]{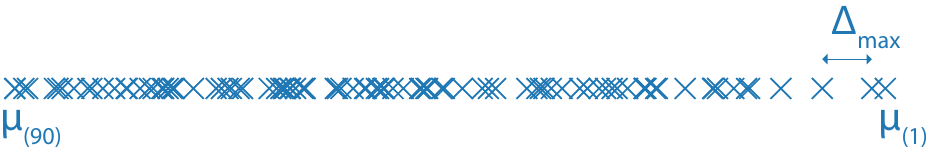}
    
    \vspace{-15 pt}
    \hspace{0.4\textwidth} (a)
    
    \includegraphics[width=0.85\linewidth]{./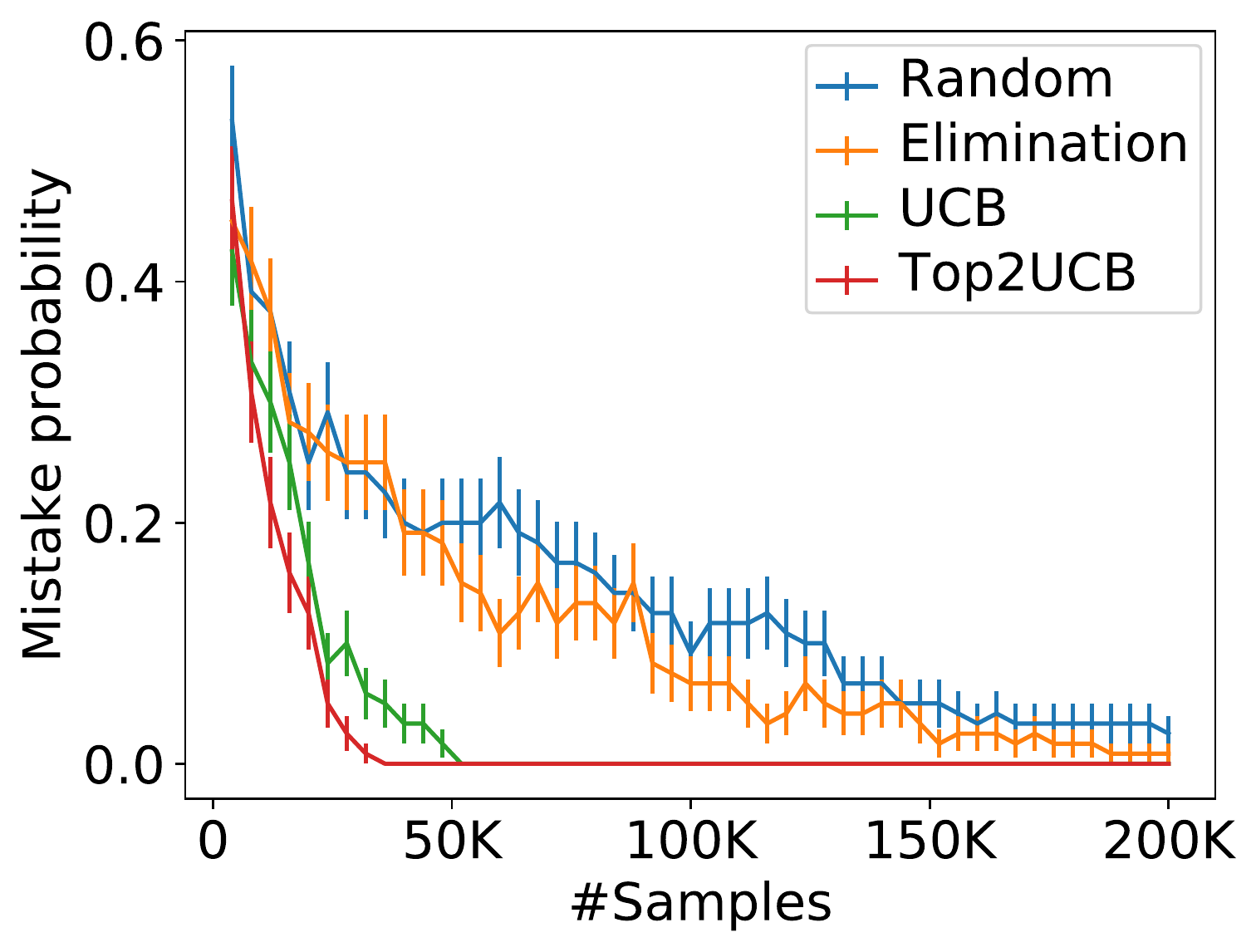} 
    
     \vspace{-10 pt}
    \hspace{0.4\textwidth} (b)
     \vspace{-5 pt}
    %\hspace{20pt}
    %\includegraphics[width=.45\textwidth]{./}\\[-5pt]
    %\hspace{30pt}(a)\hspace{200pt}(b)
    \caption{(a) Borda safety scores for Streetview images. (b) Probability of returning a wrong cluster.}% discussed in \cref{sec:introduction}. }
    \label{fig:borda_performance_expsection}
\end{wrapfigure}
In our first experiment we study performance on the Streetview dataset \citep{coarserankingdataset,katariya2018adaptive}
whose means are plotted in %\cref{fig:streets} in the Introduction.
\cref{fig:borda_performance_expsection}(a)
. We have $K=90$
arms, where each arm is a normal distribution with mean equal to the Borda safety score of the image and standard
deviation $\sigma = 0.05$. The largest gap of $0.029$ is between arms $2$ and
$3$, and the second largest gap
is $0.024$. 
In \cref{fig:borda_performance_expsection}(b), we plot the fraction of times $\hat{C}_1 \ne
    \{1,2\}$ in $120$ runs as a function of the number
of samples, for four algorithms, viz., random (non-adaptive) sampling, $\maxgapelim$, $\maxgapucb$, and
$\maxgaptwoucb$. The error bars denote standard deviation over the runs. %We see that
$\maxgapucb$ and $\maxgaptwoucb$ require $6\mhyphen
7$x fewer samples than random sampling. 
\vspace{-5pt}
\subsection{Simulated Data}
\label{sec:simulated experiment}
\begin{figure}[t]
    \centering
    \begin{minipage}[c][4.6cm][c]{0.34\textwidth}
    \centering
    \includegraphics[width=1.1\linewidth]{./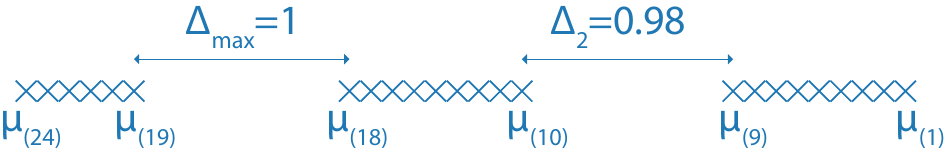}        
        %\subcaption{}
        
        %\vspace{-10pt}
        \hspace{0.15\textwidth} (a)
        
        %\vspace{-10pt}
        (b)\includegraphics[width=\textwidth]{./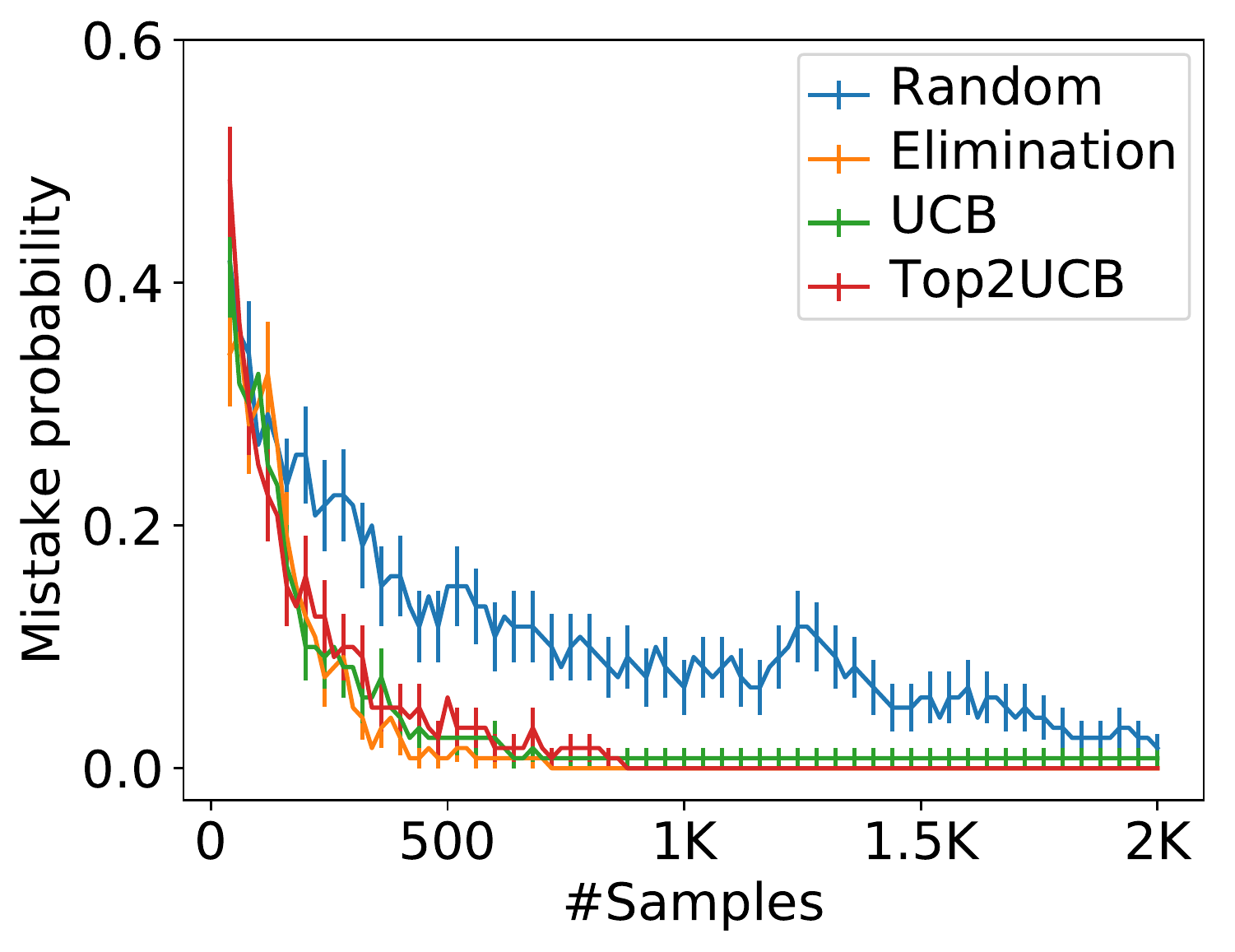}
        %\subcaption{}
        \vspace{-10pt}
        %\hspace{0.15\textwidth} (b)
    \end{minipage}
    \begin{minipage}[c][4.6cm][c]{0.64\textwidth}
    \centering
        \includegraphics[scale=0.39]%{./}
                {./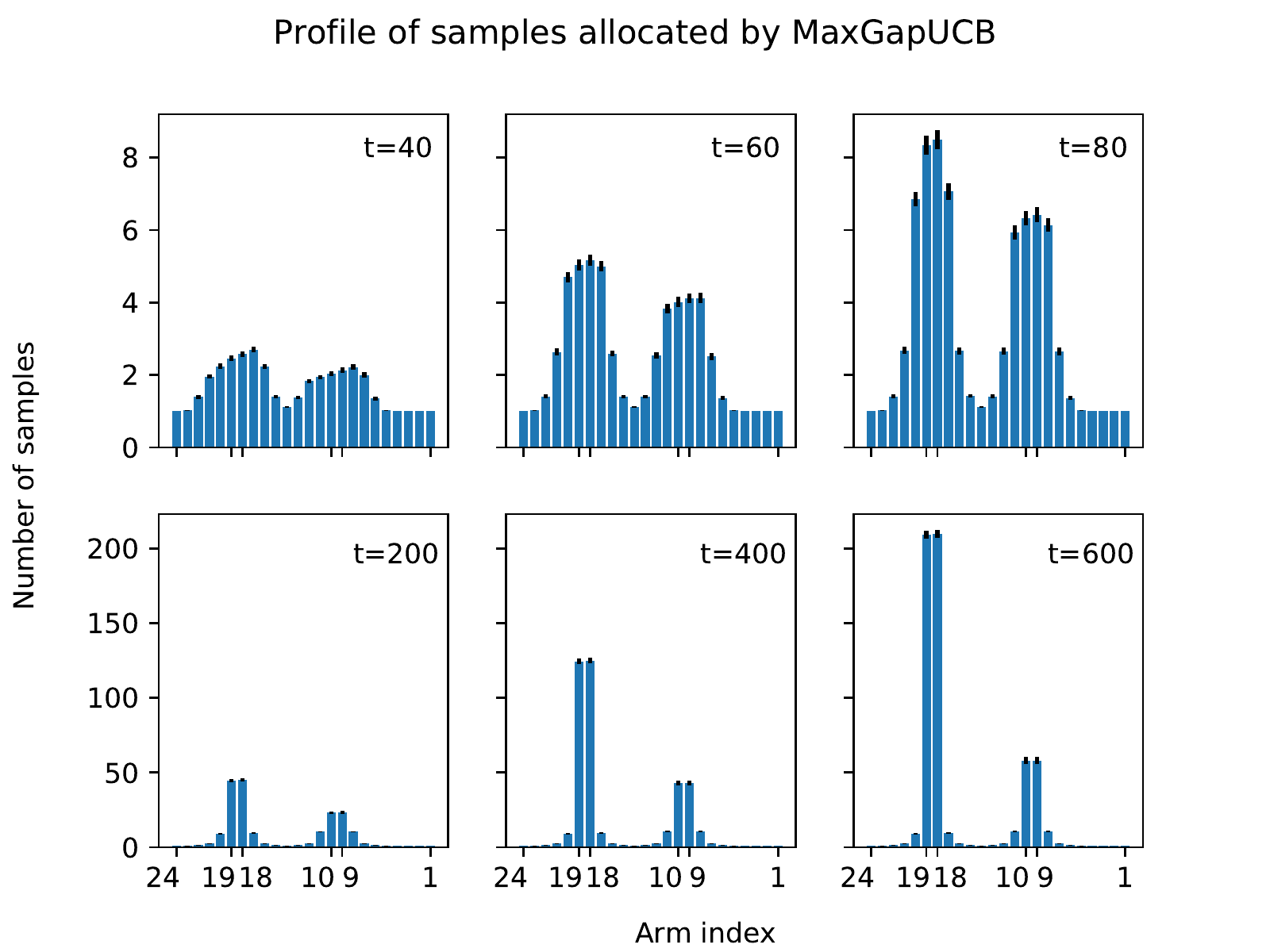}
        %\subcaption{}
        
        \vspace{-5 pt}
        \hspace{0.07\textwidth} (c)
    \end{minipage}
%    \begin{minipage}[c][4cm][c]{0.25\textwidth}
%        \includegraphics[width=\textwidth]
%        {./}
%        \subcaption{}
%    \end{minipage}
    \caption{(a) Two large gaps. (b) Clustering error
probability for means shown in \cref{fig:simulated experiment}(a). (c) The profile of samples allocated by \maxgapucb to each arm in (a) at different time steps.}
\vspace{-10pt}
    \label{fig:simulated experiment}
\end{figure}

In the second experiment, we study the performance on on a simulated set of means containing two large gaps. 
The mean distribution plotted in \cref{fig:simulated
experiment}(a) has $K=24$ arms ($\mathcal{N}(\cdot,1)$), with two large
mean gaps $\Delta_{10,9}=0.98, \Delta_{19,18}=1.0$, and remaining small gaps
($\Delta_{i+1,i}=0.2$ for $i\notin \{9,18\}$). We expect to see a big advantage
for adaptive sampling in this example because almost every sub-optimal arm has a
\emph{helper} arm (see \cref{sec:analysis}) which can help eliminate
it quickly, and adaptive algorithms can then focus on
distinguishing the two large gaps. A non-adaptive algorithm on the other hand
would continue sampling all arms. 
We plot the fraction of times $C_1 \ne \{1,\dots,18\}$ in $120$ runs in \cref{fig:simulated experiment}(b), and
see that the active algorithms identify the largest gap in 8x fewer samples. 
To visualize the adaptive allocation of samples to the arms, we plot in
\cref{fig:simulated experiment}(c) the number
of samples queried for each arm at different time steps by {\tt MaxGapUCB}.
Initially, \maxgapucb allocates samples uniformly over all the arms. After a few
time steps, we see a bi-modal profile in the number of samples. Since all arms
that achieve the largest $\gapucb$ are sampled, we see that several arms that
are near the pairs $(10, 9)$ and $(19, 18)$ are also sampled frequently. As time
progresses,
only the pairs $(10,9)$ and $(19,18)$ get sampled, and eventually more samples
are allocated to the larger gap $(19,18)$ among the two.
%
%In the second mean distribution shown in \cref{fig:simulated
%experiment}(b), the gaps decrease geometrically. Specifically, we have $K=100$ arms ($\mathcal{N}(\cdot,1)$), and
%$\Delta_{i+1,i} = (1-\tfrac{i-1}{K})^{0.6}$. Here we expect the adaptive algorithms to shrink the
%active set in a continuous fashion. The performance on this arm configuration is
%plotted in \cref{fig:simulated experiment}(d), and we again see $\maxgapucb$ and
%$\maxgaptwoucb$ perform $8$x better than random sampling.

\vspace{-10pt}
\section{Conclusion}
\label{sec:conclusion}
\vspace{-10pt}
%
%In this paper, we proposed MaxGap bandit, a novel maximum-gap identification problem motivated from adaptive clustering
%applications. MaxGap bandit is the simplest adaptive clustering problem.
%It is theoretically interesting and very different from standard multi-armed bandit problems because of
%the local inter-dependence between the arm rewards (gaps), and this is reflected
%in the lower bound and sample complexity analysis.
%
%We have only scratched the surface and there are many open questions. Our lower
%bound is minimax and it would be interesting to understand the
%distribution-dependent hardness of this problem. We believe that our sample
%complexity bounds are unimprovable for elimination, however the performance of
%the UCB algorithms
%is much better in our experiments than that predicted by the upper bounds.
%Finally, one way to cluster the distributions in more than two clusters is to
%apply the max-gap identification algorithms recursively; however it would be
%interesting to see if there exist algorithms that can perform this clustering
%directly.
In this paper, we proposed the \maxgapbandit problem: a novel maximum-gap identification problem 
that can be used as a basic primitive for clustering and approximate ranking.
%The problem can be stated simply over general multi-armed bandit settings. 
Our analysis shows a novel hardness parameter for the problem, and our
experiments show 6-8x gains compared to non-adaptive algorithms. 
We use simple Hoeffding based confidence intervals in our analysis for
simplicity, but better bounds can be obtained using tighter confidence intervals
\citep{jamieson2014lil}.

\clearpage
\bibliographystyle{plainnat}
\bibliography{References}
\clearpage
\appendix

\section{Details for \cref{sec:simple algorithm}: Comparison to a Naive Algorithm}
\label{sec:appendix simple algorithm}
The naive algorithm first sorts the arms to determine the adjacent arms for
every arm, and then runs a best-arm identification bandit algorithm on the gaps
to identify the largest gap. An unbiased sample of the gap between two arms can
be obtained by taking the difference of the samples from the two arms. Here we
analyze the sample complexity of the naive algorithm for a general 
arrangement of the means.

Consider an arm $i \notin \{(m), (m+1)\}$, and let us analyze the number of
times arm $i$ is sampled by 
the naive algorithm. Let $\Delta_{i,j} =\mu_j-\mu_i$.
Then $\Delta_{i}^r = \min_{j:\Delta_{i,j}>0} \Delta_{i,j}$ is the right gap
of arm $i$ ($\Delta_i^l$ is defined analogously). In the first step of the naive algorithm, arm $i$ needs to be sampled at
least $(\Delta_i^r)^{-2}$ times to determine its right neighbor. Once the right
neighbor has
been determined, the best-arm identification requires at 
least $(\Delta_{\max}-\Delta_{i}^r)^{-2}$ samples to distinguish arm $i$'s right
gap from $\Delta_{\max}$. Since samples from the first step can be reused,
the minimum number of samples required by the naive algorithm to rule out arm
$i$'s right gap is $(\tilde{\gamma}_i^{r})^{-2}$ where
\begin{equation}
    \tilde{\ucbparameter}_i^r = \min_{j:\Delta_{i,j}>0} \{\Delta_{i,j},
\Delta_{\max}-\Delta_{i,j}\}
\label{eq:sample complexity naive algorithm}
\end{equation}
We can define $(\tilde{\gamma}_i^l)^{-2}$ analogously, and the naive algorithm
collects $\Omega(\tilde{\gamma}_i^{-2})$ from arm $i$, where $\tilde{\gamma}_i =
\min\{\tilde{\gamma}_i^r, \tilde{\gamma}_i^l\}$.

The hardness parameter of our active algorithms that is analogous to \eqref{eq:sample complexity naive
algorithm} is given by \eqref{eq:gamma-R}, repeated here for
convenience
\begin{equation}
\ucbparameter_i^r  \ := \ \max_{j:\Delta_{i,j}>0} \min\big\{\Delta_{i,j} \, , \, \Delta_{\rm
max}-\Delta_{i,j} \big\} \ .
\label{eq:gamma-R repeated}
\end{equation}
Comparing \eqref{eq:gamma-R repeated} to \eqref{eq:sample complexity naive
algorithm}, we see that $\gamma_i^r > \tilde{\gamma}_i^r$.

For the toy problem discussed in \cref{sec:simple algorithm}, if we assume that
$\Delta_{\min}< \Delta_{\max}/2$, we have that $\tilde{\gamma}_i =
\Delta_{\min}$, while $\gamma_i =
\Delta_{\max}/2\,\forall\,i\notin \{(m),(m+1)\}$, which results in $(\Delta_{\max}/\Delta_{\min})^2$ order
savings in the number of samples.

\section{Details for \cref{sec:prelim gap upper bound}: Confidence Bounds for Gaps}
\label{sec:gapbounds}
We first explain the mixed integer program formulation for obtaining the upper confidence
bounds on the mean gaps in \cref{sec:appendix mixed integer program}, and then
prove the validity of \cref{alg:gapucb} in \cref{sec:appendix gapucb algorithm
proof}.

\subsection{MIP Formulation of Confidence Bounds for Gaps}
\label{sec:appendix mixed integer program}
Conceptually, the confidence intervals on the arm means can be used to
construct upper
confidence bounds on the mean gaps $\{\Delta_i\}_{i\in [K]}$ in the
following manner.
Consider all possible configurations of the arm means that satisfy the confidence interval constraints in \eqref{eq:mean upper lower bound}. 
Each configuration fixes the gaps associated with any arm $a \in [K]$. 
Then the maximum gap value over all configurations is the upper confidence bound
on arm $a$'s gap; we denote it as $\gapucb_a$. 

If we focus on the right gap of
arm $a$, the above procedure is equivalent to solving the following optimization problem.
\begin{align}
%\gaprucb_a(t) \triangleq &\max (x - \mu_a'), \label{eq:realize}\\
    \gaprucb_a(t) \triangleq &\max\limits_{b \in [K] \setminus \{a\}}
    \max\limits_{\mu_1',\dots,\mu_K'} \mu_b' - \mu_a' \label{eq:realize}\\
%&\text{ subject to: } x \in \{\mu_i' \in \mathbb{R} : i \in [K]\setminus a\}, \label{eq:realize_constraints0}\\
    &\text{ subject to: } l_i(t) \le \mu_i' \le r_i(t) \quad \forall i \in [K], \text{ and} \label{eq:realize_constraints1}\\
    &\hphantom{\text{ subject to: }} \mu_i' \notin (\mu_a', \mu_b') \,\forall\,
    i \in [K] \setminus \{a,b\}.\label{eq:realize_constraints2}
\end{align}
Constraint \eqref{eq:realize_constraints1} ensures
that $\mu_i'$ is in the confidence interval for the mean of arm $i$ at time $t$,
and constraint \eqref{eq:realize_constraints2} ensures that arm $b$ is the right
neighbor of arm $a$.

The constraint \eqref{eq:realize_constraints2} is a sorting constraint that can only be formulated using
a binary variable. For an arm $i \in [K] \setminus \{a,b\}$ \eqref{eq:realize_constraints2} can be
formulated using a constant $M$ as 
\begin{subequations}
\begin{align}
    \mu_i' &\le \mu_a' + M(1-z_i), \label{eq:realize_constraints3a}\\
    \mu_i' &\ge \mu_b' - Mz_i, \label{eq:realize_constraints3b}\\
    z_i &\in \{0,1\}. \label{eq:realize_constraints3c}
\end{align}
\end{subequations}
The value of $M$ is chosen to be large number. Replacing constraint \eqref{eq:realize_constraints2} 
by constraints \eqref{eq:realize_constraints3a}, \eqref{eq:realize_constraints3b}, \eqref{eq:realize_constraints3c} 
for all $i \in [K]\setminus \{a,b\}$ gives an equivalent optimization problem 
whose optimum value is $\gaprucb_a(t)$. This can be seen to be true by 
considering the cases based on the value of $z_i$. If $z_i = 0, \mu_i' \geq \mu_b'$ 
and if $z_i = 1, \mu_i' \leq \mu_a'$. Because $M$ is chosen to be a large number, 
in either case $\mu_i' \notin (\mu_a', \mu_b')$ and constraint \eqref{eq:realize_constraints2} 
is satisfied. 
If constraint \eqref{eq:realize_constraints2} is satisfied, then a similar argument 
allows us to choose the value of $z_i$ that satisfies constraints 
\eqref{eq:realize_constraints3a} and \eqref{eq:realize_constraints3b}. 

\subsection{Validity of \cref{alg:gapucb}}
\label{sec:appendix gapucb algorithm proof}
In this section we find the value of $\gaprucb_a(t)$ as defined in \eqref{eq:realize} by first obtaining an upper bound to it. The proof of the upper bound is \emph{constructive} in nature, showing that the upper bound is actually achievable. That is, (a) there is a set of real numbers $\{\mu_i': i \in [K]\}$ which satisfy \eqref{eq:realize_constraints1}, (b) an index $a_\ast$ which satisfies \eqref{eq:realize_constraints2} with $x = \mu_{a_\ast}'$, such that $\gaprucb_a(t) = \mu_{a_\ast}' - \mu_a'$.

We first find an upper bound to the right gap of an arm $a$ assuming we know its true mean $\mu_a$, but only have confidence intervals for the means of the other arms $\mu_i \in [l_i(t), r_i(t)] \forall i \neq a$.
\begin{lemma}\label{propn:G^R}
If all the arm means are known, the right gap associated with an arm $a\in [K]$ is $\min_{i: \mu_i > \mu_a}\mu_i - \mu_a$; if the domain is empty we say that arm $a$'s right gap is $0$. For any $x \in \mathbb{R}$, define a function $G_a^r(\cdot)$ of the confidence intervals as follows.
\begin{equation*}%\label{eq:G^R}
G_a^r(x, t) \triangleq
\begin{cases}
\min_{j: l_j(t) > x} r_j(t) - x &\text{if } \{j: l_j(t) > x\} \neq \emptyset,\\
\max_{j \neq a} r_j(t) - x &\text{otherwise}.
\end{cases}
\end{equation*}
Suppose we know the value of arm $a$'s mean, i.e.\ $\mu_a$ and the confidence intervals $[l_i(t), r_i(t)] \forall i \neq a$. Then the largest possible right gap of arm $a$ is $G_a^r(\mu_a, t)$.
\end{lemma}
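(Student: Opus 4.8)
The plan is to establish the stated value by sandwiching: first I would show that $G_a^r(\mu_a,t)$ is an upper bound on the right gap of arm $a$ over every configuration of the remaining means that respects the confidence-interval constraints \eqref{eq:realize_constraints1}, namely $\mu_i \in [l_i(t),r_i(t)]$ for $i \neq a$, and then I would exhibit one configuration that attains it. The natural split is exactly the one appearing in the definition of $G_a^r$: whether the set $S := \{j : l_j(t) > \mu_a\}$ of arms \emph{forced} to lie strictly to the right of $\mu_a$ is nonempty or empty.

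For the upper bound, recall from the first part of the lemma that once all means are fixed the right gap of $a$ equals $\min_{i : \mu_i > \mu_a} \mu_i - \mu_a$, so it is governed by the \emph{nearest} arm on the right. When $S \neq \emptyset$, every $j \in S$ satisfies $\mu_j \ge l_j(t) > \mu_a$ and hence lies to the right; taking $j^\ast = \argmin_{j \in S} r_j(t)$ gives a right-neighbour candidate with $\mu_{j^\ast} \le r_{j^\ast}(t)$, so the nearest right neighbour, and thus the right gap, is at most $r_{j^\ast}(t) - \mu_a = G_a^r(\mu_a,t)$. When $S = \emptyset$, any arm placed to the right has mean at most $\max_{j \neq a} r_j(t)$, so the right gap is bounded by $\max_{j \neq a} r_j(t) - \mu_a = G_a^r(\mu_a,t)$ (and equals $0$ when no arm can be placed strictly to the right).

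For achievability I would give the matching explicit placement in each case. When $S \neq \emptyset$, put every $j \in S$ at its upper endpoint $\mu_j = r_j(t)$ and every remaining arm $i$ (which necessarily has $l_i(t) \le \mu_a$) at $\mu_i = l_i(t) \le \mu_a$; then only the arms of $S$ lie to the right, their nearest one sits at $\min_{j \in S} r_j(t)$, and the realised right gap is exactly $\min_{j \in S} r_j(t) - \mu_a$. When $S = \emptyset$, I would sacrifice a single arm $j^\ast = \argmax_{j \neq a} r_j(t)$, place it at $\mu_{j^\ast} = r_{j^\ast}(t)$, and send all other arms to $\mu_i = l_i(t) \le \mu_a$; the unique right neighbour is then $j^\ast$ and the gap equals $\max_{j \neq a} r_j(t) - \mu_a$. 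Both placements obey \eqref{eq:realize_constraints1} and meet the upper bounds above, closing the argument.

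The one genuine delicacy — and the step I would be most careful about — is the non-monotonicity in the $S = \emptyset$ case: since the right gap is set by the \emph{closest} right neighbour, it is \emph{not} maximised by pushing arms outward indiscriminately (doing so, by sending everything left, collapses the gap to $0$), but by emptying the right side except for the single arm whose interval reaches farthest. I would also flag the mild edge case in which arm $a$ is effectively rightmost (all $r_j(t) \le \mu_a$), where the true right gap is $0$ and the formula returns a non-positive number to be read as ``no right gap.'' Verifying that the ``can-go-left'' arms placed at $l_i(t)$ never intrude into the interval $(\mu_a, \mu_{j^\ast})$ is immediate because $l_i(t) \le \mu_a$, so no such arm lands on the right unintentionally.
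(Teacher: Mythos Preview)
Your proposal is correct and follows essentially the same approach as the paper: the identical case split on whether $S=\{j:l_j(t)>\mu_a\}$ is empty, and the same achieving configurations (arms in $S$ to their right endpoints, the rest to their left endpoints; or in the empty case, the single farthest-reaching arm to its right endpoint and the rest left). The only cosmetic difference is that the paper phrases the upper bound by contradiction (assuming the gap exceeds $G_a^r(\mu_a,t)$ and showing some $\mu_{j_\ast}$ would leave its interval), whereas you argue it directly from ``$j^\ast\in S$ is a forced right neighbour with $\mu_{j^\ast}\le r_{j^\ast}(t)$''; your version is slightly cleaner but logically equivalent.
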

\begin{proof}
We suppose that the right gap of arm $a$ is greater than the upper bound and show a contradiction to the good event \eqref{eq:valid confidence interval}.

\textbf{Case I:} $\{j: l_j(t) > \mu_a\} \neq \emptyset$. 
Identify the arm $j_\ast = \arg\min_{j: l_j(t) > \mu_a} r_j(t)$ such that $G_a^r(\mu_a,t) = r_{j_\ast}(t)-\mu_a$. Let the true right gap for arm $a$ be $\mu_k-\mu_a$. If $k = j_\ast$, then $\mu_k - \mu_a > r_{j_\ast}(t)-\mu_a$ would mean that $\mu_{j_\ast} > r_{j_\ast}(t)$, which is a contradiction. If $k \neq j_{\ast}$ and the right gap is $\mu_k-\mu_a$, then all arms $j \in [K]$ are such that $\mu_j \notin (\mu_a, \mu_k)$. But if $\mu_k-\mu_a > G_a^r(\mu_a, t)$ then $\mu_k > r_{j_\ast}(t)$, and from the domain in the definition of $j_\ast$, its left bound $l_{j_\ast}(t) > \mu_a$. Hence the confidence interval of $j_\ast$ satisfies $\mu_a < l_{j_\ast}(t) < r_{j_\ast}(t) < \mu_k$. If $\mu_{j_\ast} \notin (\mu_a,\mu_k)$ then $\mu_{j_\ast} \notin [l_{j_\ast}(t), r_{j_\ast}(t)]$ and that is a contradiction.

\textbf{Case II:} $\{j: l_j(t) > \mu_a\} = \emptyset$. 
Identify the arm $j_\ast = \arg\max_{j \neq a}r_j(t)$ such that $G_a^r(\mu_a, t) = r_{j_\ast}(t) - \mu_a$. Let the true right gap for arm $a$ be $\mu_k-\mu_a$. If $\mu_k - \mu_a > G_a^r(\mu_a, t)$ then $\mu_k > \max_{j \neq a}r_j(t)$ and that is a contradiction.

Thus the right gap of arm $a$ is at most $G_a^r(\mu_a, t)$. We can achieve this upper bound by choosing the set of means $\{\mu_i': i \in [K]\setminus a\}$ in the following manner. If the value of $G_a^r(\mu_a, t)$ is given by the first branch, set $\mu_i' = r_i(t) \forall i: r_i(t) > \mu_a$ and $\mu_i' = l_i(t) \forall i: l_i(t) < \mu_a$. Otherwise if the value is given by the second branch set $\mu_{a_\ast}' = r_{a_\ast}(t)$ for the arm $a_\ast \neq a$ which has the largest right bound, and set all other $\mu_i' = l_i(t)$ (\textit{c.f.\ }\cref{fig:rightgapucbfixedpos} in \cref{sec:prelim gap upper bound}).%\todoa{add line about achievability, Ref the para in algorithm section}
\end{proof} 
The \emph{left gap} analog of the above proposition can also be proved in a similar manner as above.
\begin{lemma}\label{propn:G^L}
For any $x \in \mathbb{R}$ and arm $a \in [K]$, define a function $G_a^l(\cdot)$ of the confidence intervals as follows.
\begin{equation}\label{eq:G^L}
G_a^l(x, t) \triangleq
\begin{cases}
x - \max_{j: r_j(t) < x} l_j(t) &\text{if } \{j: r_j(t) < x\} \neq \emptyset,\\
x - \min_{j \neq a} l_j(t) &\text{otherwise}.
\end{cases}
\end{equation}
Suppose we know $\mu_a$. Using the confidence intervals $[l_i(t), r_i(t)] \forall i \neq a$, an upper bound to the left gap of arm $a$ is $G_a^l(\mu_a, t)$.
\end{lemma}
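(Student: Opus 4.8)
The plan is to avoid repeating the two-case contradiction argument of \cref{propn:G^R} and instead deduce the left-gap bound from it by a reflection. First I would introduce the reflected bandit model obtained by negating every mean, $\nu_i := -\mu_i$. Under this map the left and right neighbors of arm $a$ are interchanged, so the left gap of arm $a$ at $\mu_a$ in the original problem equals the right gap of arm $a$ at $\nu_a = -\mu_a$ in the reflected problem; concretely, $\mu_a - \max_{i:\mu_i<\mu_a}\mu_i = \min_{i:\nu_i>\nu_a}\nu_i - \nu_a$. This reduces the claim to an application of \cref{propn:G^R} in the reflected coordinates.

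Next I would track how the confidence intervals transform and verify that $G_a^r$ in the reflected problem coincides, branch for branch, with $G_a^l$ in the original. If $\mu_i \in [l_i(t), r_i(t)]$ then $\nu_i \in [-r_i(t), -l_i(t)]$, so the reflected left and right boundaries of arm $i$ are $-r_i(t)$ and $-l_i(t)$. The branch condition $\{j: l_j^{\mathrm{refl}}(t) > \nu_a\}$ becomes $\{j: -r_j(t) > -\mu_a\} = \{j: r_j(t) < \mu_a\}$, which is exactly the condition distinguishing the two cases of $G_a^l$ in \eqref{eq:G^L}. In the first branch, $\min_{j: r_j(t)<\mu_a}\big(-l_j(t)\big) - (-\mu_a) = \mu_a - \max_{j: r_j(t)<\mu_a} l_j(t)$, and in the second, $\max_{j\neq a}\big(-l_j(t)\big) + \mu_a = \mu_a - \min_{j\neq a} l_j(t)$; both agree with \eqref{eq:G^L}. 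Keeping the substitution $l_i \leftrightarrow -r_i$ and $r_i \leftrightarrow -l_i$ explicit is what prevents sign errors, and this bookkeeping is really the only delicate point.

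With this dictionary established, I would invoke \cref{propn:G^R} on the reflected problem: the largest possible right gap of arm $a$ there is $G_a^r(\nu_a, t)$, which translates back to the statement that the left gap of arm $a$ in the original problem is at most $G_a^l(\mu_a, t)$. No probabilistic step needs redoing, since the good event \eqref{eq:valid confidence interval} is invariant under the relabeling $\mu_i \mapsto -\mu_i$ (it merely reflects every interval about the origin). The main obstacle is therefore purely clerical rather than conceptual; the reflection packages the symmetry cleanly. As an alternative, one could mirror the direct argument of \cref{propn:G^R} verbatim, assuming the left gap exceeds $G_a^l(\mu_a, t)$ and deriving a contradiction to \eqref{eq:valid confidence interval} in the two cases according to whether $\{j: r_j(t) < \mu_a\}$ is empty, together with the analogous achieving configuration (the arm with smallest left bound placed at $l_{a_\ast}(t)$, all others at their right boundaries) to show the bound is tight.
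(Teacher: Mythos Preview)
Your reflection argument is correct, and the bookkeeping you carry out---$l_i\leftrightarrow -r_i$, $r_i\leftrightarrow -l_i$, branch condition $\{j:l_j^{\mathrm{refl}}(t)>\nu_a\}=\{j:r_j(t)<\mu_a\}$---checks out in both cases of \eqref{eq:G^L}.

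The paper does not spell out a proof for \cref{propn:G^L}; it simply remarks that the left-gap analog ``can also be proved in a similar manner'' as \cref{propn:G^R}, i.e., by rerunning the two-case contradiction argument with left and right swapped. Your route is different in form: rather than repeating that argument, you encode the left/right symmetry as the involution $\mu_i\mapsto -\mu_i$ and invoke \cref{propn:G^R} once. What your approach buys is that the achievability part (the configuration realizing the bound) also comes for free by reflecting the achieving configuration from \cref{propn:G^R}, whereas the paper's implicit direct analog would have you restate it by hand---which you in fact sketch at the end as the alternative. Substantively the two are equivalent; your version just packages the symmetry explicitly.
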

We now replace our knowledge of the true mean value $\mu_a$ by the good event fact that $\mu_a \in [l_a(t), r_a(t)]$ at all times $t$. The following lemma is instrumental in arriving at an upper bound for the right gap of arm $a$ that is consistent with the all the arms' confidence intervals.
\begin{lemma}\label{lem:left_bd_right_gap}
At time $t$, for any arm $a$ its true mean $\mu_a \in [l_a(t), r_a(t)]$ in the good event. Define a subset of arms $\mathcal{I}_a^R(t) \triangleq \{i: l_i(t) \in [l_a(t), r_a(t)]\}$ whose left bounds lie within the confidence interval of arm $a$. Consider a set of $K$ real numbers $\mathcal{P}' \triangleq \{\mu_i' \in [l_i(t), r_i(t)]:i \in [K]\}$, each associated with a corresponding arm. 
The largest value for the right gap of arm $a$ if the means are $\mathcal{P}'$, i.e.,
\begin{equation*}
\max \{\mu_i' - \mu_a' : \mu_i' > \mu_a', \nexists \mu_j' \in (\mu_a',\mu_i'), i, j \in [K]\setminus a\}
\end{equation*} 
occurs when $\mu_a' = l_i(t)$ for some $i \in \mathcal{I}_a^R(t)$.
\end{lemma}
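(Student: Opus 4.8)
The plan is to reduce the optimization over the entire configuration $\mathcal{P}'$ to a one-dimensional problem in the single variable $\mu_a'$, and then exploit the simple piecewise structure of the resulting objective. First I would invoke \cref{propn:G^R}: for any fixed value $x := \mu_a' \in [l_a(t), r_a(t)]$, the largest right gap of arm $a$ achievable by choosing $\mu_i' \in [l_i(t), r_i(t)]$ for all $i \neq a$ is exactly $G_a^r(x, t)$. Consequently, maximizing the right gap over all admissible $\mathcal{P}'$ is the same as maximizing the scalar function $x \mapsto G_a^r(x,t)$ over $x \in [l_a(t), r_a(t)]$, and it suffices to show this maximum is attained at $x = l_i(t)$ for some $i \in \mathcal{I}_a^R(t)$ (note $a \in \mathcal{I}_a^R(t)$, so $x = l_a(t)$ is an allowed candidate).

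The key observation is that $G_a^r(\cdot, t)$ is piecewise affine with slope $-1$ on each piece. Restricting to $x \ge l_a(t)$, the active set $\{j : l_j(t) > x\}$ changes only as $x$ crosses one of the left confidence bounds $l_j(t)$; between consecutive such breakpoints the set is constant, so in the first branch $G_a^r(x,t) = (\min_{j : l_j(t) > x} r_j(t)) - x$ is a constant minus $x$, hence strictly decreasing, and likewise in the second (``otherwise'') branch $G_a^r(x,t) = (\max_{j \neq a} r_j(t)) - x$ is strictly decreasing. Therefore on every maximal subinterval on which $G_a^r(\cdot, t)$ is affine, its maximum is attained at the left endpoint of that subinterval. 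The left endpoints arising inside $[l_a(t), r_a(t)]$ are $l_a(t)$ together with the breakpoints $l_j(t) \in (l_a(t), r_a(t)]$, and each such point is of the form $l_i(t)$ with $i \in \mathcal{I}_a^R(t)$. Taking the maximum over these finitely many candidate left endpoints gives the claimed maximizer, and combined with \cref{propn:G^R} this yields exactly \eqref{eq:right_gap_UB}.

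The step I expect to require the most care is the transition between the two branches of $G_a^r$ and the associated edge cases, since these control whether the reduction stays clean. Letting $L^\ast = \max_{j \neq a} l_j(t)$, the ``otherwise'' branch is active precisely for $x \ge L^\ast$ (when $x \ge l_a(t)$), so the branch switch occurs at the breakpoint $x = L^\ast$; I would split into the cases $L^\ast \ge r_a(t)$ (only the first branch is active on $[l_a(t), r_a(t)]$), $l_a(t) \le L^\ast < r_a(t)$ (both branches, with the switch point $L^\ast = l_{j^\ast}(t)$ itself lying in $\mathcal{I}_a^R(t)$), and $L^\ast < l_a(t)$ (only the ``otherwise'' branch, where $\mathcal{I}_a^R(t) = \{a\}$ and the maximizer is simply $x = l_a(t)$). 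In each case the candidate left endpoints remain of the form $l_i(t)$ with $i \in \mathcal{I}_a^R(t)$, so the branch switch never introduces an interior maximizer. A minor point to verify is the strict inequality in $\{j : l_j(t) > x\}$ at breakpoints, which makes each upward jump occur as an arm drops out of the active set; this only reinforces that maxima sit at left endpoints and does not affect the conclusion.
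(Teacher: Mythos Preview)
Your proposal is correct but takes a genuinely different route from the paper. You reduce to a one-dimensional problem by invoking \cref{propn:G^R} and then exploit the piecewise-affine, slope $-1$ structure of $x \mapsto G_a^r(x,t)$ to conclude that the maximum over $[l_a(t), r_a(t)]$ must sit at one of the breakpoints $l_i(t)$ with $i \in \mathcal{I}_a^R(t)$. The paper, by contrast, never invokes \cref{propn:G^R} here; it argues directly on the full configuration $\mathcal{P}'$ by a shifting/improvement argument: assuming the optimal $\mu_a'$ is not at any such $l_i(t)$, it identifies the nearest breakpoint $l_{i_a}(t)$ below $\mu_a'$, moves $\mu_a'$ and every $\mu_j' \in [l_{i_a}(t), \mu_a']$ down to $l_{i_a}(t)$, verifies the new configuration still respects all confidence intervals, and shows the right gap strictly increases, a contradiction. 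Your approach is cleaner and makes the connection to \eqref{eq:right_gap_UB} immediate, at the cost of relying on the previously proved \cref{propn:G^R}; the paper's approach is self-contained and constructive but somewhat more delicate in checking feasibility of the shifted configuration. Both are valid, and your observation that the upward jumps of $G_a^r$ at breakpoints only reinforce the left-endpoint conclusion is exactly the right way to handle the strict inequality in $\{j : l_j(t) > x\}$.
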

\begin{proof}
Suppose the largest right gap occurs when $\mu_a' \neq l_i(t)$ for any $i \in \mathcal{I}_a^R(t)$. Note that $a \in \mathcal{I}_a^R(t)$ and hence the set is not empty. We show that the right gap can be larger while still satisfying event \eqref{eq:valid confidence interval}. Let $l_{i_a}(t) = \max_{i \in \mathcal{I}_a^R(t)}\{l_i(t) < \mu_a'\}$. Collect all arms in the set $\mathcal{J}_a = \{j : \mu_{j}' \in [l_{i_a}(t), \mu_a']\}$. Consider an alternate bandit model whose arm means are denoted by $\mathcal{Q}\triangleq \{q_i : i \in [K]\}$. We assign
\begin{align*}
q_i = l_{i_a}(t) \: \forall i \in \mathcal{J}_a \text{ and } q_i = \mu_i'\: \forall i \notin \mathcal{J}_a.
\end{align*}
This mean assignment satisfies $q_i \in [l_i(t), r_i(t)] \: \forall i \in [K]$. This is because by definition of arm $i_a$ in the original bandit model $\mathcal{P}'$, for all arms $j \in \mathcal{J}_a$ their left bounds satisfy $l_j(t) \leq l_{i_a}(t)$. Thus both the original $\mathcal{P}'$ and the alternate $\mathcal{Q}$ are possible bandit models in the good event \eqref{eq:valid confidence interval} up till current time $t$. 
However, the right gap for $a$ is larger in the alternate model $\mathcal{Q}$ as shown next. Let arm $i$ result in the right gap for $a$ in the original model $\mathcal{P}'$, i.e., the right gap is
\begin{equation*}
\mu_i' - \mu_a', \text{ and } \nexists \mu_j' \in (\mu_a', \mu_i').
\end{equation*}
Then in the alternate model, $q_i = \mu_i', q_a = l_{i_a}(t)$ and there is no mean $q_j \in (l_{i_a}(t), \mu_i')$. Then the right gap of arm $a$ is $\mu_i' - l_{i_a}(t) > \mu_i' - \mu_a'$. This contradicts the supposition that the right gap is the largest possible in the original bandit model $\mathcal{P}'$.
\end{proof}
An analogous lemma for the \emph{left gap} states that for any set of possible arm means $\mathcal{P}'$ that are consistent with the current confidence intervals, the largest possible left gap of arm $a$ occurs when $\mu_a' = r_i(t)$ for some arm $i \in \mathcal{I}_a^L(t) \triangleq \{i: r_i(t) \in [l_a(t), r_a(t)]\}$. Using the above, we can state the upper bound for the gap of an arm $a$ in terms of all the confidence intervals as follows.
\begin{theorem}\label{thm:gap_UCB}
At any time $t$, denote the upper bound to the right (\textit{resp.\ }left) gap of arm $a$ by $\gaprucb_a(t)$ (\textit{resp.\ }$\gaplucb_a(t)$). The expressions for these upper bounds in terms of the confidence intervals and the functions $G_a^r(\cdot), G_a^l(\cdot)$ in \cref{propn:G^R},~\cref{propn:G^L} are as follows.
\begin{align}
\gaprucb_a(t) &\triangleq \max\{G_a^r(l_j(t), t): l_j(t) \in [l_a(t), r_a(t)]\}, \nonumber\\
\gaplucb_a(t) &\triangleq \max\{G_a^l(r_j(t), t): r_j(t) \in [l_a(t), r_a(t)]\}. \label{eq:left_gap_UB}
\end{align}
Then an upper bound to the gap associated with arm $a$ at time $t$ is $\max\{\gaprucb_a(t), \gaplucb_a(t)\}$. \cref{alg:gapucb} gives pseudocode that evaluates $\gaprucb_a(t)$.
\end{theorem}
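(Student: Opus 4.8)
The plan is to show that the closed-form expression for $\gaprucb_a(t)$ in the theorem statement coincides with the optimum of the mixed-integer program \eqref{eq:realize}, and hence — since under the good event \eqref{eq:valid confidence interval} the true means form a feasible point of that program — that it is a valid upper bound on arm $a$'s right gap. The three preceding lemmas carry out essentially all of the technical work, so the task is mainly to assemble them in the correct order; the symmetric claim for $\gaplucb_a(t)$ then follows by repeating the argument with the left-gap analogs, and combining the two upper bounds yields the bound on $\Delta_a = \max\{\Delta_a^r, \Delta_a^l\}$.

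First I would read the MIP \eqref{eq:realize} as a nested maximization: fix the position $\mu_a' = x \in [l_a(t), r_a(t)]$ of arm $a$, maximize the right gap over the remaining feasible means subject to \eqref{eq:realize_constraints1}, and then maximize over $x$. For the inner problem, \cref{propn:G^R} — applied with the ``known mean'' $\mu_a$ replaced by the candidate position $x$, which is legitimate because that lemma's argument invokes only the confidence intervals of the arms $i \neq a$ — shows that the inner maximum equals $G_a^r(x,t)$, and moreover exhibits an explicit placement of the other means achieving it. Thus $\gaprucb_a(t) = \max_{x \in [l_a(t), r_a(t)]} G_a^r(x,t)$, where I emphasize that I use the \emph{achievability} half of \cref{propn:G^R}, not merely its upper-bound half, so that the nested maxima produce an equality with the MIP optimum rather than only an inequality.

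Next I would reduce this continuous maximization over $x$ to the finite candidate set appearing in the theorem. \cref{lem:left_bd_right_gap} states precisely that the configuration maximizing the right gap of $a$ places $\mu_a' = l_i(t)$ for some $i \in \mathcal{I}_a^R(t) = P_a^r$, so the outer maximum over $x$ is attained on $\{l_j(t): l_j(t) \in [l_a(t), r_a(t)]\}$. Combining the two steps gives $\gaprucb_a(t) = \max\{G_a^r(l_j(t),t): j \in P_a^r\}$, which is exactly the claimed expression. The left-gap identity follows verbatim from \cref{propn:G^L} together with the left-gap analog of \cref{lem:left_bd_right_gap} noted in the surrounding text. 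Finally, since $\Delta_a = \max\{\Delta_a^r, \Delta_a^l\}$ and, in the good event, the true means satisfy \eqref{eq:realize_constraints1}, each of $\gaprucb_a(t)$ and $\gaplucb_a(t)$ upper-bounds the corresponding true gap, so their maximum upper-bounds $\Delta_a$.

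The step requiring the most care is the decomposition underlying the second paragraph: one must check that the achieving configuration supplied by \cref{propn:G^R} for the inner problem is consistent with the localization of $\mu_a'$ provided by \cref{lem:left_bd_right_gap}, so that the two optimality arguments combine into a single feasible maximizer rather than conflicting. They do not conflict, because \cref{lem:left_bd_right_gap} constrains only the position of arm $a$ while \cref{propn:G^R} fixes that position and optimizes the remaining arms; nonetheless I would spell out explicitly that the optimal $x$ lands in $[l_a(t), r_a(t)]$ (so the constraint on arm $a$ itself is respected) and that the placement of the other arms from \cref{propn:G^R} stays within their confidence intervals, closing the loop between the MIP optimum and the dynamic-programming formula that \cref{alg:gapucb} evaluates.
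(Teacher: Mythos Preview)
Your proposal is correct and follows essentially the same route as the paper's proof: both combine \cref{lem:left_bd_right_gap} (to localize the optimal $\mu_a'$ to the finite set $\{l_j(t):j\in\mathcal{I}_a^R(t)\}$) with \cref{propn:G^R} (to evaluate the inner maximum at each such position), then invoke achievability and the symmetric left-gap analog. Your explicit framing of the MIP \eqref{eq:realize} as a nested maximization over $x=\mu_a'$ and then over the remaining means is a bit more structured than the paper's prose, but the logical content is the same.
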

\begin{proof}
We argue for the right gap, an analogous proof gives the statement for the left gap. 
At any time $t$ in the good event $\mu_i \in [l_i(t), r_i(t)] \forall i {\in} [K]$, in particular any number in the range $[l_a(t), r_a(t)]$ can be potentially the mean of arm $a$. From \cref{lem:left_bd_right_gap}, we know that for a set of numbers $\mathcal{P}'$ that satisfy all current confidence intervals and also maximize the right gap for arm $a$, the value $\mu_a' = l_i(t)$ for some left bound $l_i(t) \in [l_a(t), r_a(t)]$. If $\mu_a' = l_i(t)$ then by \cref{propn:G^R} $G_a^r(l_i(t), t)$ is the largest possible value for arm $a$ in the bandit model $\mathcal{P}'$. Taking the maximum over all arms in the set $\mathcal{I}_a^R(t) = \{i \in [K]: l_i(t) \in [l_a(t), r_a(t)]\}$, we get the right gap upper bound $\gaprucb_a(t)$.

We note that the value $\gaprucb_a(t)$ is achievable by an assignment of means that satisfy the confidence bounds at time $t$. Without loss of generality, assume $\gapucb_a(t) = \gaprucb_a(t) = G_a^r(l_{a_\ast}(t), t)$ for some arm $a_\ast$. One can assign $\mu_a = l_{a_\ast}(t)$ and other means in a way similar to that in the proof of \cref{propn:G^R} to obtain a right gap for arm $a$ equal to the value $G_a^r(l_{a_\ast}(t), t)$. 
\end{proof}
%\begin{algorithm}[t!]
%    \caption{Procedure to find $\gapucb_a(t)$}
%    \label{alg:gapucb}
%    \begin{algorithmic}[1]
%        \STATE Initialize $\gaprucb_a(t) \leftarrow 0, \gaplucb_a(t) \leftarrow 0$
%        \FORALL{$i \in [K]$}
%        \IF[// Compute $G_a^r(l_i(t), t)$ defined in \eqref{eq:G^R}]{$l_i(t) \in [l_a(t), r_a(t)]$}
%        %\STATE $\mathsf{tempR} \leftarrow 0$
%        \IF[// First branch of \eqref{eq:G^R} is active]{$\{j: l_j(t) > l_i(t)\} \neq \emptyset$}
%        \STATE $\mathsf{tempR} \leftarrow \min_{\{j: l_j(t) > l_i(t)\}} r_j(t) - l_i(t)$
%        \ELSE[// Second branch of \eqref{eq:G^R} is active]
%        \STATE $\mathsf{tempR} \leftarrow \max_{j \neq i}r_j(t) - l_i(t)$
%        \ENDIF
%        \STATE $\gaprucb_a(t) \leftarrow \max\{\gaprucb_a(t), \mathsf{tempR}\}$
%        \ENDIF
%        \IF[// Compute $G_a^l(r_i(t), t)$ defined in \eqref{eq:G^L}]{$r_i(t) \in [l_a(t), r_a(t)]$}
%        %\STATE $\mathsf{tempL} \leftarrow 0$
%        \IF[// First branch of \eqref{eq:G^L} is active]{$\{j: r_j(t) < r_i(t)\} \neq \emptyset$}
%        \STATE $\mathsf{tempL} \leftarrow r_i(t)- \max_{\{j: r_j(t) < r_i(t)\}} l_j(t)$
%        \ELSE[// Second branch of \eqref{eq:G^L} is active]
%        \STATE $\mathsf{tempL} \leftarrow r_i(t) - \min_{j \neq i}l_j(t)$
%        \ENDIF
%        \STATE $\gaplucb_a(t) \leftarrow \max\{\gaplucb_a(t), \mathsf{tempL}\}$
%        \ENDIF
%        \ENDFOR
%        \RETURN $\gapucb_a(t) \leftarrow \max\{\gaprucb_a(t), \gaplucb_a(t)\}$
%    \end{algorithmic}
%\end{algorithm}

\section{Details for \cref{sec:analysis}: Accuracy}%: Proof of \cref{thm:accuracy}}
\TheoremAccuracy*
\begin{proof}
    Recall that the true maximum gap exists between arms $(m)$ and $(m+1)$. The algorithms return a wrong clustering $\gapucb_{(m)}(t) <
    \gaplcb(t)$ for any time $t$. We show that this leads to a contradiction if the good event \eqref{eq:valid confidence interval} holds. 

    Assume \eqref{eq:valid confidence interval} holds and $\gapucb_{(m)}(t) < \gaplcb(t)$ at some time $t$. 
        Recall that $\gaplcb(t)$ is computed using $\eqref{eq:max gap lower bound}$,
    and let $(s)_t$ be the maximizer in \eqref{eq:max gap lower
    bound}. Let $a$ be such that $a \in \{(1)_t,\dots,(s)_t\}$ and $a+1 \in
    \{(s+1)_t, \dots, (K)_t\}$. If $(3)$ holds, we have that
    \begin{align*}
        \Delta_{\max} &\le \gapucb_{(m)}(t) < \gaplcb(t) \overset{(a)}{\le} l_a(t) - r_{a+1}(t) \le \mu_a - \mu_{a+1},
    \end{align*}
    where (a) holds because $\gaplcb(t)$ is the minimum gap between a left
    confidence interval in $\{(1)_t, \dots, (s)_t\}$ and a right confidence
    interval in $\{(s+1)_t, \dots, (K)_t\}$. This contradicts the fact that
    $\Delta_{\max}$ is the largest gap.
\end{proof}

\section{Sample Complexity: Proof of \cref{thm:sample complexity}}
To state our sample complexity bounds we use a constant $\alpha$ defined as follows \citep{even2006action}.
\begin{remark}
    \label{remark:confidence interval sample complexity}
    %The confidence interval $c_s$ after $s$ samples is given by \eqref{eq:mean pper lower bound}, 
    There exists constant $\alpha$ such that for
    all $x>0$, if the number of samples $s \ge \alpha \frac{\log (K/\delta
    x)}{x^2}$, then $c_s \le x$, where $c_s$ is the confidence interval given by
    \eqref{eq:mean upper lower bound}.
\end{remark}
\subsection{Sample Complexity of $\maxgapelim$}
\label{sec:appendix sample complexity maxgapelim}
\textbf{Early Stopping Rule for Clustering}:
In the pseudocode in \cref{alg:maxgapelim}, $\maxgapelim$ stops when the size of
the active set $|A|\le 2$ (line $7$). However, if we are only interested in
clustering the arms according to the maximum gap and not interested in the
identities of the arms which share the maximum gap (arms $(m), (m+1)$), we can
stop earlier as follows. Assume that \eqref{eq:max gap lower bound} is
greater than 0 and  let $(k_\ast)_t$ be the maximizer.   This
partitions the arms into the sets $\{(1)_t,\dots, (k_\ast)_t\}$ and
$\{(k_\ast+1)_t,\dots,(K)_t\}$. 
$\maxgapelim$ can terminate when the maximum left gap of all
arms in $\{(1)_t,\dots, (k_\ast)_t\}$ and  
the maximum right gap of all arms in $\{(k_\ast+1)_t,\dots,K\}$ are both less
than the lower bound $\gaplcb(t)$. The termination condition can be
expressed as $S=1$, where
\begin{align}
    S = 1\{\gaprucb_a(t) < \gaplcb(t) , \, \forall\,a:l_a(t)\ge l_{(k_\ast)_t}(t \} \cdot
    1\{\gaplucb_a(t) < \gaplcb(t)  , \, \forall\,a:r_a(t) \le l_{(k_\ast+1)_t}(t) \}.
    \label{eq:stopping condition}
\end{align}

To account for the lower sample complexity as a result of the stopping rule for
clustering, we modify \eqref{eq:gammair definition} and \eqref{eq:gammail
definition} and define new parameters that yield an improved sample complexity
than that stated in \cref{thm:sample complexity}. Define
\begin{align}
    \elimparameter^r_a &= \max\big\{ &\max_{j:\Delta_{a,j}>0} \left(
    \min\{\Delta_{a,j}/4, ((\Delta_{\max}-\Delta_{a,j})/8)\} \right),
((\Delta_{\max}-\Delta_{a,1})/8) \big\} \label{eq:right gap parameter}, \\
    \elimparameter^l_a &= \max\big\{ &\max_{j:\Delta_{a,j}<0} \left(
                                      \min\{\Delta_{a,j}/4,
                                      ((\Delta_{\max}-\Delta_{j,a})/8)\}
                                      \right),
                                      ((\Delta_{\max}-\Delta_{a,K})/8)
                                  \big\},  \label{eq:left gap parameter}
\end{align}
where just like in \eqref{eq:gammair definition}, the maxima assumed to be infinity if there is no $j$ that
satisfies the constraint under the inner maximization. We define $\elimparameter_a =
\min\{\elimparameter_a^r, \elimparameter_a^l\}$ as before and state our improved sample
complexity bound for $\maxgapelim$ next.

\begin{theorem}
    With probability at least $1-\delta$, the sample complexity of $\maxgapelim$ is bounded by
    $$H = \alpha \sum_{\substack{a \in [K]: \\ 
    a \notin \{(m),(m+1)\}}} 
    \frac{\log (K/\delta \elimparameter_a)}{\elimparameter_a^2}.$$
    %where $\elimparameter_a$ is defined in \eqref{eq:gap parameter}.
    \label{thm:elimination sample complexity}
\end{theorem}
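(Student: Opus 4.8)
The plan is to condition throughout on the good event \eqref{eq:valid confidence interval}, which holds with probability at least $1-\delta$, and to bound the number of rounds each arm survives in the active set $A$. The key structural simplification is that $\maxgapelim$ only ever shrinks $A$, so an arm still active at round $t$ has been sampled in every prior round; hence $T_a(t)=t$ and \emph{all} active arms share the common confidence half-width $c_t$. By \cref{remark:confidence interval sample complexity}, $c_t\le x$ once $t\ge \alpha\log(K/\delta x)/x^2$. Writing $t_a:=\alpha\log(K/\delta\elimparameter_a)/\elimparameter_a^2$, it therefore suffices to show that any non-boundary arm $a$ satisfies $\gapucb_a(t)<\gaplcb(t)$ at every round $t\ge t_a$ at which it is still active, since line 6 then eliminates it, giving $T_a\le t_a$.

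First I would lower bound $\gaplcb(t)$. Because $(m)$ and $(m+1)$ are never eliminated, they too have width $c_t$. Since every defining term of $\elimparameter_a$ in \eqref{eq:right gap parameter}--\eqref{eq:left gap parameter} is at most $\Delta_{\max}/4$, the requirement $c_t\le\elimparameter_a$ already forces $c_t<\Delta_{\max}/2$, and then the good event makes every empirical mean of $C_1$ exceed every empirical mean of $C_2$, so the split at rank $m$ is available in \eqref{eq:max gap lower bound}. Converting $l_a(t),r_a(t)$ to true means via the good event ($\hat\mu\ge\mu-c_t$, $\hat\mu\le\mu+c_t$), the rank-$m$ term of \eqref{eq:max gap lower bound} is at least $(\mu_{(m)}-2c_t)-(\mu_{(m+1)}+2c_t)$, yielding $\gaplcb(t)\ge\Delta_{\max}-4c_t$.

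The heart of the argument is the matching upper bound on $\gaprucb_a(t)$ (the left gap being symmetric). Let $j^\ast$ attain the inner maximum in \eqref{eq:right gap parameter}, so $\Delta_{a,j^\ast}>0$ and $\elimparameter_a^r=\min\{\Delta_{a,j^\ast}/4,(\Delta_{\max}-\Delta_{a,j^\ast})/8\}$. When $c_t\le\elimparameter_a\le\Delta_{a,j^\ast}/4$, the good event places the interval of $j^\ast$ strictly to the right of that of $a$, i.e.\ $l_{j^\ast}(t)>r_a(t)$; hence $j^\ast$ lies in the minimizing set defining $G_a^r(x,t)$ in \eqref{eq:G^R} for every candidate $x=l_i(t)\in[l_a(t),r_a(t)]$, and \cref{propn:G^R} with \cref{thm:gap_UCB} give $\gaprucb_a(t)\le r_{j^\ast}(t)-l_a(t)\le\Delta_{a,j^\ast}+4c_t$. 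Combining with the previous paragraph, the factor $8$ in $(\Delta_{\max}-\Delta_{a,j^\ast})/8$ is exactly what makes $\Delta_{a,j^\ast}+4c_t<\Delta_{\max}-4c_t\le\gaplcb(t)$ whenever $c_t\le\elimparameter_a^r$, so $\gaprucb_a(t)<\gaplcb(t)$; edge arms with no interior witness are covered by the outer term $(\Delta_{\max}-\Delta_{a,1})/8$ using the topmost arm as helper, or by the $\infty$ convention when a one-sided gap is vacuous. I expect this step to be the main obstacle: one must exploit the combinatorial form of $\gaprucb_a(t)$ so that a \emph{single} helper $j^\ast$ certifies the bound $\gaprucb_a(t)\le r_{j^\ast}(t)-l_a(t)$ simultaneously for all admissible positions $x$ of arm $a$, and the constants $\tfrac14,\tfrac18$ must be chosen so that interval separation and the comparison against $\gaplcb(t)$ hold at once.

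Finally I would assemble the total. Every non-boundary arm is eliminated by round $t_a$, so $T_a\le t_a$, while the two boundary arms stay active only until $|A|=2$, i.e.\ until the slowest non-boundary arm is removed, giving $T_{(m)},T_{(m+1)}\le t^\ast:=\max_{a\notin\{(m),(m+1)\}}t_a$. Thus $\sum_a T_a\le \sum_{a\notin\{(m),(m+1)\}}t_a+2t^\ast\le 3\sum_{a\notin\{(m),(m+1)\}}t_a$, and absorbing the constant factor into $\alpha$ gives the claimed bound $H$; the early stopping test \eqref{eq:stopping condition} can only trigger sooner (its one-sided per-arm conditions are implied by the two-sided elimination certified above), so it never increases the count.
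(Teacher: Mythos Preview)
Your proposal is correct and takes essentially the same route as the paper: the paper's proof reduces to showing that $c_t\le\elimparameter_a^r$ forces $\gaprucb_a(t)<\gaplcb(t)$ (their \cref{lem:right gap elimination}) and symmetrically for the left gap, via the same lower bound $\gaplcb(t)\ge\Delta_{\max}-4c_t$ and the same helper-arm bound $\gaprucb_a(t)\le r_{j^\ast}(t)-l_a(t)\le\Delta_{a,j^\ast}+4c_t$ once $l_{j^\ast}(t)>r_a(t)$. One cosmetic point: the paper does not frame the second branch (the outer term $(\Delta_{\max}-\Delta_{a,1})/8$) as being only for ``edge arms''---it can be the operative term of $\elimparameter_a^r$ for any $a$, and the paper handles it by taking $e=\argmax_{i\ne a} r_i(t)$ so that $\gaprucb_a(t)\le r_e(t)-l_a(t)\le\Delta_{a,1}+4c_t\le\Delta_{\max}-4c_t$ regardless of whether an interior witness $j^\ast$ exists. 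Your final paragraph, bounding $T_{(m)},T_{(m+1)}$ by $\max_a t_a$ and absorbing the resulting factor~$3$ into $\alpha$, supplies a bookkeeping step the paper's terse ``the result then follows by \cref{remark:confidence interval sample complexity}'' leaves implicit.
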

\begin{proof}
    Arm $a$ is eliminated in $\maxgapelim$ when $\gapucb_a(t) < \gaplcb(t)$, whee
    $\gapucb_a(t)$ is defined as the maximum of 
    the left and right gap upper bounds (see \cref{sec:prelim gap upper bound}). \cref{lem:right gap
      elimination} and \cref{lem:left gap elimination} 
    prove that the sufficient condition for each of these upper bounds to be les than
    $\gaplcb(t)$ is $c_{T_a(t)} \le \elimparameter_a$. The result then follows
    by \cref{remark:confidence interval sample complexity}.
\end{proof}
\begin{lemma}
    If the good event \eqref{eq:valid confidence interval} holds, then for all $a \in [K]$, for all $t \in \mathbb{N}$,
    \begin{align*}
        l_a(t) \ge \mu_a-2c_{T_a(t)} \text{ and } r_a(t) \le \mu_a+2c_{T_a(t)}
    \end{align*}
    where $c_s = \sqrt{\frac{\beta_\delta(s)}{s}}$.
    \label{lem:simple ci bounds}
\end{lemma}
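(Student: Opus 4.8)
The plan is to derive both inequalities directly from the good event \eqref{eq:valid confidence interval} together with the explicit form of the confidence bounds in \eqref{eq:mean upper lower bound}. The starting point is that under the good event, for every arm $a$ and every time $t$ we have $\mu_a \in [l_a(t), r_a(t)]$, which by the definitions $l_a(t) = \hat\mu_a(t) - c_{T_a(t)}$ and $r_a(t) = \hat\mu_a(t) + c_{T_a(t)}$ is equivalent to the one-sided deviation bounds $\hat\mu_a(t) - c_{T_a(t)} \le \mu_a$ and $\mu_a \le \hat\mu_a(t) + c_{T_a(t)}$, i.e.\ $|\hat\mu_a(t) - \mu_a| \le c_{T_a(t)}$.

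With this concentration statement in hand, I would bound $l_a(t)$ from below by substituting the good-event inequality $\hat\mu_a(t) \ge \mu_a - c_{T_a(t)}$ into its definition:
\begin{equation*}
l_a(t) = \hat\mu_a(t) - c_{T_a(t)} \ge \left(\mu_a - c_{T_a(t)}\right) - c_{T_a(t)} = \mu_a - 2c_{T_a(t)}.
\end{equation*}
Symmetrically, using $\hat\mu_a(t) \le \mu_a + c_{T_a(t)}$ gives
\begin{equation*}
r_a(t) = \hat\mu_a(t) + c_{T_a(t)} \le \left(\mu_a + c_{T_a(t)}\right) + c_{T_a(t)} = \mu_a + 2c_{T_a(t)}.
\end{equation*}
Because the good event holds simultaneously for all $a \in [K]$ and all $t \in \mathbb{N}$, both bounds hold uniformly, which is exactly the claim.

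The proof is essentially a one-line chaining of inequalities, so there is no substantive obstacle; the only point worth stating explicitly is that it is the good event that converts the two-sided containment $\mu_a \in [l_a(t), r_a(t)]$ into the deviation bound $|\hat\mu_a(t) - \mu_a| \le c_{T_a(t)}$, after which the factor of two appears simply because $l_a(t)$ and $r_a(t)$ are themselves offset from $\hat\mu_a(t)$ by an additional $c_{T_a(t)}$. The rewriting of the constant as $c_s = \sqrt{\beta_\delta(s)/s}$ with $\beta_\delta(s) = \log(4Ks^2/\delta)$ is purely notational and does not affect the argument.
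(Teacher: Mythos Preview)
Your proof is correct and follows essentially the same argument as the paper: both use the good event to obtain $\hat\mu_a(t)+c_{T_a(t)}\ge \mu_a$ and $\hat\mu_a(t)-c_{T_a(t)}\le \mu_a$, then substitute into the definitions of $l_a(t)$ and $r_a(t)$ to pick up the factor of two.
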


\begin{proof}
    We have 
    \begin{align*}
        \hat{\mu}_a(t) + c_{T_a(t)} \overset{(a)}{\ge} \mu_a \Rightarrow l_a(t) = \hat{\mu}_a(t)-c_{T_a(t)} &\ge \mu_a-2c_{T_a(t)}. 
    \end{align*}
    Similarly,
    \begin{align*}
        \hat{\mu}_a(t)-c_{T_a(t)} \overset{(a)}{\le} \mu_a \Rightarrow r_a(t) = \hat{\mu}_a(t)+c_{T_a(t)} &\le \mu_a+2c_{T_a(t)}.
    \end{align*}
    In both the equations above, $(a)$ holds by \eqref{eq:valid confidence interval}.
\end{proof}

\begin{lemma}
    \label{lem:right gap elimination}
    Assume \eqref{eq:valid confidence interval} holds, and consider $a \neq
    m+1$. In \maxgapelim if $t$ is such that $c_{T_a(t)} \le \elimparameter^r_a$, then
    $$\gaprucb_a(t) < \gaplcb(t).$$ 
\end{lemma}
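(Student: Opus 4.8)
The plan is to bound $\gaprucb_a(t)$ from above and $\gaplcb(t)$ from below under the good event \eqref{eq:valid confidence interval}, and to show the former is strictly smaller. Two facts drive everything: \cref{lem:simple ci bounds} (each mean lies within $2c_{T_a(t)}$ of both endpoints of its interval), and a structural property of \maxgapelim: since it samples every active arm once per round and arms $(m),(m+1)$ are never eliminated (their gap upper bound always contains $\Delta_{\max}$, which dominates $\gaplcb(t)$), we have $T_{(m)}(t)=T_{(m+1)}(t)=t\ge T_a(t)$ and hence $c_{T_{(m)}(t)},c_{T_{(m+1)}(t)}\le c_{T_a(t)}\le\elimparameter^r_a$. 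More generally, every arm still active at time $t$ has confidence width no larger than that of $a$.

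First I would establish the lower bound $\gaplcb(t)\ge \Delta_{\max}-4c_{T_a(t)}$. The cut in \eqref{eq:max gap lower bound} that separates the true clusters $C_1$ and $C_2$ evaluates to $\min_{b\in C_1}l_b(t)-\max_{b\in C_2}r_b(t)$; by \cref{lem:simple ci bounds} this is at least $\mu_{(m)}-\mu_{(m+1)}-2\max_{b\in C_1}c_{T_b(t)}-2\max_{b\in C_2}c_{T_b(t)}$, and once the relevant widths are controlled by $c_{T_a(t)}$ (see the obstacle below) this equals $\Delta_{\max}-4c_{T_a(t)}$. Because $c_{T_a(t)}\le\elimparameter^r_a\le\Delta_{\max}/8$, the intervals of $(m)$ and $(m+1)$ are separated by at least $\Delta_{\max}-4c_{T_a(t)}>0$, so this cut is realized in the empirical order and $\gaplcb(t)$ is at least this value.

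Next I would upper bound $\gaprucb_a(t)$ using the helper arm. Let $j^\ast$ attain the inner maximum defining $\elimparameter^r_a$ in \eqref{eq:right gap parameter}, so $c_{T_a(t)}\le\Delta_{a,j^\ast}/4$ and $c_{T_a(t)}\le(\Delta_{\max}-\Delta_{a,j^\ast})/8$. The first inequality with \cref{lem:simple ci bounds} forces $r_a(t)\le(\mu_a+\mu_{j^\ast})/2\le l_{j^\ast}(t)$, i.e.\ the interval of $j^\ast$ lies to the right of that of $a$. Consequently, for every $i\in P_a^r$ we have $l_i(t)\le r_a(t)< l_{j^\ast}(t)$, so $j^\ast$ lies in $\{j:l_j(t)>l_i(t)\}$ and the first branch of $G_a^r$ gives $G_a^r(l_i(t),t)\le r_{j^\ast}(t)-l_i(t)\le r_{j^\ast}(t)-l_a(t)$. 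Taking the maximum over $P_a^r$ yields $\gaprucb_a(t)\le r_{j^\ast}(t)-l_a(t)\le \Delta_{a,j^\ast}+4c_{T_a(t)}$. Combining with the lower bound, it remains to verify $\Delta_{a,j^\ast}+4c_{T_a(t)}<\Delta_{\max}-4c_{T_a(t)}$, i.e.\ $8c_{T_a(t)}<\Delta_{\max}-\Delta_{a,j^\ast}$, which is exactly the second inequality above; the constants $4$ and $8$ in \eqref{eq:right gap parameter} are calibrated for precisely this step. The edge case, where the outer maximum in \eqref{eq:right gap parameter} is attained by the term $(\Delta_{\max}-\Delta_{a,1})/8$, is handled identically with the topmost arm playing the role of the capping helper, matching the second branch of $G_a^r$.

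The main obstacle is the confidence-width bookkeeping for arms other than $a,(m),(m+1)$, namely the helper $j^\ast$ and the arms attaining the extrema in $\gaplcb(t)$: if such an arm was eliminated before time $t$, its interval is frozen at a possibly larger width $c_{t_b}>c_{T_a(t)}$, which would weaken both bounds above. I expect to resolve this in two cases. If the arm is still active at $t$, the round-robin property gives width exactly $c_{T_a(t)}$ and everything goes through. If it was eliminated at some $t_b<t$, then $\gapucb_b(t_b)<\gaplcb(t_b)\le\Delta_{\max}$; I would use this to argue that its frozen interval is already narrow enough (its own gap upper bound controls its width) and, for the lower bound, that replacing an extremizing arm by the never-eliminated $(m)$ or $(m+1)$ only improves the relevant cut. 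Making this reduction precise, so that only $c_{T_a(t)}$, $c_{T_{(m)}(t)}$, and $c_{T_{(m+1)}(t)}$ ultimately appear, is the delicate part; the remaining inequalities are the routine constant-chasing sketched above.
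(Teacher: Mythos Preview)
Your plan is essentially the paper's: lower-bound $\gaplcb(t)$ via the separation between $(m)$ and $(m{+}1)$, upper-bound $\gaprucb_a(t)$ via the helper arm from \eqref{eq:right gap parameter} (or arm $(1)$ for the edge branch), and check that the constants $4$ and $8$ make $\Delta_{a,j^\ast}+4c\le\Delta_{\max}-4c$. The paper does exactly this, with the same two cases and the same use of \cref{lem:simple ci bounds}.

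The divergence is your ``obstacle'' paragraph. The paper dispatches all the width bookkeeping in one sentence: in \maxgapelim at round $t$ every arm in the active set has $T_b(t)=t$, so $c_{T_b(t)}=c_t$, and the proof then uses $c_t$ uniformly for every arm that appears --- $(m)$, $(m{+}1)$, $a$, and the helper $e$ --- with no case split on whether an arm was previously eliminated. It simply writes $\gaplcb(t)\ge l_{(m)}(t)-r_{(m{+}1)}(t)\ge\Delta_{\max}-4c_t$ and $\gaprucb_a(t)\le r_e(t)-l_a(t)\le\Delta_{a,e}+4c_t$ and is done. Your concern about frozen intervals of eliminated arms (both for the helper $j^\ast$ and for the arms attaining the extrema in \eqref{eq:max gap lower bound}) is a legitimate one, but the paper's proof does not engage with it at all; your planned case analysis is strictly more careful than what the paper provides. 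To reproduce the paper's argument you can drop the obstacle discussion and use $c_t$ throughout; if instead you want a fully watertight version, the frozen-interval reasoning you sketch is the right direction but would need to be fleshed out.
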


\begin{proof}
    Note that at time $t$ in \cref{alg:maxgapelim}, $T_a(t)=t$ and
    $c_{T_a(t)} = c_t$ for all arms $a
    \in A$. Assume \eqref{eq:valid confidence interval} holds. We have
    $c_t < \elimparameter^r_a < \Delta_{\max}/4$. This implies that 
    \begin{align}
        \begin{aligned}
            l_m(t) &\overset{(a)}{\ge} \mu_m-2c_t = \mu_{m+1}+\Delta_{\max}-2c_t  \overset{(a)}{\ge} r_{m+1}(t) + \Delta_{\max} - 4c_t \ge r_{m+1}(t).
        \end{aligned}
        \label{eq:lem right gap elimination eqn 1}
    \end{align}
    where (a) holds by \cref{lem:simple ci bounds}.

    From \eqref{eq:lem right gap elimination eqn 1} we have that 
    \begin{align}
        \gaplcb(t) \ge l_m(t) - r_{m+1}(t) \ge \Delta_{\max}-4c_t \label{eq:lower bound on lower bound}
    \end{align}

    Recall from \eqref{eq:right gap parameter} that for $a \ne 1$, 
\begin{align}
    \elimparameter^r_a = \max\big\{ &\max_{j:\Delta_{a,j}>0} \left(
    \min\{\Delta_{a,j}/4, ((\Delta_{\max}-\Delta_{a,j})/8)\} \right),
((\Delta_{\max}-\Delta_{a,1})/8) \big\}.
        \label{eq:lem right gap elimination eqn 2}
\end{align}
    %We show in part I below that if 
    %\begin{equation}
    %    c_t < \gamma_1 = \max_{j:p_j > p_a} \left( \min\{\Delta_{a,j}/4, ((\Delta_{\max}-\Delta_{a,j})/8)\} \right),
    %    \label{eq:lem right gap elimination eqn 2}
    %\end{equation}
    %then $U^r_a(t) < L(t)$. We show in part II that if 
    %\begin{equation}
    %    c_t < \gamma_2 = ((\Delta_{\max}-\Delta_{a,1})/8),
    %    \label{eq:lem right gap elimination eqn 2.1}
    %\end{equation}
    %then $U^R_a(t) < L(t)$. From this we conclude that if $c_t < \max(\gamma_1, \gamma_2) = \Delta^r_a$, then 
    %$\min \big\{\urg{a}, U_a^R(t) \big\}< L(t)$, proving the statement of the lemma.
    %
    %{\bf Part 1: $\mathbf{U^r_a(t) < L(t)}$ }

There are two terms in $\elimparameter_a^r$ and $c_t$ could be less than either
of these terms. First, suppose that 
$$c_t < \max_{j:\Delta_{a,j}>0} \left(
\min\{\Delta_{a,j}/4, ((\Delta_{\max}-\Delta_{a,j})/8)\} \right),$$ 
and let 
    \begin{equation}
        e = \argmax_{j:\Delta_{a,j}>0} \left( \min\{\Delta_{a,j}/4, ((\Delta_{\max}-\Delta_{a,j})/8)\} \right).
        \label{eq:lem right gap elimination eqn 3}
    \end{equation}
    For any arm $j$ such that $\Delta_{\max} < \Delta_{a,j}$, the inner minimum in \eqref{eq:lem right gap elimination eqn 3} will be negative. On the other hand, since $a \ne m+1$, there must exist an arm $j$ such that $\Delta_{\max} > \Delta_{a,j}$, and for such an arm $j$ the inner minimum will be positive. Since $e$ is the arm that maximizes the inner minimum, the inner minimum must be positive for $e$. Thus we have that $\Delta_{\max} > \Delta_{a,e}$. 
    
    From \eqref{eq:lem right gap elimination eqn 2}, \eqref{eq:lem right gap
    elimination eqn 3}, we have that 
    \begin{equation}
        c_t < \Delta_{a,e}/4 \quad \text{and} \quad c_t < (\Delta_{\max}-\Delta_{a,e})/8.
        \label{eq:lem right gap elimination eqn 4}
    \end{equation}

    Since $c_t < \Delta_{a,e}/4$, by following an argument similar to
    \eqref{eq:lem right gap elimination eqn 1} we have that $l_e(t) \ge r_a(t)$,
    and hence the first branch of \eqref{eq:G^R} will be used to compute
    $\gaprucb_a(t)$. Hence we have 
    \begin{align*}
        \gaprucb_a(t) &\overset{(a)}{\le} r_e(t)-l_a(t) \overset{(b)}{\le}
        \Delta_{a,e} + 4c_t \overset{(c)}{\le} \Delta_{\max} - 4c_t
        \overset{(d)}{\le} \gaplcb(t)
    \end{align*}
    where $(a)$ follows from \eqref{eq:G^R} and \eqref{eq:right_gap_UB}, $(b)$ holds from \cref{lem:simple ci bounds}, $(c)$ follows by \eqref{eq:lem right gap elimination eqn 4}, and $(d)$ holds by \eqref{eq:lower bound on lower bound}.

    For the second case, assume 
    $$c_t < (\Delta_{\max}-\Delta_{a,1})/8.$$
    Let $e = \argmax_{i \ne a} r_i(t)$. From \eqref{eq:right_gap_UB}, we have that
    $$\gaprucb_a(t) \le r_e(t)-l_a(t) \overset{(a)}{\le} \Delta_{a,e}+4c_t \le
    \Delta_{a,1}+4c_t \overset{(b)}{\le} \Delta_{\max} - 4c_t \overset{(c)}{\le}
    L(t),$$
    where (a) holds by \cref{lem:simple ci bounds}, (b) holds by the case
    assumption, and (c) holds by \eqref{eq:lower bound on lower bound}.
\end{proof}

\begin{lemma}
    \label{lem:left gap elimination}
    Assume \eqref{eq:valid confidence interval} holds, and consider $a \ne m$.
    In \maxgapelim if $t$ is such that $c_t \le \elimparameter_a^l$, then
    $$\gaplucb_a(t)< \gaplcb(t)$$ 
\end{lemma}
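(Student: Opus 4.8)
The plan is to prove \cref{lem:left gap elimination} as the mirror image of \cref{lem:right gap elimination}, interchanging the roles of ``left'' and ``right'' everywhere: left helpers ($\Delta_{a,j}<0$) replace right helpers, the two branches of $G_a^l$ in \eqref{eq:G^L} replace those of $G_a^r$, and the parameter \eqref{eq:left gap parameter} replaces \eqref{eq:right gap parameter}. First I would observe that since \maxgapelim samples every active arm each round, $T_a(t)=t$ and $c_{T_a(t)}=c_t$, and that, exactly as in the right-gap case, $c_t\le\elimparameter_a^l<\Delta_{\max}/4$ because each positive term defining \eqref{eq:left gap parameter} is at most $\Delta_{\max}/4$ (an inner minimum is positive only when its left distance is below $\Delta_{\max}$, and the trailing term is at most $\Delta_{\max}/8$). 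This is precisely the condition used to derive \eqref{eq:lower bound on lower bound}, and since that derivation depends neither on $a$ nor on the left/right distinction, I can reuse $\gaplcb(t)\ge\Delta_{\max}-4c_t$ unchanged, reducing the claim to showing $\gaplucb_a(t)\le\Delta_{\max}-4c_t$.

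I would then split on which term of $\elimparameter_a^l$ dominates, in each case exhibiting a reference arm $e$ for which $\gaplucb_a(t)\le r_a(t)-l_e(t)$. In the first case $c_t$ lies below the inner maximum; letting $e$ be its maximizer, the hypothesis $a\neq m$ supplies a left arm with left distance strictly below $\Delta_{\max}$ (e.g.\ the immediate left neighbour of $a$, whose gap is strictly less than $\Delta_{\max}$ by uniqueness of the maximum gap), so the inner minimum is positive at $e$ and $0<\Delta_{e,a}<\Delta_{\max}$. From $c_t<\Delta_{e,a}/4$ and \cref{lem:simple ci bounds} I get $r_e(t)<l_a(t)$, hence $e\in\{j:r_j(t)<x\}$ for every placement $x=r_j(t)\in[l_a(t),r_a(t)]$; the first branch of $G_a^l$ then gives $G_a^l(x,t)\le x-l_e(t)\le r_a(t)-l_e(t)\le\Delta_{e,a}+4c_t$, and combining with $c_t<(\Delta_{\max}-\Delta_{e,a})/8$ yields $\gaplucb_a(t)<\Delta_{\max}-4c_t\le\gaplcb(t)$. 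In the second case $c_t$ lies below the trailing term; taking $e=\argmin_{j\neq a}l_j(t)$ and writing $\Delta_{(K),a}=\mu_a-\mu_{(K)}=\max_j\Delta_{j,a}$, both branches of $G_a^l(x,t)$ are bounded by $x-l_e(t)$, so $\gaplucb_a(t)\le r_a(t)-l_e(t)\le\Delta_{e,a}+4c_t\le\Delta_{(K),a}+4c_t<\Delta_{\max}-4c_t\le\gaplcb(t)$, using $\mu_e\ge\mu_{(K)}$ and the case assumption $c_t<(\Delta_{\max}-\Delta_{(K),a})/8$.

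The step I expect to require the most care is the branch bookkeeping for $G_a^l$: verifying that for the helper $e$ of the first case it genuinely holds that $e\in\{j:r_j(t)<x\}$ at every admissible $x$, so that the first branch controls the whole maximum over $P_a^l$ in \eqref{eq:left_gap_UB}, and in the second case that both branches are dominated by $x-l_e(t)$ when $l_e(t)$ is the globally smallest left bound. It is also worth emphasizing that here it is the hypothesis $a\neq m$ (rather than $a\neq m+1$ as in the right-gap lemma) that produces a usable left helper, since $(m)$ is precisely the arm whose left gap equals $\Delta_{\max}$ and therefore cannot be eliminated. Beyond these points the argument is a direct transcription of \cref{lem:right gap elimination} and introduces no new ideas.
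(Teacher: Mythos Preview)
Your proposal is correct and follows exactly the approach the paper intends: the paper's own proof of this lemma reads in full ``The proof is analogous to the proof of \cref{lem:right gap elimination},'' and what you have written is precisely that analogous argument spelled out, with the left/right roles swapped, \eqref{eq:left gap parameter} in place of \eqref{eq:right gap parameter}, and $G_a^l$ in place of $G_a^r$. Your identification of $a\neq (m)$ (rather than $a\neq (m+1)$) as the hypothesis that guarantees a left helper with $0<\Delta_{e,a}<\Delta_{\max}$, and your branch bookkeeping for $G_a^l$, are both handled correctly.
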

\begin{proof}
    The proof is analogous to the proof of \cref{lem:right gap elimination}.
\end{proof}

\subsection{Sample Complexity of $\maxgapucb$}
\label{sec:maxgapucb lemmas}

For the sample complexity analysis of \maxgapucb, we use a modified version of the left and right confidence bounds introduced in \eqref{eq:mean upper lower bound}. We redefine
\begin{equation}\label{eq:modified ci}
l_i'(t) \triangleq \max_{s \leq t} l_i(s), \qquad r_i'(t) \triangleq \min_{s \leq t} r_i(s).
\end{equation}
The nice property that these bounds have is that $[l_i'(t), r_i'(t)] \subseteq [l_i(s), r_i(s)]$ for all $t \geq s$. \cref{lem:nested_CIs} shows that these modified bounds retain the same confidence guarantee for the arm mean values as the original confidence bounds. In what follows, we will exclusively use the modified confidence bounds (except in \cref{lem:nested_CIs} where we show they are correct). We drop the prime symbol in their notation for brevity and henceforth $l_i(t), r_i(t)$ denote the modified confidence bounds given in \eqref{eq:modified ci}. 

 We state and prove our main sample complexity result in \cref{thm:ucb sample complexity}.
\begin{theorem}
    \label{thm:ucb sample complexity}
    With probability at least $1-\delta$, the number of times $\maxgapucb$
    samples a sub-optimal arm, i.e.\ an arm $i \not\in \{(m), (m{+}1)\} $, is upper bounded by $6\alpha\ucbparameter_i^{-2}\log(K/\delta \ucbparameter_i)$. The constant $\alpha$ is defined in \cref{remark:confidence interval sample complexity}. Thus, the number of times $\maxgapucb$ samples suboptimal arms is  
    $$H = 6\alpha\sum_{\substack{i \in [K]: \\ 
    i \notin \{(m),(m+1)\}}} \frac{\log(K/\delta \ucbparameter_i)}{\ucbparameter_i^2}.$$
\end{theorem}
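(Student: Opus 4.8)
The plan is to condition throughout on the good event \eqref{eq:valid confidence interval}, which holds with probability at least $1-\delta$, and to bound the pull count of each sub-optimal arm $i \notin \{(m),(m+1)\}$ separately. Since \maxgapucb samples exactly those arms attaining the largest gap upper bound $M_1(t)$, arm $i$ is pulled at round $t$ only when $\gapucb_i(t) = M_1(t)$. The first step is to record a uniform lower bound $M_1(t) \ge \Delta_{\max}$: under the good event the true means $\{\mu_a\}$ form a feasible configuration for the optimization defining the gap upper bounds, so arm $(m)$ --- whose true right gap equals $\Delta_{\max}$ --- always satisfies $\gapucb_{(m)}(t) \ge \Delta_{\max}$, whence $M_1(t) \ge \Delta_{\max}$. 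It therefore suffices to show that once $i$ has been pulled $N_i := 6\alpha\, \ucbparameter_i^{-2}\log(K/\delta\ucbparameter_i)$ times we have $\gapucb_i(t) < \Delta_{\max}$, since this removes $i$ from the set of maximizers and prevents any further pulls.

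The second step establishes the elimination condition, paralleling \cref{lem:right gap elimination} and \cref{lem:left gap elimination} from the analysis of \maxgapelim. Working with the nested confidence bounds \eqref{eq:modified ci} (so that intervals only shrink, while \cref{lem:nested_CIs} keeps the good event intact), I would let $e$ be the arm achieving the maximum in the definition of $\ucbparameter_i^r$ in \eqref{eq:gammair definition}, so that $\Delta_{i,e} \ge \ucbparameter_i^r$ and $\Delta_{\max} - \Delta_{i,e} \ge \ucbparameter_i^r$. Once $c_{T_i(t)}$ and $c_{T_e(t)}$ are both small relative to $\ucbparameter_i^r$, arm $e$'s interval lies strictly to the right of $i$'s, forcing the first branch of $G_i^r$ in \eqref{eq:G^R} and giving $\gaprucb_i(t) \le r_e(t) - l_i(t) \le \Delta_{i,e} + 2c_{T_i(t)} + 2c_{T_e(t)} < \Delta_{\max}$. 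The symmetric argument with $\ucbparameter_i^l$ bounds $\gaplucb_i(t) < \Delta_{\max}$, and together they give $\gapucb_i(t) < \Delta_{\max}$.

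The third and genuinely delicate step is to control the pull count of the helper arm $e$, since the bound above requires $c_{T_e(t)}$ --- not merely $c_{T_i(t)}$ --- to be small. The resolution rests on a symmetry of the gap upper bounds: whenever arm $i$ is pulled because $\gaprucb_i(t)$ attains $M_1(t)$ and this value is realized by a helper $j^\ast$ sitting near distance $\Delta_{\max}$ on $i$'s right, the very same gap is a candidate \emph{left} gap of $j^\ast$, so $\gaplucb_{j^\ast}(t) \ge \gaprucb_i(t) = M_1(t)$ and $j^\ast$ is pulled in the same round. Thus the helper accumulates pulls at least as fast as $i$. I would make this precise, handling the fact that the instantaneous realizing helper $j^\ast$ may differ from the fixed $e$ defining $\ucbparameter_i^r$ and may change across rounds, to conclude that by the time $i$ has been pulled $N_i$ times both $c_{T_i(t)}$ and $c_{T_e(t)}$ lie below the threshold needed in the second step. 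Converting the width requirement $c_s \lesssim \ucbparameter_i$ into a pull count via \cref{remark:confidence interval sample complexity} yields the per-arm bound $6\alpha\,\ucbparameter_i^{-2}\log(K/\delta\ucbparameter_i)$, and summing over all sub-optimal arms gives $H$. I expect this helper-coupling argument to be the main obstacle, precisely because the set of arms attaining $M_1(t)$ evolves over time and the realizing helper need not coincide with the arm $e$ that defines the hardness parameter.
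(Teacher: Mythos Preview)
Your high-level plan matches the paper's: condition on the good event, use $\gapucb_{(m)}(t)\ge\Delta_{\max}$ as a floor on $M_1(t)$, and show that after enough pulls of a sub-optimal arm $i$ its gap upper bound drops below $\Delta_{\max}$. Your second step is also essentially what the paper does: once both $i$ and the helper $e$ defining $\ucbparameter_i^r$ have small confidence widths, $e$'s interval sits strictly right of $i$'s and $\gaprucb_i(t)\le r_e(t)-l_i(t)\le\Delta_{i,e}+2c_{T_i(t)}+2c_{T_e(t)}<\Delta_{\max}$.

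The gap is exactly where you flag it: your third step is an intuition, not an argument. The symmetry observation---that the arm $j^\ast$ currently realizing $\gaprucb_i(t)$ also has $\gaplucb_{j^\ast}(t)=M_1(t)$ and is therefore pulled---is correct (it is the content of the paper's \cref{lem:arms_that_can_access_sampled}), but it only tells you the \emph{instantaneous} realizer is pulled. It does not, by itself, ensure that the \emph{fixed} arm $e$ accumulates pulls, nor that the realizer at the terminal time has accumulated pulls, since the realizer can change from round to round. Saying ``I would make this precise'' is precisely the missing proof.

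The paper's resolution is substantially more structured than your sketch. It partitions the arms relative to $i$ into three level sets $\mathcal{A}_i^0(t),\mathcal{A}_i^1(t),\mathcal{A}_i^2(t)$ (roughly: right endpoint inside $[l_i,r_i]$; interval straddling $r_i$; interval disjoint and immediately to the right), and runs a three-phase argument with $\tau=\alpha(\ucbparameter_i^r)^{-2}\log(K/\delta\ucbparameter_i^r)$. Phase one (first $\tau$ pulls of $i$) shrinks $i$'s own interval below $\Delta_{\max}$. Phase two shows that every arm in $\mathcal{A}_i^1$ is pulled whenever $i$ is (a level-set version of your symmetry), so after $2\tau$ pulls every $j\in\mathcal{A}_i^0\cup\mathcal{A}_i^1$ satisfies $r_j-l_i\le\Delta_{\max}$; this forces the realizer $i_r^t$ into $\mathcal{A}_i^2$ thereafter. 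Phase three uses this to show that the specific arms $i_l,i_r$ realizing $\gaprucb_i$ at time $t(i,3\tau)$ have each been pulled at least $\tau$ times, and then a case analysis on the ordering of $\mu_i,\mu_{i_l},\mu_{i_r},\mu_{j_\ast}$ finishes---in the only nontrivial case, it is further shown that $j_\ast$ (your $e$) has also been pulled $\tau$ times. The level-set monotonicity (arms cannot drop from $\mathcal{A}_i^2$ back to $\mathcal{A}_i^0\cup\mathcal{A}_i^1$, \cref{crlry:level3+_dont_become_2}) and the fact that \emph{all} arms in the active level are pulled together (\cref{lem:all_in_level_sampled}) are what convert ``the current realizer is pulled'' into ``the arms I need have been pulled $\tau$ times''. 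Your proposal does not supply any analogue of this machinery, and without it the helper-coupling step does not go through.
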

\begin{proof}
    We show that the result holds true as long as the confidence
    intervals for the means are correct \eqref{eq:valid confidence interval}. Let 
\begin{equation}
    \tau_r = \alpha \frac{\log (K/\delta \ucbparameter_i^r)}{(\ucbparameter_i^r)^2}, \quad \text{and} \quad \tau_l = \alpha \frac{\log (K/\delta \ucbparameter_i^l)}{(\ucbparameter_i^l)^2},
    \label{eq:taur taul definition}
\end{equation}
where $\alpha$ is defined in \cref{remark:confidence interval sample
  complexity}.  Note that
  $\gapucb_{(m)}(t) = \gapucb_{(m+1)}(t) \ge \Delta_{\max}\,\forall\,t$. Arm
$i$ is sampled either because $\gaprucb_i$ is the largest or
$\gaplucb_i$ is the largest. We prove in \cref{lem:ucbrightgap less
  than deltamax} below that when $i$ is sampled $3\tau_r$ times due to its right gap, $\gaprucb_i < \Delta_{\max}$. Hence $\maxgapucb$ will
not sample $i$ due to its right gap more than $3\tau_r$ times because
beyond this point $\gapucb_{(m)}$ will be higher. It can similarly be
proved that when $i$ is sampled $3\tau_l$ times due to its left gap,
$\gaplucb_i < \Delta_{\max}$. Thus, arm $i$ will be sampled at most
$3(\tau_r+\tau_l) \leq 6\max\{\tau_r, \tau_l\}$ times.
\end{proof}
To ease the explanation, we only focus on the right gap of $i$ from
here onwards and set 
\begin{equation}
    \tau = \alpha \frac{\log (K/\delta \ucbparameter_i^r)}{(\ucbparameter_i^r)^2}.
    \label{eq:tau definition}
\end{equation}
    Furthermore, in the lemmas below, we only focus on samples of $i$
    drawn when $\gaprucb_i$ was the largest upper bound. With a slight overload of notation, let $t(i,s)$ denote the (random) smallest time when arm $i$ has been sampled $s$ times by $\maxgapucb$ (owing to its right gap). 

\begin{lemma}
With probability $1-\delta$, $\gaprucb_i(t(i,3\tau)) < \Delta_{\max}.$
\label{lem:ucbrightgap less than deltamax}
\end{lemma}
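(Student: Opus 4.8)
\emph{Proof proposal.} Let $e$ be the helper arm attaining the maximum in \eqref{eq:gammair definition}, so that $0 < \Delta_{i,e} < \Delta_{\max}$ and $\ucbparameter_i^r = \min\{\Delta_{i,e},\, \Delta_{\max}-\Delta_{i,e}\}$. The plan is to first isolate a purely \emph{structural} sufficient condition, mirroring \cref{lem:right gap elimination}: at any time $t$ at which $2c_{T_i(t)} + 2c_{T_e(t)} < \ucbparameter_i^r$, one has $\gaprucb_i(t) < \Delta_{\max}$. To see this I would invoke \cref{lem:simple ci bounds} twice. Since $\ucbparameter_i^r \le \Delta_{i,e}$, the bound $l_e(t) - r_i(t) \ge \Delta_{i,e} - 2c_{T_e(t)} - 2c_{T_i(t)} > 0$ forces $e$'s interval strictly to the right of $i$'s, so $l_e(t) > l_{j}(t)$ for every $j \in P_i^r$ and $e$ lies in the domain of the first branch of \eqref{eq:G^R} at each such point; hence $G_i^r(l_j(t),t) \le r_e(t) - l_j(t) \le r_e(t)-l_i(t)$ and, by \eqref{eq:right_gap_UB}, $\gaprucb_i(t) \le r_e(t)-l_i(t)$. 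Since $\ucbparameter_i^r \le \Delta_{\max}-\Delta_{i,e}$, the second bound $r_e(t)-l_i(t) \le \Delta_{i,e} + 2c_{T_e(t)} + 2c_{T_i(t)} < \Delta_{\max}$ closes the argument.

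The remaining and genuinely UCB-specific difficulty is that, unlike \maxgapelim, \maxgapucb does not sample every active arm each round, so driving $i$ to $3\tau$ right-gap pulls does not by itself shrink $c_{T_e(t)}$. To control the helper I would use an optimism/co-sampling argument together with the nested confidence intervals of \eqref{eq:modified ci} (whose widths only shrink, keeping the bookkeeping monotone). Whenever $i$ is sampled owing to its right gap we have $\gaprucb_i(t) = M_1(t) \ge \gapucb_{(m)}(t) \ge \Delta_{\max}$; contrapositively to the structural condition above, this regime forces $2c_{T_i(t)} + 2c_{T_e(t)} \ge \ucbparameter_i^r$, i.e.\ $e$ is not yet tight relative to $i$. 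The key claim I would then establish is that in this regime $e$ itself attains the maximal gap upper bound, $\gapucb_e(t) = M_1(t)$, so that $e$ is \emph{co-sampled} in the same round: because $e$ sits inside or to the right of the width-$\ge \Delta_{\max}$ empty region $(l_j(t), r_k(t))$ witnessing $i$'s inflated right gap, where $k = \argmin_{k': l_{k'}(t) > l_j(t)} r_{k'}(t)$, the arm $e$ inherits a left- or right-gap upper bound of at least $\Delta_{\max}$.

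Combining the two pieces, every right-gap pull of $i$ taken while $c_{T_e}$ is still too large is matched by a pull of $e$, so within roughly $\tau$ such pulls $e$ reaches $c_{T_e(t)} \le \ucbparameter_i^r$, while a further $\sim\tau$ pulls bring $c_{T_i(t)}$ down; at that point the structural condition holds and $\gaprucb_i(t) < \Delta_{\max}$. The constant $3$ then absorbs this two-phase count together with the slack between ``$c_s \le \ucbparameter_i^r$'' (\cref{remark:confidence interval sample complexity}) and the factor $2$ appearing in the structural condition. The step I expect to be the main obstacle is precisely the co-sampling claim: proving that an under-sampled helper $e$ necessarily achieves the maximal gap-UCB. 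This demands a careful case analysis of where $e$'s interval falls relative to the witnessing arm $k$ and the empty region $(l_j(t), r_k(t))$, and is exactly where the shared-gap structure of $G_a^r$ and $G_a^l$ (every gap is claimed both by an arm on its left and one on its right) must be exploited.
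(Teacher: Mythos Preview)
Your structural sufficient condition in the first paragraph is correct and cleanly stated, and you are right that the entire difficulty lies in the co-sampling step. However, the claim that the \emph{fixed} helper $e$ (the maximizer in \eqref{eq:gammair definition}) is co-sampled every time $i$ is pulled for its right gap is not established, and the heuristic you give (``$e$ sits inside or to the right of the empty region'') is not a proof. For instance, when $i$'s interval is still wide and $e$ has already been sampled many times for unrelated reasons, $e$'s interval can lie strictly to the left of $l_{i_l^t}(t)$, and then neither of the boundary-overlap conditions that would force $\gapucb_e(t)=M_1(t)$ applies directly. Your two-phase accounting (``first $e$ reaches $c_{T_e}\le \ucbparameter_i^r$, then $i$'') presupposes exactly the co-sampling you have not yet shown.

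The paper's route is genuinely different and more indirect. Rather than tracking the fixed helper, it fixes the \emph{random} arms $i_l,i_r$ that witness $\gaprucb_i$ at the terminal time $t(i,3\tau)$ and works backwards. It introduces level sets $\mathcal{A}_i^0(t),\mathcal{A}_i^1(t),\mathcal{A}_i^2(t)$ classifying arms by the position of their interval relative to $i$'s, proves that level membership is monotone in $t$, and establishes two co-sampling lemmas showing that every arm in the same level as $i_r^t$ (and every arm whose interval straddles one of the two boundaries $l_{i_l^t},r_{i_r^t}$) is pulled whenever $i$ is. The three phases of $\tau$ pulls then serve distinct purposes: the first shrinks $i$'s own interval; the second guarantees that by $t(i,2\tau)$ every level-$0$/level-$1$ arm $j$ satisfies $r_j(t)-l_i(t)\le \Delta_{\max}$, which forces $i_r^s\in\mathcal{A}_i^2(s)$ for all later $s$; only in the third phase does the paper argue that $i_l,i_r$ (and, in one sub-case, the helper $j_\ast$ itself) each accumulate $\tau$ samples. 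A four-way case split on the ordering of $\mu_i,\mu_{i_l},\mu_{i_r},\mu_{j_\ast}$ then closes the argument. In short, the paper never asserts that the helper is co-sampled at \emph{every} right-gap pull of $i$; it only obtains enough co-sampling in the final phase, after the level structure has stabilized. Your sketch omits this stabilization step, and it is precisely what the level-set machinery and the three-phase decomposition buy.
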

\begin{proof} Since the proof is long and technical, we first give an outline of the entire proof. 

\begin{figure}[t]
    \centering
    \includegraphics[width=.5\linewidth]{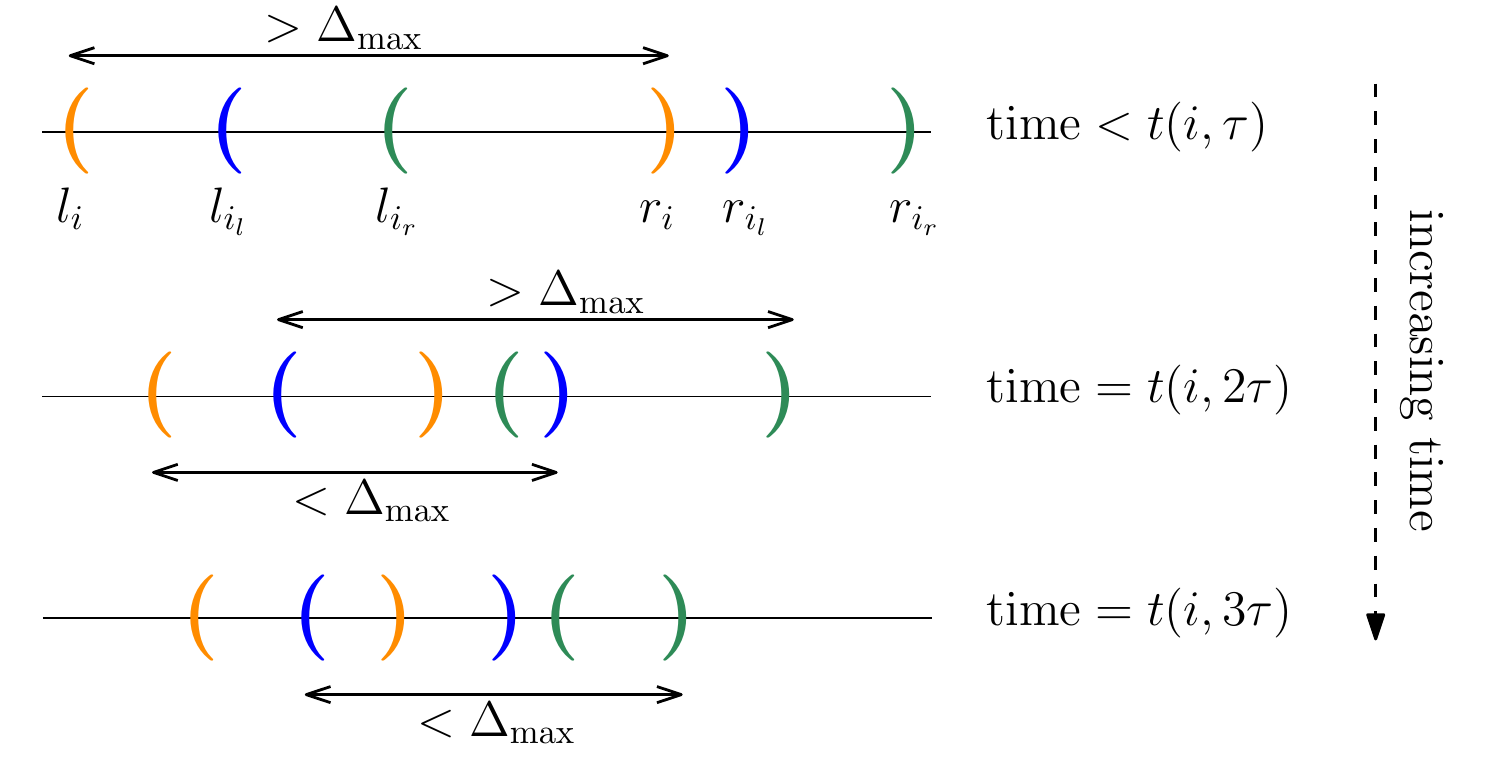}
\caption{Illustration of left and right confidence bounds during a run of $\maxgapucb$ at three different times, the argument $t$ for the bounds are omitted. Arms $i_l, i_r$ are such that $\gaprucb_i(t(i, 3\tau)) = r_{i_r}(t(i, 3\tau)) - l_{i_l}(t(i, 3\tau))$.}
\label{fig:UCB_sample_complexity}
\end{figure}
    {\em Outline:} Define $i_r^t$ and $i_l^t$ to be the arms that form the left and right boundaries of $\gaprucb_i(t)$ (the two arms that result in the maximum value of \eqref{eq:right_gap_UB} at $t$). 
By this definition, 
$
\gaprucb_i(t) = G^r_i(l_{i_l^t}(t), t) = r_{i_r^t}(t) - l_{i_l^t}(t)
$. 
Consider the arms used in computing $\gaprucb_i(t(i, 3\tau))$, i.e.\ 
$i_l^{t(i,3\tau)}, i_r^{t(i, 3\tau)}$, and denote them as $i_l, i_r$ for brevity.  
\cref{fig:UCB_sample_complexity} shows the 
confidence intervals of $i_l$ in blue and those of $i_r$ in green. 
Initially the confidence intervals 
are large, i.e., the width between the right and left bounds of arm $i$ is greater than 
$\Delta_{\max}$ before time $t(i, \tau)$. After $t(i, 2\tau)$ rounds of
$\maxgapucb$, the confidence interval of $i$ will have shrunk. However, note that 
the right gap of arm $i$ involves either $i_l$ and/or $i_r$. 
Since $\maxgapucb$ samples \emph{all} arms that can attain the highest gap upper bound, 
it turns out that it will also  
sample $i_l$ enough times to make $r_{i_l}(t(i, 2\tau)) - l_i(t(i, 2\tau)) < \Delta_{\max}$. 
If $i$ is still sampled after $t(i, 2\tau)$ rounds due to its right gap, then its gap upper bound 
must involve an arm which is disjoint from $i$'s confidence interval, such as the arm $i_r$. Then from 
$t(i, 2\tau)$ to $t(i, 3\tau)$, $\maxgapucb$ samples $i_l$ and $i_r$ enough times to make 
$\gaprucb_i(t(i, 3\tau)) = r_{i_r}(t(i, 3\tau)) - l_{i_l}(t(i, 3\tau))
< \Delta_{\max}$.  %\\ \vspace{-.4in} \\
%\begin{wrapfigure}{l}{.3\textwidth}
%\end{wrapfigure}
%
%While this example illustrates the operation of $\maxgapucb$ on a
%particular problem instance, the key ideas in the proof 
%generalize the reasoning above to handle every possible configuration
%of the confidence intervals. The full proof is in \cref{sec:maxgapucb
%  lemmas} in the Supplementary Material.
%%The proof of the fact that $\gaprucb_i(t(i,3\tau)) < \Delta_{\max}$ is divided into four parts. In the first part, we divide all the arms into subsets (which we refer to as levels). These subsets are defined such that arms within a subset obey collective properties, that we study in abradabra \todos{lemmas section}. In the second and third part, we prove that arms $i_r^{t(i,3\tau)}$ and $i_l^{t(i,3\tau)}$ are always sampled whenever $i$ is sampled from $[t(i,2\tau), t(i,3\tau)]$. Finally in part four, we use these arms to argue that $\gaprucb_i(t(i,3\tau)) < \Delta_{\max}$. The complete proof can be found in \cref{sec:maxgapucb lemmas}.
%\end{proof}
%Define $i_r^t$ and $i_l^t$ to be the arms that form the left and right boundaries of $\gaprucb_i(t)$ (the two arms that result in the maximum value of \eqref{eq:right_gap_UB} at $t$). \todos{define this better}
%By this definition, 
%\begin{equation}
%\gaprucb_i(t) = G^r_i(l_{i_l^t}(t), t) = r_{i_r^t}(t) - l_{i_l^t}(t).
%\end{equation}

We divide the proof into four parts. In the first part, we divide all the arms into subsets (which we refer to as levels). These subsets are defined such that arms within a subset obey collective properties, that we study in some of the subsequent lemmas\todos{lemmas section}. In the second and third part, we prove that arms $i_r^{t(i,3\tau)}$ and $i_l^{t(i,3\tau)}$ are always sampled whenever $i$ is sampled from $[t(i,2\tau), t(i,3\tau)]$. Finally in part four, we use these arms to argue that $\gaprucb_i(t(i,3\tau)) < \Delta_{\max}$.\\[-5pt]

\noindent {\bf Level Sets}:\\
At any time $t$, we can identify three subsets of arms with respect to arm $i$ that we refer to as level $0$, level $1$, and level $2$ arms respectively, and argue that the arms that define $\gaprucb_i(t)$ must lie in one of these subsets. These levels sets are defined as follows. Let
\begin{align}
\mathcal{A}_i^0(t) &= \{a\in [K]: l_i(t) \leq r_a(t) < r_i(t)\}, \label{eq:level0}\\
\mathcal{A}_i^1(t) &=\{a \in [K]: l_a(t) \leq r_i(t) \leq r_a(t)\}, \label{eq:level1}\\
\mathcal{A}_i^2(t) &= \left\lbrace a \in [K]: r_i(t) < l_a(t) \leq \min_{j: l_{j}(t) > r_i(t)}r_{j}(t)\right\rbrace .\label{eq:level2}
\end{align}
%It is easy to see that 
From their definitions the three subsets are pairwise disjoint at every $t \in \mathbb{N}$. Let 
\begin{equation}
    \mathcal{A}_i(t) = \mathcal{A}_i^0(t) \cup \mathcal{A}_i^1(t) \cup \mathcal{A}_i^2(t) 
    \label{eq:union of levels definition}
\end{equation}
denote the union of the three levels. 
From the definition of $\gaprucb_i(t)$ in \eqref{eq:right_gap_UB} and \eqref{eq:level0}, \eqref{eq:level1}, the arm 
\begin{equation}
    i_l^t \in \mathcal{A}_i^0(t) \cup \mathcal{A}_i^1(t) \,\forall\,t. \label{eq:iL is level0 or level1}
\end{equation}
\cref{lem:level3+_dont_matter} proves that the arm $i_r^t \in \mathcal{A}_i(t) \,\forall\,t$. Thus at any time $t$, only arms in $\mathcal{A}_i(t)$ are relevant for the right gap of arm $i$.

%Choose $\tau' < \tau$ to be the time when $T_i(\tau') = 2\alpha /(\Gamma_i^R)^2$, where $\alpha$ is chosen so that the confidence width $2\sqrt{\frac{\beta_\delta(\sigma, T_i(\tau'))}{T_i(\tau')}} \le 2\Gamma_i^R \le \Delta_{\max}$. \todos{refine this}. Let $\tau' = 2 \tau$ \todos{change definition of $\tau$ throughout}

%Then the width of the confidence interval around $\hat{p}_i(T_i(\tau'))$ is
%\begin{align}
%2\sqrt{\frac{\beta_\delta(\sigma, T_i(\tau'))}{T_i(\tau')}}
%&= 2\sqrt{\frac{2\sigma^2(2\log T_i(\tau') + \log(6K/\pi^2\delta))}{T_i(\tau')}} \nonumber\\
%&= 2\sqrt{\frac{4\sigma^2 \log(2\alpha/(\Gamma_i^R)^2)}{2\alpha/(\Gamma_i^R)^2}
%+ \frac{2\sigma^2 \log(6K/\pi^2\delta)}{2\alpha/(\Gamma_i^R)^2}} \nonumber\\
%&\leq \begin{cases} \label{eq:CI_width}
%2\sqrt{\frac{4\sigma^2}{2\alpha}\log 2\alpha + \frac{2\sigma^2 \log(6K/\pi^2\delta)}{2\alpha/(\Gamma_i^R)^2}} & \text{if } (\Gamma_i^R)^2 \leq 1,\\
%2\Gamma_i^R \sqrt{\frac{2\sigma^2(2\log 2\alpha + \log(6K/\pi^2\delta))}{2\alpha}} & \text{otherwise}.
%\end{cases}
%\end{align}
%Choosing an appropriately large value of $\alpha$ we can make both the alternatives in \eqref{eq:CI_width} to be such that the width is upper bounded by $2\Gamma_i^R \leq \Delta_{\max}$. 

%We argue the proof in steps:

%$\mathbf{i_R^{t(i, 2\tau)} \notin \mathcal{A}_i^0(t(i,2\tau)) \cup \mathcal{A}_i^1(t(i,2\tau))}$: This is proved in Lemma~\ref{lem:i_R_cant_be_level1_at_>tau'}.

\todos{move proof of $\mathcal{A}_i^0$ also to Lemma 11?}
%Now if arm $i$ is sampled at any time $t \geq \tau'$ with its $\UCB_i^R(t) = r_{i_R'}(t) - l_{i_L'}(t)$ the arm $i_R' \notin \mathcal{A}_i^0(t)$, for if it is, then $\UCB_i^R(t) \leq r_{i}(t) - l_i(t) \leq \Delta_{\max}$ and that contradicts the fact that $i$ was sampled at $t$ by \texttt{MaxGapUCB}. Lemma~\ref{lem:i_R_cant_be_level1_at_>tau'} states that arm $i_R' \notin \mathcal{A}_i^1(t)$ as well. 

Suppose $t(i,3\tau) < \infty$, i.e., arm $i$ is sampled at least $3\tau$ times. To avoid clutter,we let %% ----- continue checking -----
$$i_r = i_r^{t(i,3\tau)} \quad \text{and} \quad i_l = i_l^{t(i,3\tau)},$$ 
and use the full notation $i_r^t$ for $t \ne t(i,3\tau)$. We next argue that $i_r$ and $i_l$ must be sampled $\tau$ times before $t(i,3\tau)$.
%Since $i$ was sampled at time $t(i,3\tau)$, 
%\begin{equation}
%    r_{i_R}(t(i,3\tau)) - l_{i_L}(t(i,3\tau)) > \Delta_{\max}.
%    \label{eq:UCBgreaterthanDeltamaxat3tau}
%\end{equation} 

\noindent {\boldmath\bfseries $i_r$ must have been sampled at least $\tau$ times before $t(i,3\tau)$}: \\
By \cref{lem:level3+_dont_matter}, $i_r \in \mathcal{A}_i(t(i,3\tau))$. From \cref{crlry:level3+_dont_become_2}, %since the level of an arm cannot decrease from $3$ to $2$, 
$i_r \in \mathcal{A}_i(s) \,\forall\,s \in [t(i,2\tau), t(i,3\tau)]$. If $i_r \in \mathcal{A}_i^0(s) \cup \mathcal{A}_i^1(s)$ for any $s\in [t(i,2\tau), t(i,3\tau)]$, then $r_{i_r}(t(i,3\tau)) - l_i(t(i,3\tau)) \le r_{i_r}(s) - l_{i}(s) < \Delta_{\max}$ by \cref{lem:nested_CIs} and \cref{lem:i_R_cant_be_level1_at_>tau'}, and we are done. Let us hence look at the case when $i_r \in \mathcal{A}_i^2(s) \,\forall\,s \in [t(i,2\tau), t(i,3\tau)]$. We have by \cref{lem:i_R_cant_be_level1_at_>tau'} that $i_r^s \in \mathcal{A}_i^2(s) \,\forall\,s\in [t(i,2\tau), t(i,3\tau)]$, and \cref{lem:all_in_level_sampled} then implies that $i_r$ must be sampled whenever $i$ was sampled for $s\in [t(i,2\tau), t(i,3\tau)]$. Hence $i_r$ is sampled at least $\tau$ times before $t(i,3\tau)$.\\[-10pt]

\noindent {\boldmath\bfseries $i_l$ must have been sampled at least $\tau$ times before $t(i, 3\tau)$}:\\
From \cref{crlry:level3+_dont_become_2}, since the level of an arm cannot decrease from 2 to 1, \todos{lemma that says level cannot decrease from 2 missing? A; Cor~2 gives it?} $i_l \in \mathcal{A}_i^0(s) \cup \mathcal{A}_i^1(s)$ for all $s \in [t(i,2\tau), t(i,3\tau)]$. If $i_l \in \mathcal{A}_i^1(s)\,\forall\,s\in [t(i,2\tau), t(i,3\tau)]$, then $i_l$ is sampled $\tau$ times whenever $i$ is sampled by \cref{lem:all_in_level_sampled}. \\
On the other hand, if $i_l \in \mathcal{A}_i^0(s)$ for some $s \in [t(i,2\tau), t(i,3\tau)]$, let $\gaprucb_i(s) = r_{i_r^s}(s) - l_{i_l^s}(s)$. We consider two cases, $r_{i_l}(s) < l_{i_l^s}(s)$ and $r_{i_l}(s) \ge l_{i_l^s}(s)$. First, if $r_{i_l}(s) < l_{i_l^s}(s)$, then $l_{i_l}(s') < r_{i_l}(s') < l_{i_l^s}(s')$ for all $s' \geq s$ by \cref{lem:nested_CIs}. Since $i_l^s \in \mathcal{A}_i^0(s) \cup \mathcal{A}_i^1(s)$, we have by \cref{lem:nested_CIs} and \cref{lem:i_R_cant_be_level1_at_>tau'} that 
\begin{equation}
    r_{i_l^s}(t(i,3\tau)) - l_{i_l}(t(i,3\tau)) \leq r_{i_l^s}(t(i,3\tau)) - l_{i}(t(i,3\tau)) \le \Delta_{\max}.
    \label{eq:i_L_must_be_sampled_eq1}
\end{equation}
By the definition of $\gaprucb_i$ in \eqref{eq:right_gap_UB} we have that %\todos{should the time in G be changed?}
\begin{equation}
    \gaprucb_i(t(i,3\tau)) = G_i^r(l_{i_l}(t(i,3\tau)), t(i,3\tau)) \le r_{i_l^s}(t(i,3\tau)) - l_{i_l}(t(i,3\tau)).
    \label{eq:i_L_must_be_sampled_eq2}
\end{equation}
\eqref{eq:i_L_must_be_sampled_eq2} and \eqref{eq:i_L_must_be_sampled_eq1} imply that $\gaprucb_i(t(i,3\tau)) \le \Delta_{\max}$, and we are done. 
For the second case, suppose $r_{i_l}(s) \ge l_{i_l^s}(s)$. 
Lemma~\ref{lem:arms_that_can_access_sampled} then gives that arm $i_l$ is also sampled at time $s$. 
%By the \todos{can access} realizability of $\UCB_i^R(s)$, arm $i_L$ is also sampled at time $s$, because of the following. REF
Thus, we have shown that either $\gaprucb_i(t(i,3\tau)) < \Delta_{\max}$, or $i_l$ is sampled whenever $i$ is sampled in $[t(i,2\tau), t(i,3\tau)]$.

We now show that $\gaprucb_i(t(i,3\tau)) < \Delta_{\max}$.\\[-10pt]

\noindent {\boldmath\bfseries $\gaprucb_i(t(i,3\tau)) < \Delta_{\max}$}:\\
Recall that $T_i(t(i,3\tau)), T_{i_l}(t(i,3\tau)), T_{i_r}(t(i,3\tau))$ are all larger than $\tau$. 
Let 
$$j_\ast = \argmax_{j: 0 < \Delta_{i,j} < \Delta_{\max}} \min \{\Delta_{i,j}/4, (\Delta_{\max} - \Delta_{i,j})/4\}$$
be the maximizer in \eqref{eq:gammair definition}, and note that $\mu_i < \mu_{j_\ast}$ by definition. 
Also note that $\tau$ and $\ucbparameter_i^r$ are defined in \eqref{eq:tau definition} and \eqref{eq:gammair definition} respectively so that 
\begin{equation}
    4c_{\tau} \le \Delta_{\max}-\Delta_{i,j_\ast} \quad \text{ and } \quad 4c_{\tau} \le \Delta_{i,j_\ast}
    \label{eq:tau defined so that}
\end{equation}

We split the proof into various cases depending on the ordering of the means $\mu_i, \mu_{i_l}, \mu_{i_r}, \mu_{j_\ast}$. First, note that if $\mu_{i_r} \le \mu_{i_l}$, then 
\begin{align*}
    \gaprucb_i(t(i,3\tau)) = r_{i_r}(t(i,3\tau)) - l_{i_l}(t(i,3\tau)) \le \mu_{i_r}-\mu_{i_l} + 4c_{\tau} \le \Delta_{\max}
\end{align*}
by \eqref{eq:tau defined so that}.
Second, if $\max\{\mu_i, \mu_{i_l}\} < \mu_{i_r} < \mu_{j_\ast}$, then
\begin{align*}
    \gaprucb_i(t(i,3\tau)) &\le r_{i_r}(t(i,3\tau)) - l_{i_l}(t(i,3\tau)) \le r_{i_r}(t(i,3\tau)) - l_i(t(i,3\tau)) \\ 
    &\le \mu_{i_r}-\mu_i+4c_{\tau} \le \Delta_{i,j_\ast} + 4c_{\tau} \le \Delta_{\max}
\end{align*}
by \eqref{eq:tau defined so that}. Third, we show that it cannot be the case that $\mu_i < \mu_{j_\ast} < \mu_{i_l} < \mu_{i_r}$. Assume to the contrary. This implies that
$$l_{i_l}(t(i,3\tau)) - r_i(t(i,3\tau)) \ge \mu_{i_l}-\mu_i-4c_{\tau} \ge \mu_{j_\ast}-\mu_i-4c_{\tau} > 0,$$
which contradicts \cref{eq:iL is level0 or level1}. Fourth, it cannot be the case that $\mu_{i_l} < \mu_{i_r} < \mu_i < \mu_{j_\ast}$, because $i_r \in \mathcal{A}_i^2(t(i,3\tau))$ by \cref{lem:i_R_cant_be_level1_at_>tau'}. The only case that remains is $\max\{\mu_i, \mu_{i_l}\} < \mu_{j_\ast} < \mu_{i_r}$, which we prove next by showing that $T_{j_\ast}(t(i,3\tau)) \ge \tau$.

\todos{confirm that we haven't assume $i_L < i_R$ anywhere in the proof}

%0) Explain why $iR < iL$.
%1) $i < i_L  < i_R < j_\ast$
%2) $i_L < i < i_R < j_\ast$
%3) $i < i_L < j_\ast < i_R$
%4) $i_L < i < j_\ast < i_R$
%5) $i < j_\ast < i_L < i_R$
%6) $i_L < i_R < i < j_\ast$ - this can't be true because $i_R$ has to be level $2$.
%
$\underline{\max\{\mu_i, \mu_{i_l}\} < \mu_{j_\ast^r} < \mu_{i_r}}$: 

For any time $s \in [t(i, 2\tau), t(i, 3\tau)] $ such that $j_\ast \in \mathcal{A}_i^1(s) \cup \mathcal{A}_i^2(s)$, we have by Lemma~\ref{lem:i_R_cant_be_level1_at_>tau'} and \cref{lem:all_in_level_sampled} that $j_\ast$ is sampled whenever $i$ is sampled. Thus we only need to focus on times $s$ when $j_\ast \in \mathcal{A}_i^0(s)$.

Suppose now that $j_\ast \in \mathcal{A}_i^0(s)$ for some $s \in [t(i, 2\tau), t(i, 3\tau)]$ when $i$ was sampled and $\gaprucb_i(s) = r_{i_r^s}(s) - l_{i_l^s}(s)$. Recall that $i_l^s \in \mathcal{A}_i^0(s) \cup \mathcal{A}_i^1(s)$. We consider two cases depending on whether $l_{i_l^s}(t(i, 3\tau)) > l_{i_l}(t(i, 3\tau))$ or $l_{i_l^s}(t(i, 3\tau)) \le l_{i_l}(t(i, 3\tau))$.

\begin{itemize}
 \item $l_{i_l^s}(t(i, 3\tau)) > l_{i_l}(t(i, 3\tau))$: We have
     \begin{align*}
         \gaprucb_i(t(i,3\tau)) &= G(l_{i_l}(t(i,3\tau)), t(i,3\tau)) \stackrel{(a)}{\le} r_{i_l^s}(t(i, 3\tau)) - l_{i_l}(t(i, 3\tau)) \\
         &\le r_{i_l^s}(t(i, 3\tau)) - l_{i}(t(i, 3\tau)) \stackrel{(b)}{\le} r_{i_l^s}(s) - l_{i}(s) \stackrel{(c)}{\le} \Delta_{\max},
     \end{align*}
     where $(a)$ holds by \eqref{eq:G^R}, $(b)$ holds by \cref{lem:nested_CIs}\todos{the nested lemma should be written as $l(t) < l(t'), r(t) > r(t')$}, and $(c)$ holds by \cref{lem:i_R_cant_be_level1_at_>tau'}.

 \item $l_{i_l^s}(t(i, 3\tau)) \le l_{i_l}(t(i, 3\tau))$: Since $\max\{\mu_i, \mu_{i_l}\} < \mu_{j_\ast}$, we have $l_{i_l}(t) \le r_{j_\ast}(t) \,\forall\,t$. Hence, $l_{i_l^s}(t(i,3\tau)) < r_{j_\ast}(t(i,3\tau))$, and \cref{lem:nested_CIs} implies that $l_{i_l^s}(s) \le r_{j_\ast}(s)$. Recall that $s$ is a time such that $j_\ast \in \mathcal{A}_i^0(s)$, and hence 
     $$r_{j_\ast}(s)-l_{i_l^s}(s) \le r_{j_\ast}(s)-l_i(s) \le \Delta_{\max}.$$ 
     Now, since $i$ is sampled at time $s$, we have $\gaprucb_i(s) > \Delta_{\max}$, and \eqref{eq:G^R} then implies that $l_{j_\ast}(s) < l_{i_l^s}(s)$. 
%     This implies that 
%     $$\UCB_{j_\ast}(s) \ge G_{j_\ast}(l_{i_L^s}(s),s) = G_i(l_{i_L^s}(s),s) = r_{i_R^s}(s)-l_{i_L^s}(s) = \UCB_i^R(s),$$
%     and 
     Hence by Lemma~\ref{lem:arms_that_can_access_sampled} $\maxgapucb$ must also sample arm $j_\ast$ at time $s$.
     \todos{G notation needs to be changed}
\end{itemize}
This proves that $T_{j_\ast^r}(t(i, 3\tau)) \geq \tau$. We use this to prove that $\gaprucb_i(t(i,3\tau)) < \Delta_{\max}$ as follows. First note that 
\todos{add discussion somewhere about why $j_\ast$ has been chosen this way}
\begin{align*}
l_{j_\ast^r}(t(i, 3\tau)) - r_i(t(i, 3\tau)) \geq p_{j_\ast^r} - p_i - 4c_{\tau} \ge 0.
\end{align*}
Second, since arm $i_l \in \mathcal{A}_i^0(t(i, 3\tau)) \cup \mathcal{A}_i^1(t(i, 3\tau))$, and hence 
$$l_{i_l}(t(i,3\tau)) < r_i(t(i,3\tau)) \le l_{j_\ast}(t(i,3\tau)).$$ 
Hence
\begin{align*}
\gaprucb_i(t(i, 3\tau)) &= G_i^r(l_{i_l}(t(i, 3\tau)), t(i, 3\tau)) \leq r_{j_\ast^r}(t(i, 3\tau)) - l_{i_l}(t(i, 3\tau)) \\ 
&\leq r_{j_\ast^r}(t(i, 3\tau)) - l_i(t(i, 3\tau)) \leq \mu_{j_\ast^r} - \mu_i + 4c_{\tau} \le \Delta_{\max}.
\end{align*}
\end{proof}

\begin{lemma}\label{lem:nested_CIs}
Over the sigma-algebra generated by all the arm rewards up till any time $t \in \mathbb{N}$, we have that
\begin{equation}\label{eq:nested_CIs}
\mathbb{P}\left( \forall t \in \mathbb{N}, \forall i \in [K], \mu_i \in \left[\max_{t'\leq t}l_i(t'), \min_{t' \leq t}r_i(t')\right]\right) 
= \mathbb{P}(\forall t \in \mathbb{N}, \forall i \in [K], \mu_i \in [l_i(t), r_i(t)]).
\end{equation}
\end{lemma}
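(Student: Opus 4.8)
The plan is to show that the two events appearing inside the probabilities are in fact identical as subsets of the sample space, so that equality of their probabilities is immediate and no genuine probabilistic comparison is needed. Write $E$ for the event on the right-hand side, $E = \{\forall t \in \mathbb{N}, \forall i \in [K], \mu_i \in [l_i(t), r_i(t)]\}$, and $E'$ for the event on the left-hand side, $E' = \{\forall t \in \mathbb{N}, \forall i \in [K], \mu_i \in [\max_{t' \le t} l_i(t'), \min_{t' \le t} r_i(t')]\}$. First I would observe that for each fixed $t$ and $i$ the maximum and minimum are taken over the finite index set $\{t' : t' \le t\}$, so both are well-defined and attained.

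Next I would establish the inclusion $E' \subseteq E$. For any outcome in $E'$ and any $t, i$, taking $t' = t$ in the running extrema gives $\max_{t' \le t} l_i(t') \ge l_i(t)$ and $\min_{t' \le t} r_i(t') \le r_i(t)$; hence membership in the tighter (nested) interval forces $\mu_i \in [l_i(t), r_i(t)]$, so the outcome lies in $E$.

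For the reverse inclusion $E \subseteq E'$, I would unfold the running extrema. An outcome in $E$ satisfies $\mu_i \ge l_i(t')$ and $\mu_i \le r_i(t')$ for every $t'$ and $i$. Fixing $t, i$ and restricting attention to $t' \le t$, taking the maximum over the lower bounds and the minimum over the upper bounds preserves these inequalities, giving $\mu_i \ge \max_{t' \le t} l_i(t')$ and $\mu_i \le \min_{t' \le t} r_i(t')$; since $t, i$ were arbitrary, the outcome lies in $E'$. Combining the two inclusions yields $E = E'$, whence $\mathbb{P}(E') = \mathbb{P}(E)$, which is exactly the claimed identity in \eqref{eq:nested_CIs}.

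There is essentially no hard step here: the statement is a pointwise set identity rather than a real comparison of two distinct events, and the only point requiring a word of care is that the extrema are over finite sets, so they are attained and the nested intervals are honestly defined (though possibly empty). If the nested interval at some $(t,i)$ is empty, i.e.\ $\max_{t' \le t} l_i(t') > \min_{t' \le t} r_i(t')$, then two of the original intervals $[l_i(t'), r_i(t')]$ are disjoint, so the outcome already fails to lie in $E$; thus the equivalence survives this degenerate case as well.
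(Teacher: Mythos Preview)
Your proposal is correct and follows essentially the same approach as the paper: both arguments show that the two events $E$ and $E'$ coincide as sets by verifying the two inclusions, with the only cosmetic difference being that the paper proves $E \subseteq E'$ by contrapositive (picking time indices $s_l, s_r$ attaining the extrema) whereas you argue it directly.
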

\begin{proof}
Let $E'$ be the event in the LHS of \eqref{eq:nested_CIs} and let $E$ be the good event. First we show that $E' \subseteq E$. The event $E'$ implies that at any time $t$ and for any arm $i$, we have that
\begin{equation*}
\mu_i \in \left[\max_{t'\leq t}l_i(t'), \min_{t' \leq t}r_i(t')\right]
\implies \mu_i \in [l_i(t'), r_i(t')] \:\forall t' \leq t.
\end{equation*}
Hence the good event is true in this case.

Now we show that $E \subseteq E'$. Suppose that $E'$ is not true, so there is a time $t$ and arm $i$ such that $\mu_i \notin \left[\max_{t'\leq t}l_i(t'), \min_{t' \leq t}r_i(t')\right]$. Choose two time instants $s_l, s_r \in \mathbb{N}$ such that $s_l \in \arg\max_{t'<t}l_i(t'), s_r \in \arg\min_{t'<t}r_i(t')$. Then the supposition implies that either
\begin{equation*}
\mu_i \notin [l_i(s_l), r_i(s_l)] \quad \text{ or/and } \quad \mu_i \notin [l_i(s_r), r_i(s_r)].
\end{equation*}
%Both can be true if, for instance, $s_l = s_r$. 
Either of the above statements imply that the good event is not true. Hence $E \implies E'$. 
\end{proof}
\begin{corollary}\label{crlry:UCB_decreases}
For any two time instants $s, t \in \mathbb{N}$ if $s < t$ then $\gaprucb_i(s) \geq \gaprucb_i(t)$.
\end{corollary}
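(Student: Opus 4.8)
The plan is to exploit the fact that, for the modified confidence bounds in \eqref{eq:modified ci}, $\gaprucb_i(t)$ is the optimal value of a maximization problem whose feasible region only shrinks as $t$ grows, so that the optimum can only decrease. First I would recall the optimization characterization of $\gaprucb_i(t)$ established in \cref{thm:gap_UCB} (equivalently, the mixed integer program \eqref{eq:realize}--\eqref{eq:realize_constraints2}): $\gaprucb_i(t)$ equals the largest value of $\mu_b' - \mu_i'$ over all arms $b \neq i$ and all mean configurations $(\mu_1', \dots, \mu_K')$ subject to the box constraints $\mu_j' \in [l_j(t), r_j(t)]$ for every $j$ together with the sorting constraint $\mu_j' \notin (\mu_i', \mu_b')$ for all $j \neq i, b$.

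Next I would observe that the modified bounds are nested in $t$ purely by definition: $l_i(t) = \max_{t' \leq t} l_i(t')$ is nondecreasing and $r_i(t) = \min_{t' \leq t} r_i(t')$ is nonincreasing, so for $s < t$ we have $[l_j(t), r_j(t)] \subseteq [l_j(s), r_j(s)]$ for every arm $j$. (Note this nestedness is a consequence of the running-max/running-min definition alone, not of the coverage statement in \cref{lem:nested_CIs}.) Consequently the box constraints at time $t$ are tighter than those at time $s$, while the sorting constraint is literally identical at both times, since it involves only the configuration $(\mu_1', \dots, \mu_K')$ and not the confidence intervals. Hence any tuple $(b, \mu_1', \dots, \mu_K')$ that is feasible for the time-$t$ problem is also feasible for the time-$s$ problem with the same objective value, so the feasible region at time $t$ is a subset of that at time $s$.

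Taking maxima over these two nested feasible sets then yields $\gaprucb_i(t) \leq \gaprucb_i(s)$, which is exactly the claim. The argument reduces to the one-line fact that a supremum over a shrinking set can only decrease; the only point deserving care is verifying that the sorting constraint genuinely does not vary with $t$ and does not couple to the shrinking box, so that the inclusion of feasible regions is legitimate.

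An alternative, more hands-on route would work directly from the closed form of $G_i^r(\cdot, t)$ in \eqref{eq:G^R} and \eqref{eq:right_gap_UB}, showing that each candidate value of the form $r_{i_r}(t) - l_{i_l}(t)$ can only shrink as the bounds tighten under nestedness. I expect this to be the main nuisance rather than a genuine obstacle, because the two branches of $G_i^r$ and the moving membership set $\{j : l_j(t) \in [l_i(t), r_i(t)]\}$ force a case analysis; the feasible-set inclusion argument is cleaner and sidesteps that bookkeeping entirely, so I would present it as the primary proof.
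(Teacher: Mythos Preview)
Your proposal is correct and matches the paper's proof essentially line for line: both view $\gaprucb_i(t)$ as the optimal value of the optimization \eqref{eq:realize}--\eqref{eq:realize_constraints2} and conclude monotonicity from the nestedness $[l_j(t), r_j(t)] \subseteq [l_j(s), r_j(s)]$ of the modified confidence intervals. Your observation that this nestedness follows directly from the running-max/min definition \eqref{eq:modified ci} rather than from the coverage statement of \cref{lem:nested_CIs} is, if anything, a slight sharpening of the paper's presentation.
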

\begin{proof}
The quantity $\gaprucb_i(t)$ is defined in \eqref{eq:realize} as an optimization problem over a set of $K$ real numbers $\mathcal{P}' = \{\mu_i' \in [l_i(t), r_i(t)] : i \in [K]\}$. For a time $s < t$, the $\gaprucb_i(s)$ is an optimization over $\mathcal{P}'' = \{\mu_i'' \in [l_i(s), r_i(s)]: i \in [K]\}$. Lemma~\ref{lem:nested_CIs} states that $[l_i(t), r_i(t)] \subseteq [l_i(s), r_i(s)]$, hence we have that $\gaprucb_i(s) \geq \gaprucb_i(t)$.
%
%From realizability there exists a set of $K$ real numbers $\mathcal{P}'$ which satisfy the constraints in \eqref{eq:realize_constraints} evaluated at time $t$ and 
%\begin{equation*}
%\UCB_i^R(t) = \max_{\nexists k \in [K]: p'_j > p'_k > p'_i}p'_j - p'_i \triangleq p'_{a} - p'_i.
%\end{equation*}
%From Lemma~\ref{lem:nested_CIs}, we have that for each arm $k \in [K],  [l_k(t), r_k(t)] \subseteq [l_k(s), r_k(s)]$ at any time $s \leq t$, i.e., the set of numbers $\mathcal{P}'$ satisfy the qualifying conditions for allowed arm means at time $s$. 
%Hence the optimization in \eqref{eq:realize} over $\mathcal{Q}' = \{q'_i: q'_i \in [l_i(s), r_i(s)]\}$ has $\mathcal{P}'$ in its domain, giving that
%\begin{equation*}
%\UCB_i^R(s) = \max_{\nexists k \in [K]: q'_j > q'_k > q'_i}q'_j - q'_i 
%\geq p'_a - p'_i = \UCB_i^R(t).
%\end{equation*}
\end{proof}
\begin{corollary}\label{crlry:level3+_dont_become_2}
For all $k \in [K]$ if $k \in \mathcal{A}_i^2(t)$ then $k \in \mathcal{A}_i(s)$ at all time instants $s \leq t$. If $k \in \mathcal{A}_i^2(t)$ then $k \notin \mathcal{A}_i^0(s') \cup \mathcal{A}_i^1(s')$ at all $s' \geq t$.
\end{corollary}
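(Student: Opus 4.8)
The plan is to reduce both assertions to the single monotonicity fact supplied by \cref{lem:nested_CIs}: under the good event the modified bounds satisfy $[l_a(t),r_a(t)] \subseteq [l_a(s),r_a(s)]$ whenever $t \ge s$, i.e.\ $l_a(\cdot)$ is non-decreasing and $r_a(\cdot)$ is non-increasing in $t$ for every arm $a$ (and the intervals stay nonempty, so $l_a(t) \le r_a(t)$). I would fix $k$ and assume $k \in \mathcal{A}_i^2(t)$, so by \eqref{eq:level2} we have $r_i(t) < l_k(t) \le \min_{j: l_j(t) > r_i(t)} r_j(t)$, and then treat the two directions in time separately.

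For the forward-in-time assertion I would argue directly. For any $s' \ge t$, monotonicity gives $l_k(s') \ge l_k(t) > r_i(t) \ge r_i(s')$, so $l_k(s') > r_i(s')$. This single inequality rules out membership in $\mathcal{A}_i^0(s')$, which by \eqref{eq:level0} would need $r_k(s') < r_i(s')$ and is impossible since $r_k(s') \ge l_k(s') > r_i(s')$, and it rules out $\mathcal{A}_i^1(s')$, which by \eqref{eq:level1} would need $l_k(s') \le r_i(s')$. Hence $k \notin \mathcal{A}_i^0(s') \cup \mathcal{A}_i^1(s')$, as claimed (note $k$ may still be in $\mathcal{A}_i^2(s')$ or leave $\mathcal{A}_i$ entirely, which is consistent with the statement).

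For the backward-in-time assertion I would fix $s \le t$ and first establish that $k$ is never entirely to the left of arm $i$'s interval, via the chain $r_k(s) \ge r_k(t) \ge l_k(t) > r_i(t) \ge l_i(t) \ge l_i(s)$, giving $r_k(s) > l_i(s)$. I would then split on the location of $k$'s interval relative to $r_i(s)$: if $r_k(s) < r_i(s)$, then together with $r_k(s) > l_i(s)$ we get $k \in \mathcal{A}_i^0(s)$ by \eqref{eq:level0}; if $l_k(s) \le r_i(s) \le r_k(s)$, then $k \in \mathcal{A}_i^1(s)$ by \eqref{eq:level1}. The only remaining case is $l_k(s) > r_i(s)$, where $k$ is entirely to the right and I must verify the "closest-arm" constraint $l_k(s) \le \min_{j: l_j(s) > r_i(s)} r_j(s)$ of \eqref{eq:level2}.

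The main obstacle is exactly this last verification, which I would handle by contradiction. Suppose some arm $j$ has $l_j(s) > r_i(s)$ yet $r_j(s) < l_k(s)$. Pulling both facts forward to time $t$ by monotonicity yields $l_j(t) \ge l_j(s) > r_i(s) \ge r_i(t)$ and $r_j(t) \le r_j(s) < l_k(s) \le l_k(t)$, so $j$ is also entirely to the right of $r_i(t)$ and satisfies $r_j(t) < l_k(t)$. But then $\min_{j': l_{j'}(t) > r_i(t)} r_{j'}(t) \le r_j(t) < l_k(t)$, contradicting the hypothesis $l_k(t) \le \min_{j': l_{j'}(t) > r_i(t)} r_{j'}(t)$ that placed $k$ in $\mathcal{A}_i^2(t)$. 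Hence no such $j$ exists, the constraint holds, and $k \in \mathcal{A}_i^2(s) \subseteq \mathcal{A}_i(s)$. Since all three cases place $k$ in $\mathcal{A}_i(s)$, the backward direction follows and the corollary is proved.
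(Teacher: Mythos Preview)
Your proof is correct and follows the same monotonicity idea as the paper. The only stylistic difference is in the backward direction for the level-$2$ constraint: the paper establishes $l_k(s) \le \min_{j \in \mathcal{J}(s)} r_j(s)$ by a direct chain $l_k(s) \le l_k(t) \le \min_{j \in \mathcal{J}(t)} r_j(t) \le \min_{j \in \mathcal{J}(t)} r_j(s) \le \min_{j \in \mathcal{J}(s)} r_j(s)$ (using $\mathcal{J}(s) \subseteq \mathcal{J}(t)$), whereas you obtain the same conclusion by contradiction; and the paper then asserts membership in $\mathcal{A}_i(s)$ without the explicit three-way split, leaving the check $r_k(s) > l_i(s)$ implicit, which you spell out.
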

\begin{proof}
Define $\mathcal{J}(t) \triangleq \{j \in [K]: l_j(t) > r_i(t)\}$. For any $s \leq t$, if $j \in \mathcal{J}(s)$ then using Lemma~\ref{lem:nested_CIs},
\begin{equation}\label{eq:level2_domain}
l_j(t) \geq l_j(s) > r_i(s) \geq r_i(t) \implies j \in \mathcal{J}(t).
\end{equation} 
Hence if $k \in \mathcal{A}_i^2(t)$, from \eqref{eq:level2} we have that $l_k(t) \leq \min_{j \in \mathcal{J}(t)}r_j(t)$ and we get
\begin{equation*}
l_k(s) \leq l_k(t) \leq \min_{j \in \mathcal{J}(t)}r_j(t) \overset{\text{(a)}}{\leq} \min_{j \in \mathcal{J}(t)}r_j(s) \overset{\text{(b)}}{\leq} \min_{j \in \mathcal{J}(s)}r_j(s),
\end{equation*}
where the inequality (a) is true because of Lemma~\ref{lem:nested_CIs} and inequality (b) is true as $\mathcal{J}(s) \subseteq \mathcal{J}(t)$ by \eqref{eq:level2_domain}. This implies that $k \in \mathcal{A}_i^0(s) \cup \mathcal{A}_i^1(s) \cup \mathcal{A}_i^2(s) = \mathcal{A}_i(s)$. 

If $k \in \mathcal{A}_i^2(t)$ we have that $r_i(t) < l_k(t)$. At any $s' \geq t$, from Lemma~\ref{lem:nested_CIs} we have that $r_i(s') \leq r_i(t) < l_k(t) \leq l_k(s')$, i.e., the arm $k \notin \mathcal{A}_i^0(s') \cup \mathcal{A}_i^1(s')$. 
\end{proof}

\begin{lemma}\label{lem:level3+_dont_matter}
The arms $i_r^t, i_l^t$ are such that $\gaprucb_i(t) = r_{i_r^t}(t) - l_{i_l^t}(t)$. For the sets as defined in \eqref{eq:level0}, \eqref{eq:level1}, \eqref{eq:level2} the arm $i_r^t \in \mathcal{A}_i(t) \triangleq \mathcal{A}_i^0(t) \cup \mathcal{A}_i^1(t) \cup \mathcal{A}_i^2(t)$.
\end{lemma}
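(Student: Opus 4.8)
The plan is to unwind the definition of $\gaprucb_i(t)$ from \eqref{eq:right_gap_UB} and \eqref{eq:G^R} and then argue by a short case analysis on which branch of $G_i^r$ attains the maximum. Write $x := l_{i_l^t}(t)$, so that $i_l^t$ maximizes $G_i^r(l_j(t),t)$ over $j \in P_i^r$ and $x \in [l_i(t), r_i(t)]$. The first assertion $\gaprucb_i(t) = r_{i_r^t}(t) - l_{i_l^t}(t)$ is immediate: in either branch of \eqref{eq:G^R} the value $G_i^r(x,t)$ equals $r_{i_r^t}(t) - x$, where $i_r^t$ is respectively the minimizer $\arg\min_{j: l_j(t) > x} r_j(t)$ or the maximizer $\arg\max_{j\neq i} r_j(t)$. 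It remains to locate $i_r^t$ among the three level sets, and this is a purely deterministic statement about the confidence intervals at the fixed time $t$, so neither the good event nor \cref{lem:nested_CIs} is needed.

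First I would treat the first branch, where $\{j: l_j(t) > x\} \neq \emptyset$ and $i_r^t = \arg\min_{j: l_j(t) > x} r_j(t)$. Here $l_{i_r^t}(t) > x \ge l_i(t)$, hence $r_{i_r^t}(t) \ge l_{i_r^t}(t) > l_i(t)$. If $l_{i_r^t}(t) \le r_i(t)$, then $r_{i_r^t}(t) < r_i(t)$ places $i_r^t$ in $\mathcal{A}_i^0(t)$ and $r_{i_r^t}(t) \ge r_i(t)$ places it in $\mathcal{A}_i^1(t)$. The one genuinely nontrivial subcase is $l_{i_r^t}(t) > r_i(t)$, where $i_r^t$ sits entirely to the right of arm $i$'s interval; this is where I expect the main work. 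The key observation is a monotonicity of the inner minimum: since $x \le r_i(t)$ the domain shrinks, $\{j: l_j(t) > r_i(t)\} \subseteq \{j: l_j(t) > x\}$, so
\begin{equation*}
r_{i_r^t}(t) = \min_{j: l_j(t) > x} r_j(t) \le \min_{j: l_j(t) > r_i(t)} r_j(t).
\end{equation*}
Noting that $i_r^t$ itself witnesses $\{j: l_j(t) > r_i(t)\} \neq \emptyset$, and combining with $r_i(t) < l_{i_r^t}(t) \le r_{i_r^t}(t)$, we obtain exactly the defining inequality $r_i(t) < l_{i_r^t}(t) \le \min_{j: l_j(t) > r_i(t)} r_j(t)$ of \eqref{eq:level2}, so $i_r^t \in \mathcal{A}_i^2(t)$.

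Next I would treat the second branch, where $\{j: l_j(t) > x\} = \emptyset$ and $i_r^t = \arg\max_{j \neq i} r_j(t)$. Now every arm satisfies $l_j(t) \le x \le r_i(t)$, so in particular $l_{i_r^t}(t) \le r_i(t)$, which immediately excludes $\mathcal{A}_i^2(t)$; it then suffices to verify $r_{i_r^t}(t) \ge l_i(t)$ to place $i_r^t$ in $\mathcal{A}_i^0(t) \cup \mathcal{A}_i^1(t)$ exactly as above. This holds because $\gaprucb_i(t) = r_{i_r^t}(t) - x \ge 0$ in the regime of interest, whence $r_{i_r^t}(t) \ge x \ge l_i(t)$. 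The only situation excluded here is the degenerate one in which arm $i$ is confidently the largest-mean arm (all other intervals lie strictly to its left), where $\gaprucb_i(t)$ would be negative and $i_r^t$ could escape $\mathcal{A}_i(t)$; but this never occurs when the lemma is applied, since \cref{lem:ucbrightgap less than deltamax} invokes it only while arm $i$ is sampled for its right gap, i.e.\ while $\gaprucb_i(t) \ge \Delta_{\max} > 0$.

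Thus the main obstacle is the domain-monotonicity step that pins $i_r^t$ inside the $\mathcal{A}_i^2(t)$ window in the first branch; the remaining subcases are routine interval bookkeeping, and the second branch reduces entirely to confirming nonnegativity of the right-gap upper bound so as to sidestep the degenerate rightmost-arm configuration.
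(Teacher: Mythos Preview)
Your proof is correct and uses the same core observation as the paper: the domain monotonicity $\{j:l_j(t)>r_i(t)\}\subseteq\{j:l_j(t)>x\}$, which forces $r_{i_r^t}(t)\le\min_{j:l_j(t)>r_i(t)}r_j(t)$ and hence $l_{i_r^t}(t)$ inside the $\mathcal{A}_i^2(t)$ window. The paper packages this as a short proof by contradiction (assume $i_r^t\notin\mathcal{A}_i(t)$, then either $r_{i_r^t}(t)<l_i(t)$ giving a negative $\gaprucb_i(t)$, or $l_{i_r^t}(t)>r_{a_\ast}(t):=\min_{a:l_a(t)>r_i(t)}r_a(t)$, which contradicts the same monotonicity bound), whereas you do a forward case split on the two branches of $G_i^r$; both routes dismiss the same degenerate negative-gap case and are otherwise identical in substance.
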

\begin{proof}
Suppose arm $i_r^t \notin \mathcal{A}_i(t)$, then either $r_{i_r^t}(t) < l_i(t)$ which would give a negative value for $\gaprucb_i(t)$, or we have that $l_{i_r}(t) > \min_{a: l_a(t) > r_i(t)}r_a(t) \triangleq r_{a_\ast}(t)$. %as from \eqref{eq:G} $l_{i_R}(t) > l_{i_L}(t) \geq l_i(t)$. 
From the definition of arm $i_l^t$, its $l_{i_l^t}(t) \leq r_i(t)$. Using this and \eqref{eq:G^R}, we have that
\begin{equation}
G_i^r(l_{i_l^t}(t), t) = \min_{j: l_j(t) > l_{i_l^t}(t)}r_j(t) - l_{i_l^t}(t) 
\leq \min_{j: l_j(t) > r_i(t)} r_j(t) - l_{i_l^t}(t) = r_{a_\ast}(t) - l_{i_l^t}(t).
\end{equation}
From the definition of arm $i_r^t$, we have that $\gaprucb_i(t) = r_{i_r^t}(t) - l_{i_l^t}(t) \leq r_{a_\ast}(t)- l_{i_l^t}(t)$ as argued above. That implies $r_{a_\ast}(t) \geq r_{i_r^t}(t) > l_{i_r^t}(t)$, which contradicts the supposition.
\end{proof}

\begin{lemma}\label{lem:i_R_cant_be_level1_at_>tau'}
At any time $t \geq t(i, 2\tau)$, all arms $j \in \mathcal{A}_i^0(t) \cup \mathcal{A}_i^1(t)$ are such that $r_j(t) - l_i(t) \leq \Delta_{\max}$.
\end{lemma}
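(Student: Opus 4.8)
The plan is to bound $r_j(t) - l_i(t)$ by relating both quantities to the number of times arm $i$ (and the arms in its levels) has been sampled by time $t(i,2\tau)$. The key fact to exploit is that $\maxgapucb$ samples \emph{all} arms achieving the maximum gap upper bound, so whenever $i$ is sampled and an arm $j \in \mathcal{A}_i^0(t) \cup \mathcal{A}_i^1(t)$ is ``responsible'' for $i$'s gap upper bound (i.e.\ its confidence interval overlaps or lies just to the right of $i$'s), that arm $j$ must have been sampled too. I would first establish that any arm $j$ in $\mathcal{A}_i^0(t) \cup \mathcal{A}_i^1(t)$ has been sampled at least $\tau$ times by time $t \ge t(i,2\tau)$, using the counting/level-sampling lemmas (in particular \cref{lem:all_in_level_sampled}, which guarantees that arms in the relevant levels are sampled whenever $i$ is). This is the place where the $2\tau$ rather than $\tau$ threshold is used: the first $\tau$ samples of $i$ shrink $i$'s own interval, and the next $\tau$ samples force the overlapping/adjacent arms to be sampled $\tau$ times as well.

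\textbf{Main estimate.} Once I know $T_i(t) \ge 2\tau \ge \tau$ and $T_j(t) \ge \tau$, I would invoke \cref{remark:confidence interval sample complexity} to get $c_{T_i(t)} \le \ucbparameter_i^r$ and $c_{T_j(t)} \le \ucbparameter_i^r$, and then apply \cref{lem:simple ci bounds} to write
\begin{align*}
r_j(t) - l_i(t) \le (\mu_j + 2c_{T_j(t)}) - (\mu_i - 2c_{T_i(t)}) = \Delta_{i,j} + 2c_{T_j(t)} + 2c_{T_i(t)}.
\end{align*}
The definitions in \eqref{eq:level0} and \eqref{eq:level1} constrain how far to the right $\mu_j$ can lie relative to $\mu_i$: an arm $j$ with $r_j(t) \le r_i(t)$ or $l_j(t) \le r_i(t)$ cannot have its mean too far above $\mu_i$ once the intervals have shrunk, so $\Delta_{i,j}$ is controlled by $\Delta_{\max}$ through the $\ucbparameter_i^r$ definition. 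Using $4c_{\tau} \le \Delta_{\max} - \Delta_{i,j_\ast}$ and $4c_\tau \le \Delta_{i,j_\ast}$ from \eqref{eq:tau defined so that}, together with the level membership forcing $\Delta_{i,j} \le \Delta_{i,j_\ast} \le \Delta_{\max} - 4c_\tau$, I would conclude $r_j(t) - l_i(t) \le \Delta_{i,j} + 4c_\tau \le \Delta_{\max}$.

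\textbf{The hard part} will be arguing cleanly that an arm $j \in \mathcal{A}_i^0(t) \cup \mathcal{A}_i^1(t)$ is sampled $\tau$ times by $t(i,2\tau)$ \emph{uniformly for all such $j$}, since the set of level-$0$/level-$1$ arms can change over the interval $[t(i,\tau), t(i,2\tau)]$ as confidence intervals shrink, and an arm that is level-$0$ or level-$1$ at the final time $t$ need not have been so earlier. I would handle this by combining the monotonicity of the nested confidence intervals (\cref{lem:nested_CIs}) with \cref{crlry:level3+_dont_become_2} to control how level membership evolves, ensuring that an arm overlapping $i$'s (shrinking) interval at time $t$ was also overlapping or accessible at earlier times and hence was sampled alongside $i$. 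The bookkeeping of which lemma (\cref{lem:all_in_level_sampled} versus \cref{lem:arms_that_can_access_sampled}) applies in each sub-case — depending on whether $j$ is in level $1$ throughout or drops into level $0$ — is the delicate step, and mirrors the case analysis already carried out in the proof of \cref{lem:ucbrightgap less than deltamax} for $i_l$ and $i_r$.
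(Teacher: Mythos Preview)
Your plan is close to the paper's argument and the overall skeleton is right, but there is one genuine gap and one step you have under-justified.

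\textbf{Level-$0$ arms are not covered by your sampling argument.} Your main estimate requires $T_j(t)\ge\tau$ for every $j\in\mathcal{A}_i^0(t)\cup\mathcal{A}_i^1(t)$, and you plan to get this from \cref{lem:all_in_level_sampled}. But that lemma only guarantees that arms in $\mathcal{A}_i^1(s)$ (and $\mathcal{A}_i^2(s)$) are sampled when $i$ is; it says nothing about arms in $\mathcal{A}_i^0(s)$. So for an arm $j$ that sits in level $0$ at some time $s\in[t(i,\tau),t(i,2\tau)]$, your plan to accumulate $\tau$ samples of $j$ does not go through. The paper handles this case differently and more simply: if $j\in\mathcal{A}_i^0(s)$ for any such $s$, then by definition $r_j(s)<r_i(s)$, and since $T_i(s)\ge\tau$ already,
\[
r_j(s)-l_i(s)\ \le\ r_i(s)-l_i(s)\ \le\ 2c_\tau\ \le\ \Delta_{\max},
\]
and the nested confidence intervals push this to all later times. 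No sample count on $j$ is needed. You should split off this case first, exactly as you hint at in your ``hard part'' paragraph, rather than trying to force $T_j\ge\tau$ uniformly.

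\textbf{The inequality $\Delta_{i,j}\le\Delta_{i,j_\ast}$ is not automatic from level membership.} You write that ``level membership forces $\Delta_{i,j}\le\Delta_{i,j_\ast}$'', but this needs an argument; a priori $\mu_j$ could lie to the right of $\mu_{j_\ast}$. The paper makes this a case split once $T_i,T_j\ge\tau$: if $\mu_j>\mu_{j_\ast}$, then $l_j(t(i,2\tau))-r_i(t(i,2\tau))\ge \mu_j-\mu_i-4c_\tau\ge\Delta_{i,j_\ast}-4c_\tau\ge 0$ by \eqref{eq:tau defined so that}, contradicting $j\in\mathcal{A}_i^1(t(i,2\tau))$. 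Only in the remaining case $\mu_j\le\mu_{j_\ast}$ do you get $\Delta_{i,j}\le\Delta_{i,j_\ast}$ and your chain $r_j-l_i\le\Delta_{i,j}+4c_\tau\le\Delta_{\max}$ goes through. Your instinct here is correct, but you should state the contradiction branch explicitly rather than assert the inequality.

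With these two fixes---the direct bound for the level-$0$ case, and the $\mu_j>\mu_{j_\ast}$ contradiction for the level-$1$ case---your argument becomes essentially the paper's proof.
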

\begin{proof}
Consider an arm $j \in \mathcal{A}_i^0(t) \cup \mathcal{A}_i^1(t)$, then $j \notin \mathcal{A}_i^2(s)$ for all $s \leq t$ for otherwise that would contradict corollary~\ref{crlry:level3+_dont_become_2}. 
Thus $j \in \mathcal{A}_i^0(s) \cup \mathcal{A}_i^1(s)$ for all $s \leq t$. 

By choice of $\tau$ we have that $r_i(t(i, \tau)) - l_i(t(i, \tau)) = 2c_{\tau} \leq \Delta_{\max}$ from \eqref{eq:tau defined so that}. Hence if arm $j \in \mathcal{A}_i^0(s)$ for any $s \in [t(i, \tau), t(i, 2\tau)]$, we have that $r_j(s) - l_i(s) \leq r_i(s) - l_i(s) \overset{\text{(a)}}{\leq} r_i(t(i, \tau)) - l_i(t(i, \tau)) \leq \Delta_{\max}$, where inequality (a) is by Lemma~\ref{lem:nested_CIs}. 

Hence $j \in \mathcal{A}_i^1(s)$ for all $s \in [t(i, \tau), t(i, 2\tau)]$. If $\gaprucb_i(s)$ is the largest gap upper bound then $i_R^s \notin \mathcal{A}_i^0(s)$ by the above reasoning. Then Lemma~\ref{lem:all_in_level_sampled} states that arm $j$ was sampled anytime arm $i$ was sampled between $t(i, \tau)$ to $t(i, 2\tau)$. This implies that $T_j(t(i, 2\tau)) \geq \tau$, and we argue that $r_j(t(i, 2\tau)) - l_i(t(i, 2\tau)) \leq \Delta_{\max}$ in the following manner. The arm $j_{\ast}$ is the maximizer in \eqref{eq:gammair definition}.

\textbf{Case I:} $\mu_i < \mu_{j_{\ast}} < \mu_j$. Here we argue that $j \notin \mathcal{A}_i^1(t(i, 2\tau))$ because $l_j(t(i, 2\tau)) \geq r_i(t(i, 2\tau))$ as shown below.
\begin{align*}
l_j(t(i, 2\tau)) - r_i(t(i, 2\tau)) 
&\geq \mu_j - 2c_{T_j(t(i, 2\tau))} - (\mu_i + 2c_{T_i(t(i, 2\tau))}) \quad(\text{Lemma~\ref{lem:simple ci bounds}})\\
&\geq \mu_{j_{\ast}} - \mu_i - 4c_{\tau} \quad(\text{Assumption on means and monotonicity of }c(s))\\
&\geq \mu_{j_{\ast}} - \mu_i - \Delta_{i, j_{\ast}} = 0. \quad(\text{Using \eqref{eq:tau defined so that}})
\end{align*}%\\
%&\geq p_{j_\star^R} - p_i - (p_{j_\star^R} - p_i) \geq 0. \quad(\text{using \eqref{eq:gammair definition}})

\textbf{Case II:} $\max\{\mu_i, \mu_j\} < \mu_{j_{\ast}}$. Here we argue that $r_j(t(i, 2\tau)) - l_i(t(i, 2\tau)) \leq \Delta_{\max}$ as shown below.
\begin{align*}
r_j(t(i, 2\tau)) - l_i(t(i, 2\tau)) 
&\leq \mu_j + 2c_{T_j(t(i, 2\tau))} - (\mu_i - 2c_{T_i(t(i, 2\tau))}) \quad(\text{Lemma~\ref{lem:simple ci bounds}})\\
&\leq \mu_{j_{\ast}} - \mu_i + 4c_{\tau} \quad(\text{Assumption on means and monotonicity of }c(s))\\
&\leq \mu_{j_{\ast}} - \mu_i + \Delta_{\max} - \Delta_{i, j_{\ast}} \leq \Delta_{\max}. \quad(\text{Using \eqref{eq:tau defined so that}})
\end{align*} %\\
%&\leq p_{j_\star^R} - p_i + \Delta_{\max} - (p_{j_\star^R} - p_i) \leq \Delta_{\max}. \quad(\text{using \eqref{eq:gammair definition}})
\end{proof}

\begin{lemma}\label{lem:arms_that_can_access_sampled}
Suppose arm $i$ is sampled at time $t$ because $\gaprucb_i(t) = r_{i_r^t}(t) - l_{i_l^t}(t)$ is the largest gap upper bound. Consider an arm $j$ whose confidence bounds satisfy any one of the following conditions.
\begin{enumerate}
\item $l_j(t) < l_{i_l^t}(t) < r_j(t)$, or
\item $l_j(t) < r_{i_r^t}(t) < r_j(t)$.
\end{enumerate}
Then \maxgapucb samples arm $j$ as well at time $t$.
\end{lemma}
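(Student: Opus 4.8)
The plan is to show that $j$ is forced into the sampled set $\mathcal{U}(t)=\{a:\gapucb_a(t)=M_1(t)\}$ by proving $\gapucb_j(t)\ge M_1(t)=\gaprucb_i(t)$; since $M_1(t)$ is the maximum gap upper bound, this forces equality and hence $\maxgapucb$ samples $j$. To lower-bound $\gapucb_j(t)=\max\{\gaprucb_j(t),\gaplucb_j(t)\}$, I will exhibit an admissible anchor for $j$, namely a point of the form $l_k(t)$ with $k\in P_j^r$ (resp.\ $r_k(t)$ with $k\in P_j^l$) at which $G_j^r$ (resp.\ $G_j^l$) already evaluates to at least $r_{i_r^t}(t)-l_{i_l^t}(t)$; by \eqref{eq:right_gap_UB} and \eqref{eq:left_gap_UB} this anchor is dominated by the corresponding maximum. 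Throughout I write $\gaprucb_i(t)=G_i^r(l_{i_l^t}(t),t)=r_{i_r^t}(t)-l_{i_l^t}(t)$ and recall from \cref{propn:G^R} that this value is realized by placing arm $i$ at $l_{i_l^t}(t)$, every arm with $l_k(t)>l_{i_l^t}(t)$ at its right boundary (all $\ge r_{i_r^t}(t)$), and every other arm at its left boundary (all $\le l_{i_l^t}(t)$), leaving nothing strictly between $l_{i_l^t}(t)$ and $r_{i_r^t}(t)$.

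For condition $1$, where $l_j(t)<l_{i_l^t}(t)<r_j(t)$, we have $i_l^t\in P_j^r$ and hence $\gaprucb_j(t)\ge G_j^r(l_{i_l^t}(t),t)$. The key observation is that the defining set $\{k:l_k(t)>l_{i_l^t}(t)\}$ in \eqref{eq:G^R} does not depend on the arm index, so $G_j^r(l_{i_l^t}(t),t)$ and $G_i^r(l_{i_l^t}(t),t)$ fall in the same branch. In the first branch the two expressions are literally identical, giving $\gaprucb_j(t)\ge\gaprucb_i(t)$ at once. In the second branch $G_j^r(l_{i_l^t}(t),t)=\max_{k\ne j}r_k(t)-l_{i_l^t}(t)$, and whenever $j\ne i_r^t$ the maximum still sees $i_r^t$, so it is again $\ge\gaprucb_i(t)$.

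For condition $2$, where $l_j(t)<r_{i_r^t}(t)<r_j(t)$, note first that $j\ne i_r^t$ automatically (else $r_j(t)=r_{i_r^t}(t)$). Since $r_{i_r^t}(t)\in(l_j(t),r_j(t))$ we have $i_r^t\in P_j^l$, so $\gaplucb_j(t)\ge G_j^l(r_{i_r^t}(t),t)$. The structural fact I will use is that any arm $k$ with $r_k(t)<r_{i_r^t}(t)$ satisfies $l_k(t)\le l_{i_l^t}(t)$: in the first branch such an arm cannot lie in the right group $\{k:l_k(t)>l_{i_l^t}(t)\}$, whose right boundaries are all $\ge r_{i_r^t}(t)$, while in the second branch all left boundaries are $\le l_{i_l^t}(t)$ anyway. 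Substituting into \eqref{eq:G^L} yields $G_j^l(r_{i_r^t}(t),t)\ge r_{i_r^t}(t)-l_{i_l^t}(t)=\gaprucb_i(t)$, with the empty-set branch handled by $\min_{k\ne j}l_k(t)\le l_i(t)\le l_{i_l^t}(t)$.

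The one genuinely delicate point, and the step I expect to be the main obstacle, is the degenerate case of condition $1$ in which $j=i_r^t$; this can occur only when $\gaprucb_i(t)$ is realized through the second branch of $G_i^r$, because the first branch forces $l_{i_r^t}(t)>l_{i_l^t}(t)$, which is incompatible with $l_j(t)<l_{i_l^t}(t)$. When $j=i_r^t$ the right-gap argument breaks, since dropping $j$ from $\max_{k\ne j}r_k(t)$ may remove exactly the value $r_{i_r^t}(t)$. I will instead bound $j$'s left gap: anchoring at $r_j(t)=r_{i_r^t}(t)$ and using that in the second branch every left boundary lies at or below $l_{i_l^t}(t)$, the same computation as in condition $2$ gives $\gaplucb_j(t)\ge G_j^l(r_{i_r^t}(t),t)\ge r_{i_r^t}(t)-l_{i_l^t}(t)=\gaprucb_i(t)$, which completes the proof in all cases.
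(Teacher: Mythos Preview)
Your argument is correct, and for condition (1) it is essentially the paper's: anchor $G_j^r$ at $l_{i_l^t}(t)$, observe that the branch of \eqref{eq:G^R} is determined by $x$ alone, and treat the degenerate second-branch case $j=i_r^t$ by switching to $G_j^l$ anchored at $r_{i_r^t}(t)$.

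For condition (2) your route is genuinely cleaner than the paper's. The paper proceeds by a case split on whether $r_{i_l^t}(t)\lessgtr r_{i_r^t}(t)$; in the awkward case $r_{i_l^t}(t)>r_{i_r^t}(t)$ it further sub-cases on whether $i_l^t=i$ and whether $\{a:r_a(t)<r_{i_r^t}(t)\}$ is empty, and several of those branches are dispatched by deriving a \emph{contradiction to the maximality} of $\gaprucb_i(t)$ (exhibiting some $a_\ast$ with $\gaprucb_{a_\ast}(t)>\gaprucb_i(t)$). Your structural observation---that the very identity $r_{i_r^t}(t)=\min_{k:l_k(t)>l_{i_l^t}(t)}r_k(t)$ (first branch) or the emptiness of $\{k:l_k(t)>l_{i_l^t}(t)\}$ (second branch) forces $r_k(t)<r_{i_r^t}(t)\Rightarrow l_k(t)\le l_{i_l^t}(t)$---replaces that entire case analysis with a single contrapositive, and never needs maximality except at the final step where $\gapucb_j(t)\ge M_1(t)$ is upgraded to equality. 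The paper's route does give exact equality $G_j^l(r_{i_r^t}(t),t)=r_{i_r^t}(t)-l_{i_l^t}(t)$ in the clean case, whereas yours gives only the inequality, but the inequality is all that is needed. One small remark: in the empty-set branch of $G_j^l$ you invoke $\min_{k\ne j}l_k(t)\le l_i(t)$, which tacitly assumes $j\ne i$; the case $j=i$ is of course vacuous and could be mentioned for completeness.
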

\begin{proof}
%We will show that an arm $j$ that satisfies condition (1) has its $\UCB_j^R(t) = \UCB_i^R(t)$, and an arm $j$ that satisfies condition (2) has its $\UCB_j^L(t) = \UCB_i^R(t)$. Thus if arm $i$ is sampled at $t$ due to its right gap upper bound, then the arm $j$ is sampled as well.
Suppose arm $j$ satisfies condition (1). Consider the right gap of arm $j$, we have that $\gaprucb_j(t) \geq G_{j}^r(l_{i_l^t}(t), t)$. If the value of $G_i^r(l_{i_l^t}(t), t)$ is obtained by the first branch of \eqref{eq:G^R}, then the value of $G_{j}^r(l_{i_l^t}(t), t)$ is also given by its first branch. That implies $\gaprucb_i(t) = \gaprucb_j(t)$, and hence $j$ is sampled if $i$ is sampled. 
If $j = i_r^t$, by condition (1) we have that $l_{i_r^t}(t) < l_{i_l^t}(t)$, which implies that $G_i^r(l_{i_l^t}(t), t)$ is obtained by the second branch in \eqref{eq:G^R}. Hence for all arms $a \neq i_l^t$ we have $l_a(t) < l_{i_l^t}(t)$ and $r_{i_r^t}(t) = r_j(t) = \max_{a \neq i}r_a(t)$. Considering the left gap of arm $j$, since $\{a: r_a(t) < r_j(t)\} \neq \emptyset$,
\begin{equation*}
G_j^l(r_j(t), t) = r_j(t) - \max_{a: r_a(t) < r_j(t)}l_a(t) = r_{i_r^t}(t) - l_{i_l^t}(t) = \gaprucb_i(t),
\end{equation*}
and arm $j$ is sampled if $i$ is sampled. 
Finally suppose the value of $G_i^r(l_{i_l^t}(t), t)$ is obtained by the second branch in \eqref{eq:G^R}, and $j \neq i_r^t$. Then
\begin{align*}
G_i^r(l_{i_l^t}(t), t) &= \max_{a \neq i}r_a(t) - l_{i_l^t}(t) = r_{i_r^t}(t) - l_{i_l^t}(t),\\
G_{j}^r(l_{i_l^t}(t), t) &= \max_{a \neq j} r_a(t) - l_{i_l^t}(t) = \max\{r_i(t), r_{i_r^t}(t)\} - l_{i_l^t}(t) = r_{i_r^t}(s) - l_{i_l^t}(t),
\end{align*}
where the last equality is true because if not, then $\gaprucb_j(t) \geq G_j^r(l_{i_l^t}(t), t) > G_i^r(l_{i_l^t}(t), t) = \gaprucb_i(t)$, which contradicts the condition that $\gaprucb_i(t)$ is the largest. Hence arm $j$ is sampled if $i$ is sampled.

Suppose now that arm $j$ satisfies condition (2). We divide the proof of this part into two cases. 

\textbf{Case I:} Suppose $r_{i_l^t}(t) > r_{i_r^t}(t)$. 

If the arm $i_l^t \neq i$, then we show that $G_i^r(l_{i_l^t}(t), t)$ cannot be the largest gap upper bound. 
Consider the arm $a_\ast \triangleq \arg\max_{a: l_a(t) < l_{i_l}(t)} l_a(t)$, it satisfies $l_i(t) \leq l_{a_\ast}(t) < l_{i_l^t}(t)$. Then $G_i^r(l_{a_\ast}(t), t) = \min_{a: l_a(t) > l_{a_\ast}(t)} r_a(t) - l_{a_\ast}(t)$, where the first branch of \eqref{eq:G^R} is active because of arm $i_l^t$. But
\begin{equation*}
\min_{a: l_a(t) > l_{a_\ast}(t)} r_a(t) = \min\{r_{i_l^t}(t), \min_{a: l_a(t) > l_{i_l^t}(t)} r_a(t)\} = \min\{r_{i_l^t}(t), r_{i_r^t}(t)\} = r_{i_r^t}(t).
\end{equation*}
That would imply 
\begin{equation*}
G_i^r(l_{a_\ast}(t), t) = r_{i_r^t}(t) - l_{a_\ast}(t) > r_{i_r^t}(t) - l_{i_r^t}(t) = G_i^r(l_{i_r^t}(t), t),
\end{equation*}
which contradicts the identification of arm $i_l^t$ as the one giving the value of $\gaprucb_i(t)$. 
The case that remains is if the arm $i = i_l^t$. For this part consider the following two sub-cases: 

\quad \textbf{Sub-case Ia:} The set of arms $\{a: r_a(t) < r_{i_r^t}(t)\} = \emptyset$. 
Since the number of arms $K > 2$, the value $\max_{a \neq i}r_a(t) > r_{i_r^t}(t)$, and hence if $G_i^r(l_{i_l^t}(t), t) = r_{i_r^t}(t) - l_{i_l^t}(t)$, then it must be due to the first branch in \eqref{eq:G^R}. That implies $l_{i_r^t}(t) > l_{i_l^t}(t) = l_i(t)$. Then consider the left gap for arm $j$ that satisfies condition (2). Since the set $\{a: r_a(t) < r_{i_r^t}(t)\} = \emptyset$, we have
\begin{equation*}
G_j^l(r_{i_r^t}(t), t) = r_{i_r^t}(t) - \min_{a \neq j}l_a(t) = r_{i_r^t}(t) - l_{i_l^t}(t) = \gaprucb_i(t),
\end{equation*}
which implies that arm $j$ will be sampled if $\gaprucb_i(t)$ is the largest.

\quad \textbf{Sub-case Ib:} The set of arms $\{a: r_a(t) < r_{i_r^t}(t)\} \neq \emptyset$. 
Consider the arm $a_\ast \triangleq \arg\max_{a: l_a(t) < l_i(t)} l_a(t)$, the domain in the maximization is not empty because of the following. By the case assumption, there is an arm $a$ whose $r_a(t) < r_{i_r^t}(t)$. If the left bound of this arm $l_a(t) > l_i(t)$, then $l_i(t) < l_a(t) < r_a(t) < r_{i_r^t}(t)$, which contradicts the identification of arm $i_r^t$ for $G_i^r(l_i(t), t)$. Hence its left bound must satisfy $l_a(t) < l_i(t)$. Now consider the right gap of arm $a_\ast$ defined above. Since $l_i(t) > l_{a_\ast}(t)$, we have that $G_{a_\ast}^r(l_{a_\ast}(t), t) = \min_{a: l_a(t) > l_{a_\ast}(t)}r_a(t) -  l_{a_\ast}(t)$. But
\begin{equation*}
\min_{a: l_a(t) > l_{a_\ast}(t)}r_a(t) = \min\{r_{i}(t), \min_{a: l_a(t) > l_i(t)}r_a(t)\} 
= \min\{r_i(t), r_{i_r^t}(t)\} = r_{i_r^t}(t),
\end{equation*}
which implies that $G_{a_\ast}^r(l_{a_\ast}(t), t) = r_{i_r^t}(t) - l_{a_\ast}(t) > r_{i_r^t}(t) - l_i(t) = \gaprucb_i(t)$, which is a contradiction.
We are left with the following Case II.

\textbf{Case II:} Suppose $r_{i_l^t}(t) < r_{i_r^t}(t)$.

Let $a_\ast$ be such that $l_{a_\ast}(t) \triangleq \max_{a: r_a(t) < r_{i_r^t}(t)}l_a(t)$. Then $l_{a_\ast}(t) \geq l_{i_l^t}(t)$. If the previous inequality is strict, then we have that
\begin{equation*}
l_{i_l^t}(t) < l_{a_\ast}(t) < r_{a_\ast}(t) < r_{i_r^t}(t),
\end{equation*}
which contradicts the identification of arm $i_r^t$ as the one giving the value of $G_i^r(l_{i_l^t}(t), t)$. Hence we have that
\begin{equation*}
G_j^l(r_{i_r^t}(t), t) = r_{i_r^t}(t) - \max_{a: r_a(t) < r_{i_r^t}(t)}l_a(t)
= r_{i_r^t}(t) - l_{i_l^t}(t) = \gaprucb_i(t),
\end{equation*}
and arm $j$ is sampled if arm $i$ is sampled because of $\gaprucb_i(t)$.
%Consider the arm $a_\ast \triangleq \arg\max_{a: r_a(t) < r_{i_R^t}(t)} r_a(t)$, this arm $a_\ast \neq i$. If its left bound $l_{a_\ast}(t) > l_i(t)$, then we have $l_i(t) < l_{a_\ast}(t) < r_{a_\ast}(t) < r_{i_R^t}(t)$, which contradicts the identification of arm $i_R^t$ for $G_i^R(l_i(t), t)$. Hence we must have $l_{a_\ast}(t) < l_i(t)$, but then
%
%
%Consider the set of arms $\{a: r_a(t) \in (l_i(t), r_{i_R^t}(t))\}$. If this set is empty then let $a_\ast = \arg\max_{r_a(t) < r_{i_R^t}(t)}$ and then $G^R_{a_\ast}$ will be bigger. If the set is not empty pick $a_\ast$ to be the one with the largest $l_a(t)$ among that set, then $G^R_{a_\ast}$ will be bigger.
%
%Suppose now that arm $j$ satisfies condition (2). 
%
%We consider the following two cases.
%
%\textbf{Case I:} $\{a: r_a(t) < r_{i_R^t}(t)\} \neq \emptyset$. Then the value of $G_j^L(r_{i_R^t}(t), t) = r_{i_R^t}(t) - \max_{a: r_a(t) < r_{i_R^t}(t)}l_a(t)$, as given by the first branch of \eqref{eq:G^L}. 
%
%If $r_{i_L^t}(t) < r_{i_R^t}(t)$, let $a_\ast$ be such that $l_{a_\ast}(t) \triangleq \max_{a: r_a(t) < r_{i_R^t}(t)}l_a(t)$. Then $l_{a_\ast}(t) \geq l_{i_L^t}(t)$. If the previous inequality is strict, then we have that
%\begin{equation*}
%l_{i_L^t}(t) < l_{a_\ast}(t) < r_{a_\ast}(t) < r_{i_R^t}(t),
%\end{equation*}
%which contradicts the identification of arm $i_R^t$ as the one giving the value of $G_i^R(l_{i_L^t}(t), t)$. 
%
%On the other hand if $r_{i_L^t}(t) > r_{i_R^t}(t)$, suppose first that arm $i \neq i_L^t$. Then 
\end{proof}

\begin{lemma}\label{lem:all_in_level_sampled}
Suppose arm $i$ is sampled at time $t$ when $\gaprucb_i(t) = r_{i_r^t}(t) - l_{i_l^t}(t)$. If $i_r^t \in \mathcal{A}_i^2(t)$ then all arms in the set $\mathcal{A}_i^1(t) \cup \mathcal{A}_i^2(t)$ are sampled by \texttt{MaxGapUCB}. If $i_r^t \in \mathcal{A}_i^1(t)$ then all arms in the set $\mathcal{A}_i^1(t)$ are sampled by \texttt{MaxGapUCB}.
\todos{break into 2 lemmas? one for level 1 and one for level 2}
\end{lemma}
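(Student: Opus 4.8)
The plan is to reduce the claim to the already-proven \cref{lem:arms_that_can_access_sampled}. Since \maxgapucb samples precisely those arms whose gap upper bound equals $M_1(t) = \max_{j}\gapucb_j(t)$, and arm $i$ is sampled with $\gapucb_i(t) = \gaprucb_i(t) = r_{i_r^t}(t) - l_{i_l^t}(t)$, it is enough to show that every arm $k$ in the target level set attains $\gapucb_k(t) \ge \gaprucb_i(t)$; maximality of $M_1(t)$ then forces equality and hence a sample. \cref{lem:arms_that_can_access_sampled} provides exactly such a certificate: any arm $k$ whose confidence interval strictly contains the anchor $l_{i_l^t}(t)$ (its condition~(1)) or the right endpoint $r_{i_r^t}(t)$ (its condition~(2)) is sampled. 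Recall also that $l_{i_l^t}(t) \in [l_i(t), r_i(t)]$ by \eqref{eq:iL is level0 or level1}.

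First I would dispatch the level-$1$ arms via a dichotomy on the position of $l_k(t)$ relative to the anchor. If $l_k(t) \le l_{i_l^t}(t)$, then since $k \in \mathcal{A}_i^1(t)$ gives $l_{i_l^t}(t) \le r_i(t) \le r_k(t)$, the anchor lies inside $[l_k(t), r_k(t)]$ and condition~(1) applies. If instead $l_k(t) > l_{i_l^t}(t)$, then $k$ belongs to the set $\{j: l_j(t) > l_{i_l^t}(t)\}$ defining the first branch of $G_i^r(l_{i_l^t}(t), t)$ in \eqref{eq:G^R}; because $i_r^t$ minimises $r_j(t)$ over that set, $r_k(t) \ge r_{i_r^t}(t)$, while $l_k(t) \le r_i(t) \le r_{i_r^t}(t)$, so $r_{i_r^t}(t)$ lies inside $k$'s interval and condition~(2) applies. (When that first branch is empty every arm has $l_j(t) \le l_{i_l^t}(t)$, so we are always in the first sub-case.) The level-$2$ arms in the first part of the lemma are even more direct: the definition \eqref{eq:level2} gives $r_i(t) < l_k(t) \le \min_{j: l_j(t) > r_i(t)} r_j(t)$, and since $i_r^t \in \mathcal{A}_i^2(t)$ is the minimiser this equals $r_{i_r^t}(t)$, while $r_k(t) \ge r_{i_r^t}(t)$, again putting $r_{i_r^t}(t)$ in $k$'s interval for condition~(2).

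The remaining work is at the two endpoints $i_l^t, i_r^t$ and at boundary ties, where the strict inequalities of \cref{lem:arms_that_can_access_sampled} just fail. The arm $i_l^t$ is immediate, since it supplies the optimal anchor, so $\gaprucb_{i_l^t}(t) \ge G_{i_l^t}^r(l_{i_l^t}(t), t) = \gaprucb_i(t)$. For $i_r^t$ I would argue directly through its \emph{left} gap: anchoring at $r_{i_r^t}(t)$ in \eqref{eq:G^L}, no arm $j$ can have both $r_j(t) < r_{i_r^t}(t)$ and $l_j(t) > l_{i_l^t}(t)$, since such an arm would replace $i_r^t$ as the first-branch minimiser of $G_i^r(l_{i_l^t}(t), t)$; hence $\max_{j: r_j(t) < r_{i_r^t}(t)} l_j(t) \le l_{i_l^t}(t)$ and $\gaplucb_{i_r^t}(t) \ge r_{i_r^t}(t) - l_{i_l^t}(t) = \gaprucb_i(t)$, so $i_r^t$ is sampled. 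The same left-gap computation absorbs the boundary ties ($l_k(t) = r_{i_r^t}(t)$ or $r_k(t) = r_{i_r^t}(t)$) that the strict conditions exclude.

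The main obstacle I anticipate is not any single inequality but the exhaustiveness of the case analysis: I must verify that the dichotomy covers both branches of $G_i^r$ and both regimes $i_r^t \in \mathcal{A}_i^1(t)$ and $i_r^t \in \mathcal{A}_i^2(t)$, and---most delicately---that every arm is certified despite the strict containment required by \cref{lem:arms_that_can_access_sampled}. Lining up the endpoint and tie arguments with those strict inclusions, so that no arm of $\mathcal{A}_i^1(t) \cup \mathcal{A}_i^2(t)$ slips through, is where the care is needed; monotonicity of the confidence intervals (\cref{lem:nested_CIs}) is used throughout to keep the branch memberships consistent.
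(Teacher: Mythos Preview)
Your proposal is correct and follows essentially the same route as the paper: reduce to \cref{lem:arms_that_can_access_sampled} by a case analysis on where $l_k(t)$ sits relative to the anchor $l_{i_l^t}(t)$ for level-$1$ arms, and use the minimiser property of $i_r^t$ for level-$2$ arms. Your two-way dichotomy on $l_k(t)$ is just a slight repackaging of the paper's three cases (the paper splits on both $l_j$ and $r_j$ and rules out the middle case by contradiction), and your explicit treatment of the endpoint arms $i_l^t,i_r^t$ and tie cases via the left-gap computation is extra care that the paper's proof silently glosses over; the reference to \cref{lem:nested_CIs} is unnecessary here since everything happens at a single time $t$.
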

\begin{proof}
The qualifying condition states that the arm $i_r^t \in \mathcal{A}_i^1(t) \cup \mathcal{A}_i^2(t)$, hence from definitions \eqref{eq:level1}, \eqref{eq:level2} we have that $r_{i_r^t}(t) \geq r_i(t)$. 
By definition \eqref{eq:right_gap_UB} the arm $i_l^t$ is such that $l_{i_l^t}(t) \in [l_i(t), r_i(t)]$. We first argue that all arms in the set $\mathcal{A}_i^1(t)$ are sampled. For arm $j \in \mathcal{A}_i^1(t), r_i(t) \leq r_j(t)$. 
If $l_j(t) \leq l_{i_l^t}(t)$, arm $j$ satisfies condition (1) of Lemma~\ref{lem:arms_that_can_access_sampled} and hence is sampled if $\gaprucb_i(t)$ is the largest. 
%If $l_j(t) \leq l_{i_L^t}(t)$ then $\UCB_j^R(t) \geq G_j^R(l_{i_L^t}(t),t)$ as $l_{i_L^t}(t) \in [l_j(t), r_i(t)] \subset [l_j(t), r_j(t)]$. But $G_j^R(l_{i_L^t}(t), t) = \UCB_i^R(t)$ and since arm $i$ is sampled because of its right gap, $\UCB_i^R(t) \geq \UCB_j^R(t) \implies \UCB_j^R(t) = \UCB_i^R(t)$ and arm $j$ is also sampled by \texttt{MaxGapUCB}. 
If on the other hand $r_j(t) \geq r_{i_r^t}(t)$, then arm $j$ satisfies condition (2) of Lemma~\ref{lem:arms_that_can_access_sampled} and hence it is sampled if $i$ is sampled. 
%$\UCB_j^L(t) \geq G_j^L(r_{i_R^t}(t), t)$ as $j \in \mathcal{A}_i^1(t) \implies r_{i_R^t}(t) \in [r_i(t), r_j(t)] \subset [l_j(t), r_j(t)]$. 
%Hence we have that $\UCB_j^L(t) \geq G_j^L(r_{i_R^t}(t), t) = \UCB_i^R(t)$ but since arm $i$ is sampled because of its right gap, $\UCB_i^R(t) \geq \UCB_j^L(t)$ and arm $j$ is sampled as well.
The remaining case is if $l_{i_l^t}(t) < l_j(t) < r_j(t) < r_{i_r^t}(t)$, but that would contradict the identification of the arm $i_r^t$ for $\gaprucb_i(t)$.
%realizability of $\UCB_i^R(t)$ as in the good event the assumption of this case implies that $p_{i_L^t} < p_j < p_{i_R^t}$. 

Now suppose arm $i_r^t \in \mathcal{A}_i^2(t)$, what is left to prove is that all arms in the set $\mathcal{A}_i^2(t)$ are sampled. Since $i_r^t \in \{a: l_a(t) > r_i(t) > l_{i_l^t}(t)\}$, we have that
\begin{equation*}
G_i^r(l_{i_l^t}(t), t)= \min_{j:l_j(t) > l_{i_l^t}(t)}r_j(t) - l_{i_l^t}(t) = r_{i_r^t}(t) - l_{i_l^t}(t) = \min_{j \in \mathcal{A}_i^2(t)}r_j(t) - l_{i_l^t}(t),
\end{equation*}
where the last equality is true because arm $i_r^t \in \mathcal{A}_i^2(t)$ satisfies $l_{i_r^t}(t) > r_i(t) \geq l_{i_l^t}(t)$. 
From definition \eqref{eq:level2}, any $j \in \mathcal{A}_i^2(t)$ is such that $l_j(t) \leq r_{i_r^t}(t)$ and satisfies condition (2) of Lemma~\ref{lem:arms_that_can_access_sampled}. 
Hence arm $j$ is sampled if $i$ is sampled because of its right gap.
\end{proof}

\section{Details for \cref{sec:lowerbound}: Proof of \cref{lem:lower bound lemma}}
\label{sec:appendix lower bound}
\Minimax*
\begin{proof}  %\vspace{-.1in}
%\begin{figure}
%\end{figure}
The maximum gap in $\mathcal{B}$ is $\Delta_{\max} = \Delta_{3,2} = \nu + \epsilon$. 
Define an alternate bandit model $\mathcal{B}'$ with $4$ normal distributions
$\mathcal{P}'_i = \mathcal{N}(\mu_i', 1)$ where
\begin{equation*}
\mu_i' = \mu_i \quad \forall i \neq 4,\qquad \mu_4' = 2.1\epsilon.
\end{equation*}
\begin{figure}[h]
\centering
\includegraphics[width=.5\linewidth]{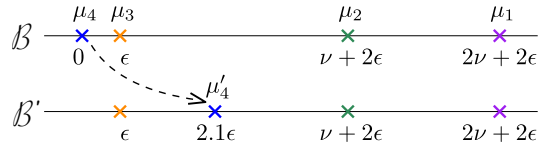} %\vspace{-.25in}
\caption{Changing the original bandit model $\mathcal{B}$ to
  $\mathcal{B}'$. $\mu_4$ is
  shifted to the right by $2.1\epsilon$. As a result, the maximum gap
  in $\mathcal{B}'$ is between green and purple.  }
\label{fig:appendix change_of_measure}
\end{figure}
Note that the ordering of the means in $\mathcal{B}'$ does not follow the
subscript indices, indeed $\mu_3' < \mu_4'$. The two measures 
are illustrated in \cref{fig:appendix change_of_measure}.
The maximum gap in $\mathcal{B}'$ is $\Delta_{\max}' = \Delta_{2, 1}' = \nu$ and $\Delta_{3, 2}'$ is no longer a valid gap between consecutive arms. 
%Consider the sigma-algebra given by the interaction between $\mathcal{B}$ and an
%algorithm that finds the correct maximum gap with confidence $1-\delta$ and
%stops at time $t$. 
Consider algorithm for identifying the maximum gap and let
$\widehat{C}_1$ denote the top-cluster returned by the algorithm when it stops at
time $\tau$. Let $E = \{\widehat{C}_1 = \{1,2\}\}$. Assume that $\mathbb{P}_{\mathcal{B}}(E) \geq
1-\delta$ and $\mathbb{P}_{\mathcal{B}'}(E) \leq \delta$.
Letting $d(\cdot)$ denote the binary relative entropy, Lemma~1 in
\citet{garivier2016optimal} implies that
\begin{align*}
\sum_{a=1}^4 \mathbb{E}_{\mathcal{B}}[T_a(\tau)]\mathsf{KL}(\mathcal{P}_a, \mathcal{P}_a') &\geq d(\mathbb{P}_{\mathcal{B}}(E), \mathbb{P}_{\mathcal{B}'}(E)) \geq d(1-\delta, \delta)\\
\implies \mathbb{E}_{\mathcal{B}}[T_4(\tau)] (\mu_4 - \mu_4')^2 &\geq
\log\frac{1}{2.4\delta} \implies \mathbb{E}_{\mathcal{B}}[T_4(\tau)] \geq \frac{1}{(2.1\epsilon)^2}\log\frac{1}{2.4\delta}.
\end{align*}
Similarly, one can show that $\mathbb{E}_{\mathcal{B}}[T_1(\tau)] \ge
1/\epsilon^2$ by creating an alternative bandit instance $\mathcal{B}''$
identical to $\mathcal{B}$ 
except $\mu_1^{''} = 2\nu + 3.1\epsilon$. 
\end{proof}

\end{document}